%% file: neurips_2026.tex
\documentclass{article}

\PassOptionsToPackage{numbers,square,sort&compress}{natbib}
\usepackage[preprint]{neurips_2026}

\usepackage[utf8]{inputenc} 
\usepackage[T1]{fontenc}    
\usepackage{hyperref}       
\usepackage{url}            
\usepackage{booktabs}       
\usepackage{amsfonts}       
\usepackage{nicefrac}       
\usepackage{microtype}      
\usepackage{xcolor}         

\title{
DiRecT: Safe Diffusion-Based Planning via\\ Receding-Horizon Denoising
} 

\author{%
    Paolo Giaretta \\
    MIT \\
    \texttt{pgiarett@mit.edu} \\
    \And
    Zeyang Li \\
    MIT \\
    \texttt{zeyang@mit.edu} \\
    \And 
    Navid Azizan \\
    MIT \\
    \texttt{azizan@mit.edu} \\
}

\usepackage{amsmath,amssymb,amsfonts,amsthm,bm,mathtools}
\usepackage[ruled,vlined,linesnumbered]{algorithm2e}
\usepackage{algpseudocode}
\usepackage{graphicx}
\usepackage{subcaption}
\usepackage{booktabs}
\usepackage{array}
\usepackage{adjustbox}
\usepackage{multirow}
\usepackage{placeins}
\usepackage{etoc}
\usepackage{wrapfig}
\usepackage{afterpage}
\usepackage{float}

\input{macros}

\begin{document}

\maketitle

\begingroup
\renewcommand{\thefootnote}{}
\footnotetext{Correspondence to Zeyang Li (\texttt{zeyang@mit.edu}).}
\endgroup

\begin{abstract}
    \input{abstract}
\end{abstract}

\input{maintext}


\bibliographystyle{plainnat}
\bibliography{bibliography}


\newpage
\appendix

{\LARGE\bfseries Appendix}
\vspace{1em}

\etocsetlocaltop.toc{part}
\etocsetnexttocdepth{subsection}
\etocsettocstyle{}{}
\localtableofcontents

\vspace{2em}
\hrule
\vspace{2em}

\newpage
\input{appendix}

\etocsetlocaltop.toc{part} 

\FloatBarrier
\newpage

\end{document}

%% file: macros.tex
\newcommand{\R}{\mathbb{R}}
\newcommand{\N}{\mathbb{N}}

\newcommand{\I}{\bm{I}}

\newcommand{\X}{\bm{X}}

\newcommand{\traj}{\bm{\tau}}

\newcommand{\EE}{\mathbb{E}}

\newcommand{\given}{\,|\,}

\newcommand{\Normal}{\mathcal{N}}
\newcommand{\Unif}{\mathcal{U}}

\DeclareMathOperator*{\argmin}{arg\,min}
\DeclareMathOperator*{\argmax}{arg\,max}
\DeclareMathOperator{\Tr}{tr}

\newcommand{\norm}[1]{\left\lVert #1 \right\rVert}

\newcommand{\Loss}{\mathcal{L}}

\newcommand{\eps}{\varepsilon}

\newcommand{\pdata}{p_0}
\newcommand{\pprior}{p_1}

\newtheorem{theorem}{Theorem}
\newtheorem{proposition}{Proposition}
\newtheorem{remark}{Remark}

\newtheoremstyle{plainname}
  {3pt}   
  {3pt}   
  {\normalfont} 
  {}      
  {\bfseries} 
  {.}     
  {.5em}  
  {}      

\theoremstyle{plainname}
\newtheorem{problem}{Problem}
\newtheorem{problem*}{Problem}

\makeatletter
\newcommand{\algsection}[1]{%
  \Statex\hskip\ALG@thistlm \textcolor{gray}{\(\triangleright\) #1}%
}
\makeatother

\algrenewcommand\algorithmiccomment[1]{\hfill\textcolor{gray}{\(\triangleright\) #1}}

\newcolumntype{C}{>{\centering\arraybackslash}c}
\newcolumntype{G}{>{\color{gray}\centering\arraybackslash}c}

%% file: abstract.tex
Diffusion models have emerged as powerful tools for planning and control by learning multimodal distributions over actions and trajectories. Yet reliable inference-time safety enforcement remains a key barrier to their deployment in safety-critical tasks. Existing approaches typically project each denoising iterate onto the feasible set, even though constraints are defined only on the final clean trajectory. Enforcing feasibility on noisy intermediate samples can therefore overconstrain the sampling dynamics, substantially degrading sample quality.
To address this limitation, we introduce DiRecT (\textbf{Di}ffusion-based planning via \textbf{Rec}eding-horizon denoising with \textbf{T}erminal constraints), a training-free algorithm for constrained sampling from diffusion models via stochastic optimal control (SOC). DiRecT enforces constraints only on the final clean sample, avoiding unnecessary restrictions on the intermediate denoising dynamics. Inspired by model predictive control, we derive a principled receding-horizon surrogate for the otherwise intractable constrained SOC formulation, yielding an efficient algorithm that cleanly separates stochastic denoising from constraint satisfaction, progressively steering samples toward feasible final trajectories without distorting the learned diffusion dynamics. Furthermore, DiRecT is highly flexible: it can leverage off-the-shelf or domain-specific optimizers, incorporate priors over environment dynamics, and optimize additional soft rewards. Extensive experiments on safe planning benchmarks demonstrate that DiRecT substantially improves deployment safety and task performance over existing diffusion-based planning baselines. Our code is available at \url{https://github.com/azizanlab/DiRecT}.

%% file: maintext.tex
\section{Introduction}
\label{sec:introduction}
Safe and reliable planning remains a central challenge, requiring algorithms that can satisfy constraints while adapting to diverse environments, objectives, and constraint types. Classical planning methods~\citep{hart1968formal,kavraki1996probabilistic,lavalle1998rapidly,karaman2011samplingbasedalgorithmsoptimalmotion,ratliff2009chomp,schulman2014motion} have achieved strong results in many structured settings, but they face important limitations. Search-based methods can become computationally prohibitive in large-scale multi-agent problems~\citep{yu2015intractabilityoptimalmultirobotpath}, while optimization-based methods are often local and sensitive to initialization~\citep{ratliff2009chomp,schulman2014motion}. These limitations have driven growing interest in \emph{data-driven} generative planners, which learn reusable priors from offline data and can model diverse, high-dimensional behaviors.
Building on their success in image generation~\citep{dhariwal2021diffusionmodelsbeatgans}, diffusion models~\citep{sohldickstein2015deepunsupervisedlearningusing, ho2020denoisingdiffusionprobabilisticmodels, song2021scorebasedgenerativemodelingstochastic} have emerged as a powerful framework for planning and control, capable of capturing complex, multimodal state-action distributions~\citep{janner2022planningdiffusionflexiblebehavior} while supporting flexible inference-time guidance~\citep{ajay2023conditionalgenerativemodelingneed}.

Despite these advances, deploying diffusion-based planners in real-world safety-critical tasks remains challenging. For instance, a generated trajectory must avoid collisions with obstacles, as even a single violation can lead to catastrophic failure.
Since diffusion models rely on high-dimensional stochastic denoising dynamics, typically parameterized by large neural networks, they may generate unsafe plans even when trained on feasible trajectories. This challenge is further amplified when trajectories must satisfy novel constraints that are not captured by the dataset.
These limitations have motivated growing interest in \emph{test-time} mechanisms for constraining diffusion planners. Inspired by guidance techniques in image generation~\citep{dhariwal2021diffusionmodelsbeatgans, ho2021classifierfree, chung2023diffusionposterior}, early approaches steer samples during denoising by encoding constraint satisfaction as a \emph{soft} guidance signal~\citep{mizuta2024cobl}. However, soft guidance can only encourage feasibility and does not guarantee hard constraint satisfaction. More recent work has therefore focused on enforcing hard constraints directly on sampled trajectories, with the goal of providing stronger guarantees for safety-critical planning.
Particularly, most hard-constrained diffusion samplers enforce feasibility through projections or constraint-driven updates along the denoising trajectory~\citep{christopher2024constrained,luan2026projected,liang2025simultaneous,zhang2025constraineddiffuserssafeplanning,xiao2025safediffuser}. This creates a mismatch: planning constraints are defined on the final clean trajectory, while intermediate denoising iterates are noisy and need not themselves be feasible. Imposing constraints throughout sampling can therefore overconstrain the learned reverse dynamics, significantly degrading sample quality. Moreover, projection-based formulations do not naturally provide a unified mechanism for handling hard constraints together with additional soft rewards or costs.

To address these limitations, we introduce DiRecT (\textbf{Di}ffusion-based planning via \textbf{Rec}eding-horizon denoising with \textbf{T}erminal constraints), a \emph{training-free} algorithm for constrained sampling in diffusion-based planning. We formulate inference-time constraint enforcement through the lens of stochastic optimal control (SOC)~\citep{kappen2005pathintegrals}, where the learned reverse diffusion process serves as the nominal stochastic dynamics and control inputs steer the sampler toward feasibility of the final clean trajectory. Crucially, constraints are imposed only on this terminal clean sample, avoiding unnecessary restrictions on noisy intermediate denoising iterates.

Solving the resulting terminally constrained SOC problem is computationally intractable. We therefore exploit the structure of diffusion models, together with ideas from model predictive control (MPC)~\citep{rawlings2017modelpredictive}, to derive a principled and scalable receding-horizon surrogate. At each denoising step, DiRecT predicts the final clean trajectory implied by the current noisy iterate, solves a constrained optimization problem over this prediction, and converts the resulting refinement into a controlled update of the current noisy iterate.

The contributions of this paper are:
\begin{itemize}
    \item We identify a key limitation of prior constrained diffusion samplers: enforcing feasibility on noisy intermediate denoising iterates can overconstrain the sampling dynamics and degrade sample quality. In contrast, we formulate training-free constrained diffusion planning as a terminally constrained SOC problem that enforces feasibility only on the final clean trajectory while staying close to the learned diffusion dynamics.

    \item We present DiRecT, a training-free constrained diffusion sampler that reduces the intractable constrained SOC problem to a sequence of tractable clean-trajectory optimization subproblems. By refining predicted clean trajectories and translating these refinements into controlled updates of the noisy iterates, DiRecT steers samples toward constraint satisfaction without distorting the learned denoising process.

    \item We show that DiRecT is highly flexible, supporting off-the-shelf and domain-specific optimizers, equality and inequality constraints, environment-specific dynamics priors, and additional soft rewards or costs at inference time.

    \item We evaluate DiRecT across diverse robotic planning applications, including maze navigation in Maze2D, robotic manipulation in D3IL, multi-robot motion planning (MRMP), and diverse contact-rich manipulation in PushT. Across these tasks, DiRecT consistently improves constraint satisfaction and task success over existing diffusion-based planning baselines. 
\end{itemize}

\section{Related Work}
\label{sec:related works}
This section situates our work within the broader literature. First, we review constrained sampling techniques for diffusion-based planners, emphasizing the key limitation that motivates our method. Second, we discuss prior work that also explores connections between diffusion models and stochastic optimal control, clarifying how our constrained sampling perspective differs from these approaches. Finally, we mention a parallel but complementary line of work on training-based methods for planning with diffusion models. Due to space constraints, the details are deferred to Appendix~\ref{sec:Related works - appendix}.

\section{Background}
\label{sec:background}
\paragraph{Diffusion models.}
We employ the continuous-time formulation and define diffusion models as a pair of Itô processes: a \emph{forward} process that gradually corrupts samples drawn from the data distribution $\pdata$ through the stochastic differential equation (SDE)
$d X_t = f_t(X_t) d t + g_t(X_t) d W_t, \; X_0 \sim \pdata, \; t \in \left[0, 1\right]$,
and a \emph{backward} process that generates samples starting from the simple prior distribution $\pprior \approx \Normal\left(0, I_d \right)$ by time reversal of the forward SDE~\citep{anderson1982timereversal}:
\begin{equation}
\label{eq:backward-sde}
    d X_t = \left[f_t(X_t) - g_t^2(X_t) \nabla_{X_t} \log p_t(X_t) \right] dt + g_t(X_t) d \bar{W}_t, \quad X_1 \sim \pprior,
\end{equation}
where \eqref{eq:backward-sde} is integrated backward in time, $W_t$ and $\bar{W}_t$ denote the forward and backward Wiener processes, and $s_t(x_t) = \nabla_{x_t} \log p_t(x_t)$ is the score function of the marginal distribution $p_t$. Moreover, we restrict our focus to the common case of Gaussian-affine schedules, for which the conditional distribution of the forward noising process has the closed-form solution
$q_t(x_t | x_0) = \Normal\left(\alpha_t x_0, \sigma_t^2 I_d \right)$,
where the noise-schedule coefficients $\{\alpha_t, \sigma_t\}_{t \in \left[0, 1\right]}$ are related to the SDE drift and diffusion functions by
$f(x,t)=\frac{\dot \alpha_t}{\alpha_t}x, \quad g^2(t)=\frac{d}{dt}\sigma_t^2 - 2\frac{\dot \alpha_t}{\alpha_t}\sigma_t^2$. Following denoising score matching~\citep{vincent2011connection,song2021scorebasedgenerativemodelingstochastic}, the score function $s_t(x_t)$ is approximated by a neural network $s^\theta_t(x_t)$ trained to minimize the conditional score-matching (CSM) loss:
\begin{equation}
\label{eq:conditional score amtching loss}
    \Loss_{\mathrm{CSM}}(\theta) = \EE_{t,x_0,\eps} \left[ \lambda(t) \norm{s^\theta_t(\alpha_t x_0+\sigma_t\eps) + \frac{\eps}{\sigma_t}}^2 \right],
\end{equation}
where $t\sim\Unif[0,1]$, $x_0\sim\pdata$, $\eps\sim\Normal(0,I_d)$, and $\lambda(t)$ is a time-dependent weighting.
\paragraph{Sampling and Tweedie's formula.} 
Sampling from a diffusion model is achieved by numerically simulating the reverse dynamics~\eqref{eq:backward-sde} using the learned score function. Common stochastic samplers include DDPM~\citep{ho2020denoisingdiffusionprobabilisticmodels}, Euler--Maruyama~\citep{kloeden1992numerical}, and higher-order solvers~\citep{lu2025dpmsolverplusplus}. Despite our focus on \emph{stochastic} dynamics, few-step \emph{deterministic} samplers are commonly obtained by numerical integration of the probability-flow ODE~\citep{song2021scorebasedgenerativemodelingstochastic}, which shares the same marginal distributions as \eqref{eq:backward-sde}. 
Examples include DDIM~\citep{song2021denoisingdiffusionimplicitmodels} and ODE-based solvers~\citep{lu2022dpmsolver}. A one-step instantiation of these samplers yields an approximation of the posterior conditional mean of the forward process, which is related to the score function through Tweedie's formula~\citep{robbins1992empirical}:
\begin{equation}
\label{eq:tweedie formula}
\EE[X_0 \given X_t] \approx \hat{x}^\theta_0(x_t, t) = \frac{x_t + \sigma_t^2 s^\theta_t(x_t)}{\alpha_t}.
\end{equation}
Therefore, we use $s^\theta_t$ and $\hat{x}^\theta_{0}$ interchangeably when convenient, assuming they are related by \eqref{eq:tweedie formula}.
\paragraph{Diffusion planners.}
Following diffusion-based planning formulations~\citep{janner2022planningdiffusionflexiblebehavior,chi2025diffusion}, we treat the clean sample as a finite-horizon plan rather than a single configuration. Let $H\in\N$ be the prediction horizon and let $\mathcal H=\{0,\ldots,H\}$. We denote plans as $\traj=(\traj_0,\ldots,\traj_{H})\in\mathcal D^{H+1}$, where each element $\traj_k\in\mathcal D \subseteq \R^D$ may encode a state, an action, or a state-action pair depending on the planner parameterization. A diffusion planner learns a distribution over such plans and generates $\traj$ by denoising a Gaussian prior sample. In closed-loop deployment, the planner executes only the first $M\leq H$ elements before re-planning, and the concatenation of executed prefixes forms the rollout. 

\section{Method}
\label{sec:method}
We now formalize hard-constrained sampling for diffusion models and derive DiRecT. The derivation is similar in spirit to the test-time guidance framework for flow-matching models developed in~\citep{li2025hardflow}, while differing in the diffusion-model setting and the resulting algorithmic form. Due to space limitations, we defer the full derivation of the algorithm to Appendix~\ref{sec:algorithm derivation - Appendix}. Given a pretrained score model, a constraint set $\mathcal{S} \subseteq \R^d$, and a cost function $C: \R^d \rightarrow \R$, we seek to sample denoised plans $\traj$ from the model that are: (i) \emph{safe} ($\traj \in \mathcal{S}$), (ii) \emph{minimize} the cost $C$, and (iii) remain \emph{proximal} to the learned data distribution. We now expand on the problem setting and show how safe planning can naturally be framed as solving a stochastic optimal control problem with terminal constraints. We then derive a receding-horizon surrogate formulation that leads to a tractable algorithm. In this section, we use a general diffusion-centric notation and denote denoised plans as $\X_0$.  
\paragraph{Safe planning with diffusion models and stochastic optimal control.} Denote by $\tilde{f}_t^\theta (X_t)$ the reverse drift $f_t (X_t) - g(t)^2 s_t^\theta(X_t)$, so that the sampling dynamics of the trained model are $d X_t = \tilde{f}_t^\theta (X_t) dt + g(t) d \bar{W}_t, \;X_1 \sim p_1$. Let $u: \R^d \times [0, 1] \rightarrow \R^d$ be a control drift that steers the generative process toward low-cost feasible samples through the controlled SDE:
\begin{equation}
\label{eq:controlled sde}
    d X^u_t = \left[ \tilde{f}_t^\theta (X^u_t) + g(t) u_t(X^u_t) \right] dt + g(t) d \bar{W}_t, \;X_1 \sim p_1
\end{equation}
Our three requirements for a safe, performant, and proximal planner naturally fit within a terminally constrained stochastic optimal control perspective~\citep{bouchard2010optimal}, which aims to find the optimal control that solves Problem~\ref{prob:soc problem}: 
\begin{problem}[Stochastic optimal control problem with terminal constraints]
\label{prob:soc problem}
\textit{Given reference dynamics, constraint set $\mathcal{S}$, terminal cost $C$, and cost weight $\lambda$, solve for the optimal control drift $u^\star$:}
\par\noindent
\begin{equation}
\label{eq:soc problem}
\begin{aligned}
\min_{u}
\quad
& \EE \left[\lambda \, C(X^u_0) + \frac{1}{2} \int_0^1 \norm{u_t(X^u_t)}_2^2 \,dt \right] \\
\text{s.t.}\quad
& d X^u_t = \left[ \tilde{f}_t^\theta(X^u_t) + g(t) u_t(X^u_t) \right]dt + g(t)d\bar{W}_t, \quad X_1 \sim \pprior \ \\
& X^u_0 \in \mathcal S
\qquad
\mathbb{P}^u\text{-a.s.}
\end{aligned}
\end{equation}
\end{problem}
where $\mathbb{P}^u$ denotes the path measure induced by~\eqref{eq:controlled sde}. Notably, Problem~\ref{prob:soc problem} enforces constraint satisfaction only at the terminal state. Intermediate latents may therefore deviate significantly from the feasible set if doing so reduces control energy and keeps the process closer to the pretrained dynamics. In principle, the optimal controller can be obtained by solving the Hamilton--Jacobi--Bellman (HJB)~\citep{bellman1954theory} equation. However, exact solutions are rarely possible for the rich, high-dimensional dynamics induced by the backward SDE. See Appendix~\ref{sec:stochastic optimal control - Appendix} for an overview of SOC, the HJB equation, and their relation to KL control.
\paragraph{Receding-horizon surrogate formulation.}
We briefly summarize the main steps used to transform the continuous-time SOC problem into a tractable receding-horizon surrogate. We first discretize the reverse dynamics in \eqref{eq:backward-sde}. Let \(N\in\mathbb{N}\) denote the number of discretization steps, and let \(0=t_0<t_1<\cdots<t_N=1\) be the sampling grid. We write the uncontrolled reverse update from \(t_i\) to \(t_{i-1}\) as
\begin{equation}
\label{eq:sampling-step}
    X_{i-1}=\Phi_i^\theta(X_i,\varepsilon_i),
    \qquad
    \varepsilon_i\sim\mathcal{N}(0,I_d),
    \qquad i=1,\ldots,N,
\end{equation}
where \(\Phi_i^\theta:\mathbb{R}^d\times\mathbb{R}^d\to\mathbb{R}^d\) denotes the \(i\)-th denoising transition of the chosen sampler. This covers standard sampling schemes such as DDPM~\citep{ho2020denoisingdiffusionprobabilisticmodels} as well as deterministic or higher-order solvers.

To control generation, we perturb each uncontrolled transition by an Euler discretization of the control input over \([t_{i-1},t_i]\), yielding the \emph{discrete-time} version of the stochastic optimal control problem 
\begin{equation}
\label{eq:discrete-time-soc}
\begin{aligned}
\min_{\{X_i\}_{i=0}^N, \{u_i\}^N_{i=1}} \quad
& \mathbb{E}\left[\lambda C(X_0)
+ \frac{1}{2}\sum_{i=1}^N \|u_i(X_i)\|_2^2\Delta t_i\right] \\
\text{s.t.} \quad X_0 \in \mathcal{S}, \quad
& X_{i-1}=\Phi_i^\theta(X_i,\varepsilon_i)+g_i u_i(X_i)\Delta t_i,\quad
\varepsilon_i\sim\mathcal{N}(0,I_d),\quad i=1,\ldots,N,\quad
\end{aligned}
\end{equation}
Although discretization yields a finite-horizon control problem, the discrete SOC optimization remains computationally prohibitive for most diffusion-based applications. The decision variables are feedback controls over \(\R^d\), while the objective requires an expectation over all controlled stochastic trajectories. In addition, terminal constraints and costs couple all optimization variables with multi-step nonconvex neural dynamics, distorting the optimization landscape to the point that, even for a single noise realization, off-the-shelf solvers may struggle to find feasible solutions.

To tackle these limitations, we simplify the SOC formulation by introducing two additional approximations. First, we adopt a model predictive control (MPC) perspective, replacing the full-horizon problem with a sequence of one-step subproblems. To do so, we exploit the structure of diffusion models and use Tweedie's formula to construct proxy estimates of the terminal cost-to-go and terminal feasibility condition:
\begin{equation}
\label{eq:final-cost-estimation}
    \mathbb{E}\!\left[C(X_0)\mid X_{i}\right]
    \approx
    C\!\left(\hat{x}_0^\theta(X_{i},t_{i})\right),
    \qquad
    \hat{x}_0^\theta(X_{i},t_{i})\in\mathcal{S},
    \qquad i=0,\dots,N-1 .
\end{equation}
Second, to avoid repeatedly solving constrained optimization problems through the neural denoising map, we approximately invert Tweedie's formula by a fixed-point argument. This allows us to replace the latent displacement \(X_{i}-\bar{X}_{i}\) with a scaled difference between predicted clean samples:
\begin{equation}
\label{eq:state-differences-main}
    X_{i}-\bar{X}_{i}
    \approx
    \alpha_{t_{i}}
    \left(
    \hat{x}_0^\theta(X_{i},t_{i})
    -
    \hat{x}_0^\theta(\bar{X}_{i},t_{i})
    \right).
\end{equation}
Together, these approximations move the one-step constrained optimization from the latent space to the data domain, where the cost and constraint are evaluated directly on the predicted clean sample. We provide the full derivation from Problem~\ref{prob:soc problem} to the final algorithm in Appendix~\ref{sec:algorithm derivation - Appendix}, including all assumptions and approximations. We summarize the complete sampling procedure in Algorithm~\ref{alg:controlled-diffuser}.

Although several approximations are introduced to reduce the general stochastic optimal control problem to a tractable form, terminal constraint satisfaction is not relaxed, as formalized in Proposition~\ref{prop:terminal-feasibility}.
\begin{proposition}[Sample feasibility]
\label{prop:terminal-feasibility}
Suppose that the final subproblem in Algorithm~\ref{alg:controlled-diffuser} admits a feasible solution \(\hat{X}^\star_{0|0}\in\mathcal{S}\). Then the final sample returned by the recursion satisfies the terminal constraint.
\end{proposition}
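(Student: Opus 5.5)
The argument is essentially a bookkeeping one: we trace what the last iteration of the recursion in Algorithm~\ref{alg:controlled-diffuser} outputs. The plan has three steps.

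\textbf{Step 1: locate the final subproblem.} At iteration \(i=1\), the algorithm forms the predicted clean sample from the current latent and solves the constrained one-step problem over clean trajectories. This is the subproblem whose optimizer the statement denotes \(\hat{X}^\star_{0|0}\), and its feasible set is \(\mathcal{S}\). By hypothesis it admits a solution with \(\hat{X}^\star_{0|0}\in\mathcal{S}\).

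\textbf{Step 2: identify the returned sample with the optimizer.} This is the crux. The controlled update converts the clean-space refinement into a latent update using the scaled difference in \eqref{eq:state-differences-main}, with scaling \(\alpha_{t_i}\). At the terminal time \(t_0=0\) we have \(\alpha_{0}=1\) and \(\sigma_0=0\). Consequently:
\begin{itemize}
\item Tweedie's map \eqref{eq:tweedie formula} reduces to the identity, \(\hat{x}_0^\theta(x,0)=x\).
\item The injected noise vanishes, because \(g\) and \(\sigma\) are zero at \(t=0\), or because the sampler's last step is deterministic by convention.
\end{itemize}
In the final step, the correction \(\alpha_{t_0}(\hat{X}^\star_{0|0}-\hat{x}_0^\theta(\bar X_0,t_0))\) is added to the uncontrolled output \(\bar X_0\). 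This gives
\[
X_0=\bar X_0+\hat{X}^\star_{0|0}-\bar X_0=\hat{X}^\star_{0|0}.
\]
So the fixed-point approximation used in the derivation is exact at \(t_0\).

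If instead the algorithm is written so that it simply outputs the final clean optimizer \(\hat{X}^\star_{0|0}\) (the cleaner reading), then Step 2 is immediate by construction.

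\textbf{Step 3: conclude.} Combining Steps 1 and 2 gives \(X_0=\hat{X}^\star_{0|0}\in\mathcal{S}\) deterministically. The conclusion holds for every noise realization, hence \(\mathbb{P}\)-a.s., which matches the terminal constraint of Problem~\ref{prob:soc problem}. The argument needs no assumptions on the accuracy of the score network or on the earlier approximations, because feasibility is enforced exactly at the last step.

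\textbf{Main obstacle.} The hard part is Step 2: checking that no residual noise or approximation error enters after the final projection. This requires confirming the boundary conventions \(\alpha_0=1\) and \(\sigma_0=0\), and checking how the algorithm treats the last denoising step. If the schedule has \(\alpha_0\neq1\) or \(\sigma_0>0\) (as with truncated schedules), I would instead argue directly from the algorithm's return statement, which outputs the clean-space solution of the final subproblem.
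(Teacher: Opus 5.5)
Your proposal is correct, but only through the fallback you mention at the end of Step~2 and in your ``main obstacle'' paragraph, and that fallback is the paper's entire proof. Algorithm~\ref{alg:controlled-diffuser} does not apply the latent correction at $i=1$. Its terminal branch sets $X^\star_0\gets\hat{X}^\star_{0|0}$ by construction, so $X^\star_0\in\mathcal{S}$ follows immediately from feasibility of the final subproblem. This holds for every noise realization, with no assumption on the score network or the schedule.

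What you call the crux is therefore a non-issue. The route you present as primary should be dropped, because it analyzes a different update rule and needs hypotheses the statement does not grant:
\begin{itemize}
\item It needs $\alpha_{t_0}=1$. The paper's terminal-convention remark explicitly declines to assume this, noting that cosine schedules need not satisfy it exactly; that is precisely why the algorithm returns $\hat{X}^\star_{0|0}$ directly instead of applying the correction.
\item It needs the \emph{learned} denoiser to satisfy $\hat{x}^\theta_0(x,t_0)=x$. That is a property of the exact conditional mean, not of the network. Under the data-prediction parametrization used for Maze2D and D3IL, $\hat{x}^\theta_0(\cdot,t_0)$ is just the network output. In the score form, the identity requires $\sigma_{t_0}^2 s^\theta_{t_0}(x)\to 0$, which is not automatic because the score generally blows up as $\sigma_t\to 0$.
\item Your claim that $g$ vanishes at $t=0$ is false for standard schedules; for example, $g^2(0)=\beta_{\min}>0$ for the VP-SDE. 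Only the deterministic-last-step convention disposes of the injected noise.
\end{itemize}
Make the ``by construction'' reading of the return statement your argument, and the proof is complete in one line.
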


\begin{proof}
By construction, the final step of Algorithm~\ref{alg:controlled-diffuser} returns \(X^\star_0=\hat{X}^\star_{0|0}\). Since the final subproblem imposes \(\hat{X}^\star_{0|0}\in\mathcal{S}\), it follows immediately that the final sample satisfies the constraint \(X^\star_0\in\mathcal{S}\).
\end{proof}

\begin{algorithm}[t]
\caption{DiRecT: Safe diffusion-based planning}
\label{alg:controlled-diffuser}
\small
\KwIn{Learned score matching model \(s_t^\theta(\cdot)\), cost \(C(\cdot)\), cost weight \(\lambda > 0\), feasible set \(\mathcal{S}\), discretization steps \(N\), time grid \(0=t_0 < t_1 < \ldots < t_N=1\), sampling scheme \(\{ \Phi^\theta_i \}_{i=1}^N\), differentiable affine scheduler \((\alpha_t, \sigma_t)\).}
\KwOut{Safe plan \(X_0^\star\).}

\(X_N^\star \sim \mathcal{N}(0,I_d)\)\tcp*{Sample prior}

\For{\(i=N,\ldots,1\)}{
    \(\Delta t_i \gets t_i-t_{i-1}\)\tcp*{Time step}

    \(\varepsilon_i\sim\mathcal{N}(0,I_d)\)\tcp*{Reverse noise}

    \(g_i \gets \left(\frac{d}{dt}\sigma_t^2 - 2\frac{\dot{\alpha}_t}{\alpha_t}\sigma_t^2\right)^{1/2}\bigg|_{t=t_i}\)\tcp*{Diffusion coefficient}

    \(\bar{X}^{\varepsilon_i}_{i-1}\gets \Phi_i^\theta(X_i^\star,\varepsilon_i)\)\tcp*{Uncontrolled denoising proposal}

    \(\tilde{X}_{0|i-1}\gets \hat{x}_0^\theta(\bar{X}^{\varepsilon_i}_{i-1},t_{i-1})\)\tcp*{Tweedie clean-sample estimate}

    Solve the constrained subproblem\;
    \refstepcounter{equation}\label{eq:controlled-diffuser-subproblem}
    \noindent\makebox[\dimexpr\linewidth-2em\relax][l]{%
    \(\displaystyle
    \hat{X}^\star_{0|i-1}
    \in
    \operatorname*{arg\,min}_{\hat{X}_{0|i-1}\in\mathcal{S}}
    \lambda C(\hat{X}_{0|i-1})
    +
    \frac{\alpha_{t_{i-1}}^2}{2g_i^2\Delta t_i}
    \|\hat{X}_{0|i-1}-\tilde{X}_{0|i-1}\|_2^2
    \)\;
    \hfill\textup{(\theequation)}%
    }
    \eIf{\(i>1\)}{
        \(X^\star_{i-1}\gets
        \bar{X}^{\varepsilon_i}_{i-1}
        +\alpha_{t_{i-1}}(\hat{X}^\star_{0|i-1}-\tilde{X}_{0|i-1})\)\tcp*{Latent correction}
    }{
        \(X^\star_0\gets \hat{X}^\star_{0|0}\)\tcp*{Return feasible terminal sample}
    }
}
\Return{\(X_0^\star\)}
\end{algorithm}

\section{Experiments}
\label{sec:experimental analysis}
We seek to evaluate DiRecT to answer the following questions: (i) \emph{Can our method enforce test-time constraints, while retaining task performance?} (ii) \emph{How does DiRecT compare with existing methods in terms of safety, task success, and computational overhead?} (iii) \emph{How flexible is our method in handling equality and inequality constraints, high-dimensional nonconvex constraint sets, and soft rewards or costs?} To answer these questions, we test our method on maze navigation in Maze2D~\citep{fu2020d4rl}, robotic manipulation in D3IL~\citep{jia2024towards}, multi-robot motion planning~\citep{shaoul2025multirobot}, and diverse contact-rich manipulation in PushT~\citep{chi2025diffusion, luan2026projected}. Detailed settings on the experimental setup are provided in Appendix~\ref{sec:experimental details - Appendix} and additional results and ablations in Appendix~\ref{sec:additional results - Appendix}.
\subsection{Safe maze navigation}
\label{sec:maze2d experiments}
\newlength{\panelheight}
\setlength{\panelheight}{3.5cm}
\begin{figure}[t]
    \centering
    \begin{subfigure}[t]{0.24\linewidth}
        \centering
        \includegraphics[height=\panelheight,keepaspectratio]{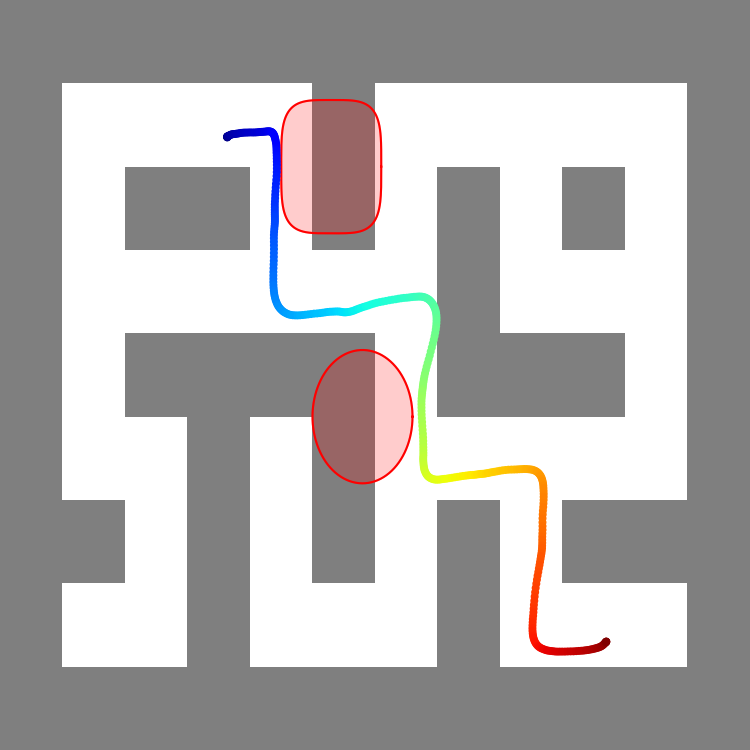}
        \caption{}
        \label{fig:maze2d broad}
    \end{subfigure}
    \hfill
    \begin{subfigure}[t]{0.24\linewidth}
        \centering
        \includegraphics[height=\panelheight,keepaspectratio]{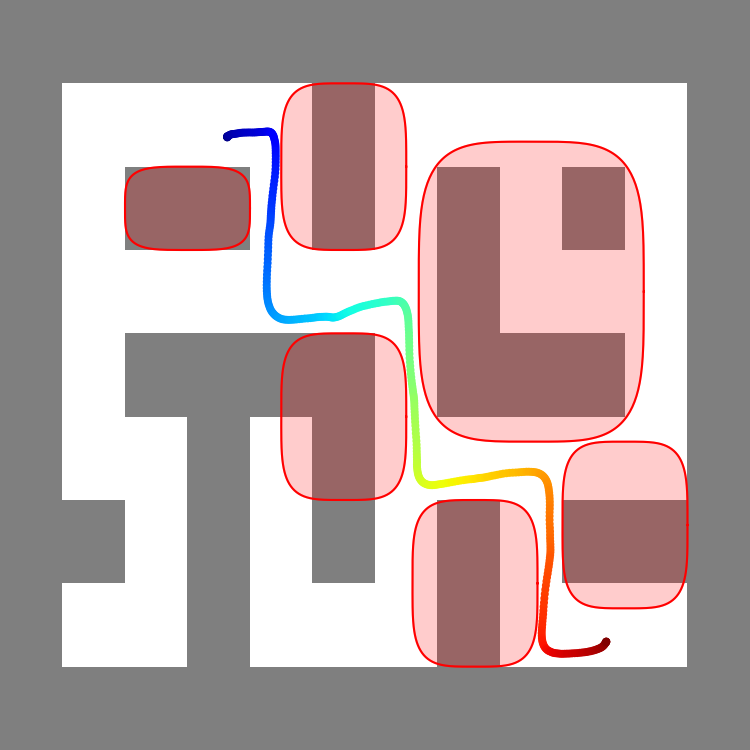}
        \caption{}
        \label{fig:maze2d narrow}
    \end{subfigure}
    \hfill
    \begin{subfigure}[t]{0.24\linewidth}
        \centering
    \includegraphics[
        height=\panelheight,
        keepaspectratio,
        trim={180pt 00pt 20pt 00pt},
        clip
    ]{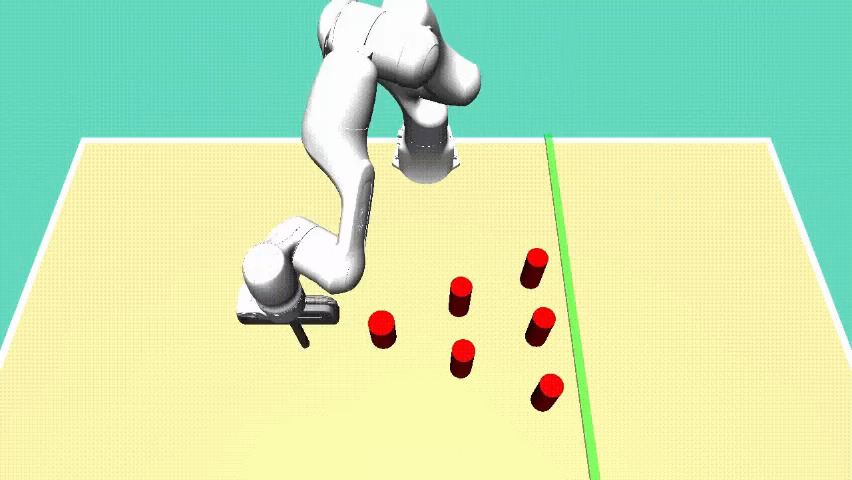}        
    \caption{}
        \label{fig:d3il 3d}
    \end{subfigure}
    \hfill
    \begin{subfigure}[t]{0.24\linewidth}
        \centering
        \includegraphics[height=\panelheight,keepaspectratio]{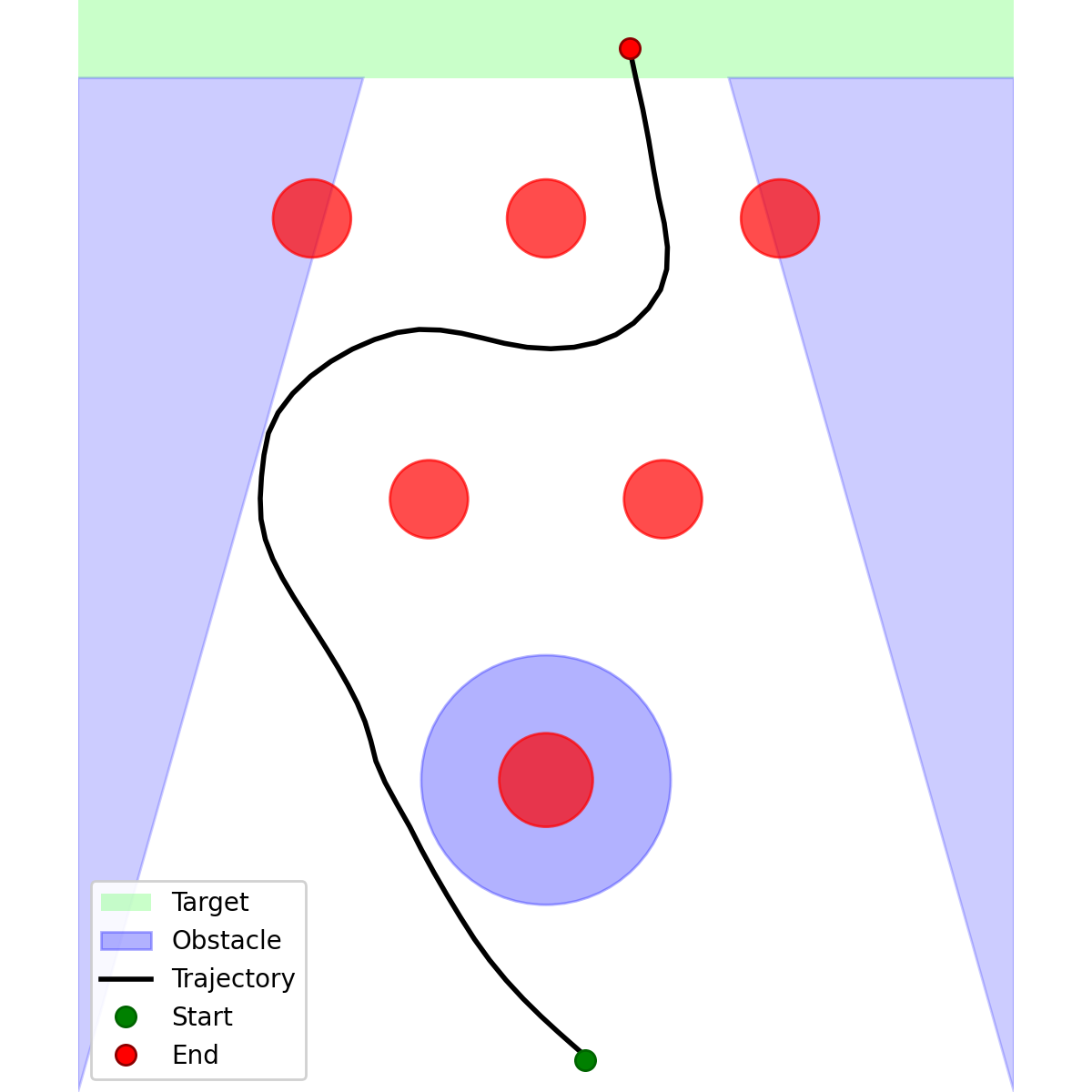}
        \caption{}
        \label{fig:d3il 2d}
    \end{subfigure}
    \caption{Visualization of DiRecT for maze navigation and robotic manipulation with test-time hard constraints. In \texttt{Maze2D-broad}~(\ref{fig:maze2d broad}) and \texttt{Maze2D-narrow}~(\ref{fig:maze2d narrow}), the robot has to navigate inside a maze toward the bottom-right corner while avoiding the obstacles represented in red. In \texttt{D3IL-avoiding}, a robotic arm moves toward the green target line, while avoiding six pillars~(\ref{fig:d3il 3d}). We constrain the problem during inference with additional obstacles, represented in blue (\ref{fig:d3il 2d}).}
    \label{fig:maze2d and d3il}
\end{figure}
First, we test our method on safe maze navigation for the D4RL \texttt{Maze2D-large} environment~\citep{fu2020d4rl}. In this task, a small ball traverses a planar maze toward a target position, while avoiding additional \emph{test-time} obstacles as depicted in Figure~\ref{fig:maze2d and d3il}. We train a continuous-time diffusion model to sample state-action pairs over the prediction horizon $H$, where each state $s \in \R^4$ includes position and velocity of the ball and actions $a \in \R^2$ represent force inputs. During inference, we sample \emph{one} trajectory from the model conditioned on endpoint states and execute actions by tracking with a proportional-derivative (PD) controller. Although the sampled \emph{planned} trajectory may satisfy test-time constraints, the \emph{rollout} trajectory can deviate from the plan and become infeasible due to mismatch between learned and environment dynamics. To improve rollout fidelity~\citep{romer2025diffusion, bouvier2025ddat}, we impose additional \emph{equality} constraints in the form of linearized dynamics fitted from the training data (see Appendix~\ref{sec:constrained navigation in Maze2d - Appendix details}). 
\paragraph{Baselines.} We compare against five baselines to represent different classes of constrained planners: \textbf{Diffuser}~\citep{janner2022planningdiffusionflexiblebehavior} (unguided reference), \textbf{Gradient Guidance}~\citep{chung2023diffusionposterior} (soft constraining with posterior sampling), \textbf{Projected Diffusion}~\citep{christopher2024constrained} (per-step projection), \textbf{Augmented Lagrangian}~\citep{zhang2025constraineddiffuserssafeplanning} (early-stage Lagrangian constraint relaxation), and \textbf{SafeDiffuser-RoS}~\citep{xiao2025safediffuser} (CBF-based guidance). We adapt existing methods to include dynamic constraints and employ IPOPT~\citep{wachter2006implementation} as an \emph{off-the-shelf} solver. Additional results are provided in Appendix~\ref{sec:constrained navigation in Maze2d - Appendix}.
\paragraph{Evaluation.} We test all methods over two progressively challenging variants~\citep{chi2025diffusion,zhang2025constraineddiffuserssafeplanning,janner2022planningdiffusionflexiblebehavior}, which we denote as \texttt{broad} (Fig.~\ref{fig:maze2d broad}) and \texttt{narrow} (Fig.~\ref{fig:maze2d narrow}). For each variant, we evaluate $100$ independent trials to measure how well each method adapts to additional prior environment knowledge in the form of dynamic constraints for safe navigation. We report in Table~\ref{tab:maze2d results main} four evaluation metrics: \emph{Safety Rate} (SR), the fraction of collision-free rollouts; \emph{Violations}, the mean number of steps the ball spends inside obstacles; D4RL normalized \emph{Score}, which reflects task completion performance; and sampling \emph{Time}. DiRecT is the only method that achieves high safety and near-zero violations while retaining strong performance.
\input{tables/maze2d_results_main}
\subsection{Safe robotic manipulation}
\label{sec:d3il experiments}
We next evaluate the reliability and performance of our method for closed-loop robotic manipulation on the D3IL \texttt{avoiding} environment~\citep{jia2024towards}. In this task, a robotic manipulator moves toward a target region while its end-effector avoids six pillars (Fig.~\ref{fig:d3il 3d}). We train a continuous-time diffusion model on the D3IL demonstration dataset, consisting of expert state-action trajectories, where the robot state $s \in \R^4$ is composed of the current and desired positions, and the action $a \in \R^2$ represents the end-effector velocity. At test time, the end-effector must reach the target region while avoiding the environment pillars and \emph{additional} planar and circular constraints (Fig.~\ref{fig:d3il 2d}). Rollouts are performed in a receding-horizon fashion, in which we execute $M < H$ actions before re-planning. Similarly to the maze navigation task, we test the ability of our method to exploit additional environment priors in the form of linearized dynamic constraints.
\paragraph{Baselines.} We compare against the same baselines as in the Maze2D setting. Additional results and baselines are presented in Appendix~\ref{sec:constrained manipulation in D3IL - Appendix}. \paragraph{Evaluation.} We test each method over $100$ i.i.d. rollouts and compute the following metrics: \emph{Safety Rate} (SR), the fraction of trials satisfying both environment and test-time constraints; \emph{Task Success} (TS), the proportion of trajectories that reach the end goal in the allowed number of steps; average \emph{Steps} taken to reach the target for safe rollouts; sampling \emph{Time}. Results in Table~\ref{tab:d3il results main} show that our method is the only one to maintain performance and safety for \emph{all} initializations. 
\input{tables/d3il_results_main}
\subsection{Safe multi-robot motion planning}
\label{sec:multi robot motion planning}
We test DiRecT on the coupled diffusion generation benchmark of~\citep{luan2026projected}, which is based on the multi-robot motion planning (MRMP) environment introduced by~\citep{shaoul2025multirobot}. Given starting locations for each robot, the goal is to simultaneously sample trajectories for all robots from a diffusion model trained on single-agent examples, while avoiding \emph{inter-robot} and \emph{obstacle} collisions, satisfying inference-time \emph{velocity} constraints, and exhibiting target environment-specific behaviors (Figure~\ref{fig:mrmp example}). Given the high-dimensional and nonconvex nature of the MRMP problem, we test the robustness of different methods by scaling the number of agents up to twenty, while imposing velocity limits as in~\citep{luan2026projected}.
\begin{wrapfigure}[13]{r}{0.30\textwidth}
    \centering
    \includegraphics[width=0.25\textwidth]{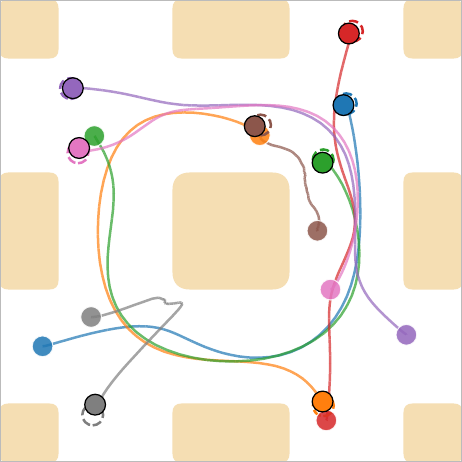}
    \caption{\texttt{Highways} with $8$ agents. Desired motion is counterclockwise around the central obstacle.}
    \label{fig:mrmp example}
\end{wrapfigure}
\paragraph{Baselines.}We compare our method with the following references: \textbf{Diffuser}~\citep{janner2022planningdiffusionflexiblebehavior} (unconstrained reference), \textbf{MMD-CBS}~\citep{shaoul2025multirobot} (tree-search collision search without velocity constraints), \textbf{PCD-LB} and \textbf{PCD-SHD}~\citep{luan2026projected} (soft guidance for collision avoidance and projection for velocity feasibility). For fast and parallelizable projection of trajectories onto the nonconvex feasible set, we develop a custom \emph{domain-specific} optimizer that is amenable to just-in-time compilation and GPU parallelization in JAX~\citep{jax2018github} (see Appendix~\ref{sec:custom MRMP optimizer}). To show that the interplay between the generative process and projection optimization is integral to the success and reliability of the algorithm, we implement \textbf{Final Projection}, in which samples are generated unconditionally and then projected \emph{post-sampling} onto the feasible set.
\paragraph{Evaluation.} We assess each method over \(100\) starting configurations, generating \(128\) candidate trajectories for each initialization as in~\citep{luan2026projected}. We report the following metrics: \emph{Success Rate}, which is the fraction of trials with at least one collision-free trajectory among the generated candidate trajectories; \emph{Constraint Safety}, which is the proportion of \emph{all} samples satisfying collision and kinematic constraints, and \emph{Data Adherence}, which measures environment-specific task success. As shown in Figure~\ref{fig:mrmp results main}, DiRecT outperforms all other baselines in terms of safety, especially when scaling to twenty agents. Note that the relatively lower data adherence on \texttt{Conveyor} is not representative of a relative lack of performance, as all other baselines mostly generate infeasible samples. We provide additional experimentation details in Appendix~\ref{sec:safe multi-robot motion planning - Appendix details} and extensive sweep results in Appendix~\ref{sec:safe multi-robot motion planning - Appendix}.

\begin{figure}[t]
    \centering

    \begin{subfigure}[t]{\linewidth}
        \centering
        \includegraphics[width=\linewidth,keepaspectratio]{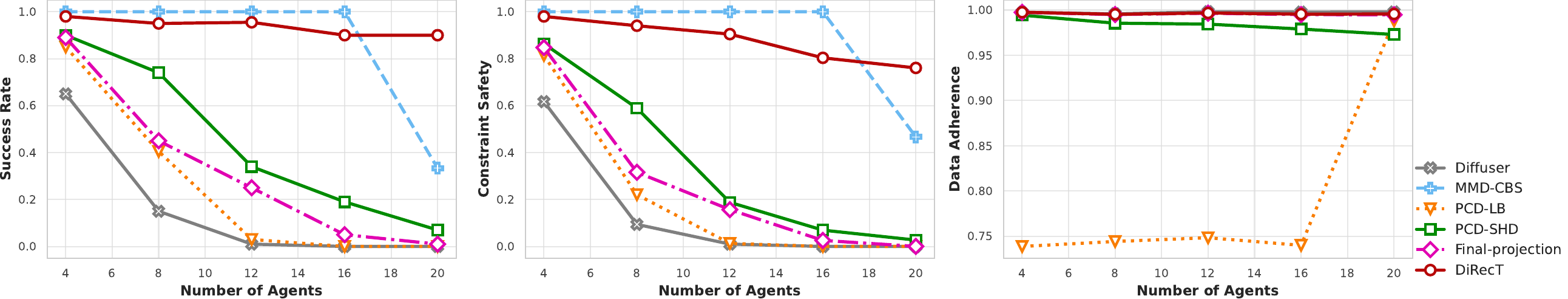}
        \caption{\texttt{Empty} with agent velocities limited to $0.675$.}
        \label{fig:four-rows-plot1}
    \end{subfigure}

    \vspace{0.35em}

    \begin{subfigure}[t]{\linewidth}
        \centering
        \includegraphics[width=\linewidth,keepaspectratio]{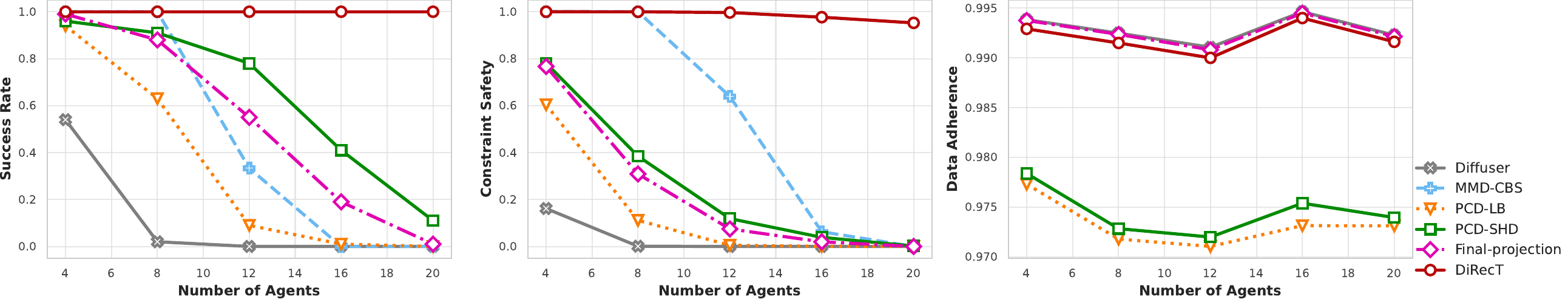}
        \caption{\texttt{Highways} with agent velocities limited to $0.647$.}
        \label{fig:four-rows-plot2}
    \end{subfigure}

    \vspace{0.35em}

    \begin{subfigure}[t]{\linewidth}
        \centering
        \includegraphics[width=\linewidth,keepaspectratio]{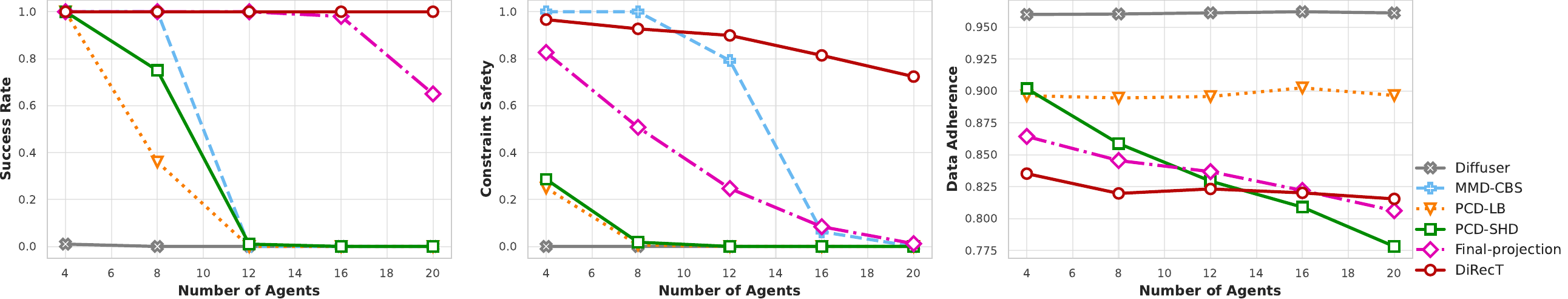}
        \caption{\texttt{Conveyor} with agent velocities limited to $1.21$.}
        \label{fig:four-rows-plot3}
    \end{subfigure}

    \vspace{0.35em}

    \begin{subfigure}[t]{\linewidth}
        \centering
        \includegraphics[width=\linewidth,keepaspectratio]{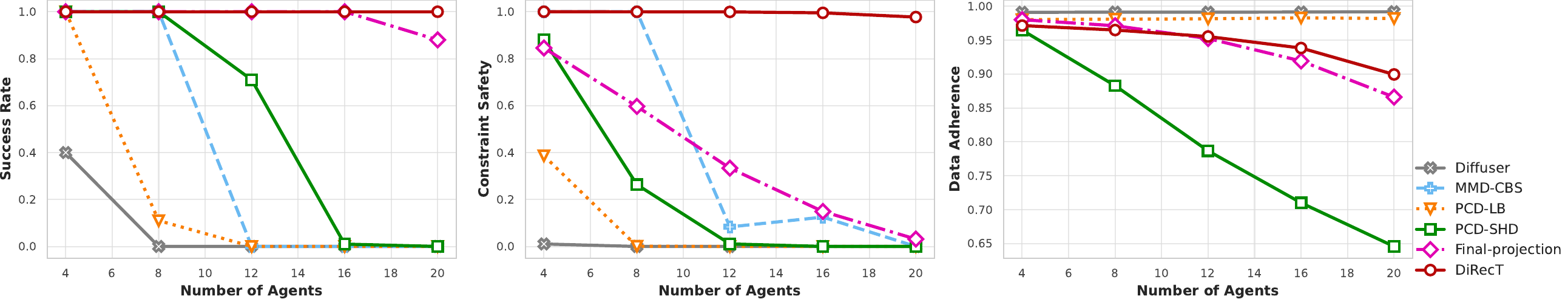}
        \caption{\texttt{Drop-Region} with agent velocities limited to $0.928$.}
        \label{fig:four-rows-plot4}
    \end{subfigure}

    \caption{Performance comparison for $N \in \{4, 8, 12, 16, 20\}$ agents on the four MMD environments~\citep{shaoul2025multirobot}. We impose obstacle and inter-robot constraints and restrict agent velocities. Our method outperforms all other baselines in terms of safety, while maintaining adherence to the expected motion pattern.}
    \label{fig:mrmp results main}
\end{figure}
\subsection{Safe and diverse contact-rich manipulation}
Finally, we show that DiRecT can effectively incorporate and optimize over \emph{soft} costs while satisfying test-time constraints. We base our evaluation on simultaneous coupled trajectory generation on \texttt{PushT}~\citep{florence2021implicit}. In the single-agent setting, the objective is to push a T-shaped block from a randomized starting position until it overlaps with a target pose (Figure~\ref{fig:pusht example}). Following~\citep{luan2026projected}, we consider the simultaneous generation of a pair of distinct trajectories while imposing \emph{hard} constraints on the maximum velocity and \emph{soft} coupling costs to encourage non-intersecting, diverse trajectories. 
\paragraph{Baselines.} We compare against vanilla \textbf{Diffusion Policy}~\citep{chi2025diffusion}, Projected Coupled Diffusion variants with Determinantal Point Process (DPP) and Log-barrier (LB) losses, both without velocity projection (\textbf{CD-DPP}, \textbf{CD-LB}~\citep{luan2026projected}) and with velocity projection (\textbf{PCD-DPP}, \textbf{PCD-LB}~\citep{luan2026projected}). To demonstrate that our method can improve diversity with the same computational overhead, we incorporate DPP and LB costs into \eqref{eq:controlled-diffuser-subproblem}, and solve the resulting optimization with one step of Projected Gradient Descent~\citep{levitin1966constrained}, thereby requiring one projection and one gradient computation per sampling step (see Appendix~\ref{sec:diverse constrained planning in PushT - Appendix Details}). We denote the two variants of our method as \textbf{DiRecT-DPP} and \textbf{DiRecT-LB}.
\begin{wrapfigure}[15]{r}{0.30\textwidth}
    \centering
    \includegraphics[width=0.25\textwidth]{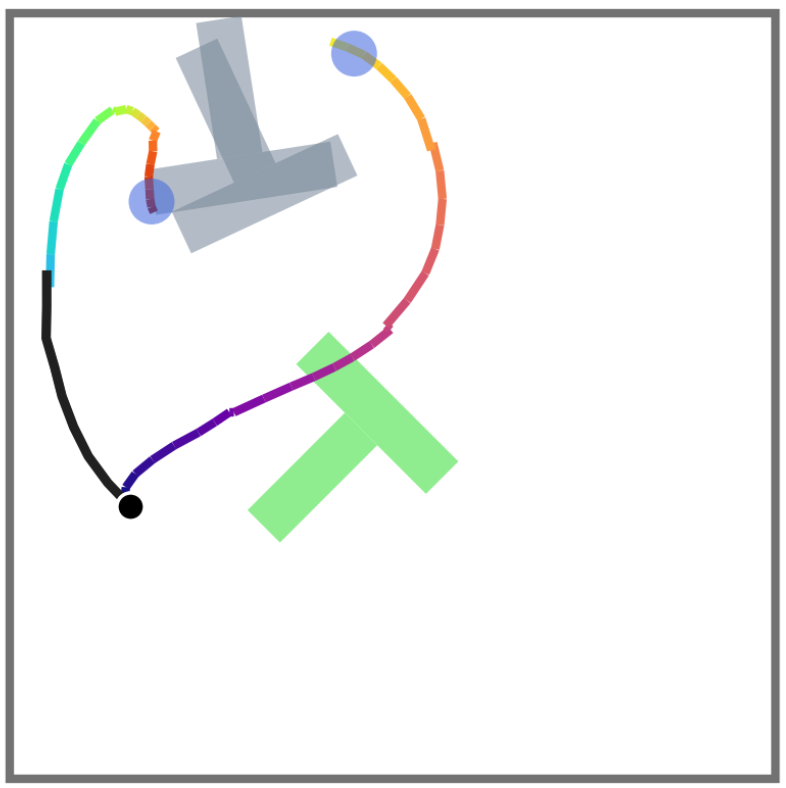}
    \caption{Coupled generation on \texttt{PushT}. Two agents push gray blocks onto a green target pose while avoiding intersecting trajectories.}
    \label{fig:pusht example}
\end{wrapfigure}
\paragraph{Evaluation.} We follow the same experimental procedure, generating $100$ \emph{pairs} of trajectories over $50$ random initializations for the block and agent positions and report the following metrics: \emph{Dynamic Time Warping} (DTW)~\citep{berndt1994using, muller2007information} and \emph{Discrete Fréchet Distance} (DFD)~\citep{alt1995discretefrechetdistance} to evaluate diversity; \emph{Task Completion} score (TC)~\citep{florence2021implicit, chi2025diffusion}, measuring successful overlap of the blocks onto the respective targets; \emph{Constraint Satisfaction} rate (CS)~\citep{luan2026projected} as the fraction of samples satisfying velocity limits; and sampling \emph{Time}. As summarized in Table~\ref{tab:pusht results main}, our formulation improves both diversity and task success relative to other constrained baselines, highlighting the importance of performing both \emph{soft} and \emph{hard} guidance near the data manifold. As observed by~\citep{luan2026projected}, diversity objectives conflict with task completion, leading to Pareto-optimal solutions as a function of guidance strength. In Appendix~\ref{sec:pareto-optimal analysis for PushT}, we show that DiRecT dominates PCD in terms of Pareto optimality. 
\input{tables/pusht_results_main}

\section{Conclusion}
\label{sec: Conclusions}
In this paper, we introduced DiRecT, a training-free algorithm for constrained sampling in safe diffusion-based planning. By formulating constrained sampling as a terminally constrained stochastic optimal control problem and deriving a scalable receding-horizon surrogate, DiRecT separates stochastic denoising from trajectory-level constraint satisfaction. This design avoids overconstraining noisy intermediate iterates while retaining the flexibility to incorporate domain-specific optimizers, dynamics priors, and additional soft objectives. Experiments across diverse robotic planning domains demonstrate that DiRecT improves both constraint satisfaction and task success, highlighting its potential for reliable deployment in safety-critical planning tasks.

\paragraph{Limitations and future work.}
Our method enforces hard constraints for pretrained diffusion planners in a \emph{training-free} manner, making it broadly applicable to existing models without retraining. This flexibility, however, comes with additional inference-time computation. A natural direction for future work is to combine terminal constraint control with model fine-tuning, reducing runtime overhead and improving adaptability to test-time constraints.

%% file: tables/maze2d_results_main.tex
\begin{table}[t]
\centering
\small
\caption{Results on the safe navigation in \texttt{Maze2D}. Mean values over $100$ i.i.d. samples and reported alongside their standard deviation. Boldface indicates the best score.}
\label{tab:maze2d results main}
\begin{adjustbox}{max width=\linewidth}
\begin{tabular}{llcccc}
\toprule
\textbf{Env}
& \textbf{Method}
& \textbf{SR} $\uparrow$
& \textbf{Violations} $\downarrow$
& \textbf{Score} $\uparrow$
& \textbf{Time (s)} $\downarrow$ \\
\midrule

\multirow{6}{*}{Maze2D Broad}
& Diffuser~\citep{janner2022planningdiffusionflexiblebehavior}
& 0.000
& $45.92{\scriptstyle \pm 11.86}$
& $\mathbf{1.601{\scriptstyle \pm 0.025}}$
& $0.316{\scriptstyle \pm 0.078}$ \\
& Gradient Guidance~\citep{chung2023diffusionposterior}
& 0.040
& $25.52{\scriptstyle \pm 10.18}$
& $1.587{\scriptstyle \pm 0.164}$
& $3.637{\scriptstyle \pm 0.076}$ \\
& Projected Diffusion~\citep{christopher2024constrained}
& 0.220
& $42.00{\scriptstyle \pm 47.28}$
& $1.119{\scriptstyle \pm 0.708}$
& $46.398{\scriptstyle \pm 11.096}$ \\
& Augmented Lagrangian~\citep{zhang2025constraineddiffuserssafeplanning}
& 0.080
& $72.92{\scriptstyle \pm 92.75}$
& $-0.010{\scriptstyle \pm 0.145}$
& $5.676{\scriptstyle \pm 0.566}$ \\
& SafeDiffuser-RoS~\citep{xiao2025safediffuser}
& 0.040
& $66.26{\scriptstyle \pm 41.72}$
& $1.186{\scriptstyle \pm 0.533}$
& $148.967{\scriptstyle \pm 3.831}$ \\
& \textbf{DiRecT (Ours)}
& \textbf{0.970}
& $\mathbf{0.45{\scriptstyle \pm 3.81}}$
& $1.500{\scriptstyle \pm 0.419}$
& $17.657{\scriptstyle \pm 0.629}$ \\

\midrule

\multirow{6}{*}{Maze2D Narrow}
& Diffuser~\citep{janner2022planningdiffusionflexiblebehavior}
& 0.000
& $103.94{\scriptstyle \pm 21.88}$
& $1.601{\scriptstyle \pm 0.025}$
& $0.321{\scriptstyle \pm 0.086}$ \\
& Gradient Guidance~\citep{chung2023diffusionposterior}
& 0.000
& $51.30{\scriptstyle \pm 19.22}$
& $1.619{\scriptstyle \pm 0.0029}$
& $3.608{\scriptstyle \pm 0.023}$ \\
& Projected Diffusion~\citep{christopher2024constrained}
& 0.780
& $37.29{\scriptstyle \pm 129.88}$
& $0.163{\scriptstyle \pm 0.459}$
& $112.918{\scriptstyle \pm 19.517}$ \\
& Augmented Lagrangian~\citep{zhang2025constraineddiffuserssafeplanning}
& 0.000
& $217.51{\scriptstyle \pm 159.43}$
& $0.156{\scriptstyle \pm 0.449}$
& $6.022{\scriptstyle \pm 0.122}$ \\
& SafeDiffuser-RoS~\citep{xiao2025safediffuser}
& 0.000
& $278.87{\scriptstyle \pm 209.87}$
& $0.735{\scriptstyle \pm 0.681}$
& $509.850{\scriptstyle \pm 19.929}$ \\
& \textbf{DiRecT (Ours)}
& \textbf{0.940}
& $\mathbf{0.23{\scriptstyle \pm 1.04}}$
& $\mathbf{1.624{\scriptstyle \pm 0.027}}$
& $98.093{\scriptstyle \pm 5.047}$ \\

\bottomrule
\end{tabular}
\end{adjustbox}
\end{table}

%% file: tables/d3il_results_main.tex
\begin{table}[t]
\centering
\small
\setlength{\tabcolsep}{5.5pt}
\renewcommand{\arraystretch}{1.08}
\caption{Results on constrained manipulation in D3IL \texttt{avoiding}. Mean values are computed over $100$ i.i.d. trials and reported with standard deviations where applicable. Boldface indicates the best score.}
\label{tab:d3il results main}
\begin{adjustbox}{max width=\linewidth}
\begin{tabular}{llcccc}
\toprule
\textbf{Env}
& \textbf{Method}
& \textbf{SR} $\uparrow$
& \textbf{Task Success} $\uparrow$
& \textbf{Steps (Safe)} $\downarrow$
& \textbf{Time (s)} $\downarrow$ \\
\midrule

\multirow{6}{*}{D3IL Avoiding}
& Unconstrained Diffuser~\citep{janner2022planningdiffusionflexiblebehavior}
& 0.060
& 0.960
& $70.30{\scriptstyle \pm 11.83}$
& $0.224{\scriptstyle \pm 0.002}$ \\
& Gradient Guidance~\citep{chung2023diffusionposterior}
& 0.880
& 0.980
& $\mathbf{62.58{\scriptstyle \pm 7.05}}$
& $6.615{\scriptstyle \pm 0.540}$ \\
& Projected Diffusion~\citep{christopher2024constrained}
& 0.770
& 0.970
& $63.77{\scriptstyle \pm 6.88}$
& $1.367{\scriptstyle \pm 0.057}$ \\
& Augmented Lagrangian~\citep{zhang2025constraineddiffuserssafeplanning}
& 0.220
& 0.840
& $63.45{\scriptstyle \pm 7.22}$
& $4.152{\scriptstyle \pm 0.018}$ \\
& SafeDiffuser-RoS~\citep{xiao2025safediffuser}
& 0.370
& 0.820
& $63.19{\scriptstyle \pm 11.56}$
& $18.890{\scriptstyle \pm 0.877}$ \\
& \textbf{DiRecT (Ours)}
& \textbf{1.000}
& \textbf{1.000}
& $62.86{\scriptstyle \pm 6.90}$
& $0.688{\scriptstyle \pm 0.061}$ \\

\bottomrule
\end{tabular}
\end{adjustbox}
\end{table}

%% file: tables/pusht_results_main.tex
\begin{figure}[t]
\centering
\small
\renewcommand{\arraystretch}{1.08}

\captionof{table}{Results on generation of safe and diverse contact-rich manipulation in \texttt{PushT}, with a velocity limit of $8.4$. Best overall value(s) in boldface; underlining denotes best among constrained samplers.
}
\label{tab:pusht results main}
\begin{adjustbox}{max width=\linewidth}
\begin{tabular}{lccccc}
\toprule
\textbf{Method}
& \textbf{DTW} $\uparrow$
& \textbf{DFD} $\uparrow$
& \textbf{CS} $\uparrow$
& \textbf{TC} $\uparrow$
& \textbf{Time (s)} $\downarrow$ \\
\midrule
Diffusion Policy~\citep{chi2025diffusion}
& $3.26{\scriptstyle \pm 1.280}$
& $0.477{\scriptstyle \pm 0.167}$
& $0.6549{\scriptstyle \pm 0.098}$
& $\mathbf{0.928{\scriptstyle \pm 0.194}}$
& $0.136{\scriptstyle \pm 0.015}$ \\

CD-LB~\citep{luan2026projected}
& $\mathbf{4.242{\scriptstyle \pm 1.292}}$
& $\mathbf{0.610{\scriptstyle \pm 0.166}}$
& $0.592{\scriptstyle \pm 0.136}$
& $0.907{\scriptstyle \pm 0.209}$
& $0.138{\scriptstyle \pm 0.006}$ \\

CD-DPP~\citep{luan2026projected}
& $3.871{\scriptstyle \pm 1.169}$
& $0.560{\scriptstyle \pm 0.150}$
& $0.632{\scriptstyle \pm 0.134}$
& $0.921{\scriptstyle \pm 0.207}$
& $0.142{\scriptstyle \pm 0.010}$ \\

PCD-LB~\citep{luan2026projected}
& $3.928{\scriptstyle \pm 1.238}$
& $0.557{\scriptstyle \pm 0.159}$
& $\underline{\mathbf{1.000{\scriptstyle \pm 0.000}}}$
& $0.890{\scriptstyle \pm 0.226}$
& $0.490{\scriptstyle \pm 0.008}$ \\

PCD-DPP~\citep{luan2026projected}
& $3.530{\scriptstyle \pm 1.236}$
& $0.503{\scriptstyle \pm 0.160}$
& $\underline{\mathbf{1.000{\scriptstyle \pm 0.000}}}$
& $0.898{\scriptstyle \pm 0.224}$
& $0.492{\scriptstyle \pm 0.008}$ \\

\midrule

DiRecT-LB
& $4.046{\scriptstyle \pm 1.139}$
& $0.581{\scriptstyle \pm 0.146}$
& $\underline{\mathbf{1.000{\scriptstyle \pm 0.000}}}$
& $\underline{0.924{\scriptstyle \pm 0.190}}$
& $0.373{\scriptstyle \pm 0.008}$ \\

DiRecT-DPP
& $\underline{4.226{\scriptstyle \pm 1.126}}$
& $\underline{0.609{\scriptstyle \pm 0.142}}$
& $\underline{\mathbf{1.000{\scriptstyle \pm 0.000}}}$
& $0.913{\scriptstyle \pm 0.207}$
& $0.372{\scriptstyle \pm 0.008}$ \\
\bottomrule
\end{tabular}
\end{adjustbox}
\end{figure}

%% file: appendix.tex

\section{Related Work}
\label{sec:Related works - appendix}

\paragraph{Safe diffusion-based planning.}
Existing approaches to diffusion-based planning with test-time constraint enforcement can largely be understood through the lens of projection, differing primarily in \emph{when} projection is applied and how strongly it restricts the denoising process. A simple strategy is \emph{post-sampling} projection, where an unconstrained diffusion model first generates a trajectory and feasibility is enforced only afterward. This direction was explored by~\citet{power2023sampling}. While conceptually straightforward, post-sampling projection can move samples substantially away from the pretrained diffusion distribution, as observed by~\citet{christopher2024constrained}.
Most subsequent methods instead perform projection \emph{during} sampling. For example, \citet{christopher2024constrained} formulate constrained sampling as a maximum-likelihood problem in which all intermediate latents are required to lie in the feasible set, and sample using projected stochastic Langevin dynamics. Building on this per-step projection perspective, \citet{luan2026projected} extend projected diffusion models to multi-robot motion planning with coupled trajectory generation and additional soft costs, while \citet{liang2025simultaneous} introduce a customized Lagrangian-based projector that relaxes nonconvex collision constraints.
Recognizing that enforcing constraints throughout the entire denoising process can be overly restrictive, more recent methods relax early-stage sampling and tighten constraint enforcement near the end of generation. \citet{zhang2025constraineddiffuserssafeplanning} propose a Lagrangian formulation for safe planning, where constraint strictness is progressively increased through primal-dual or augmented-Lagrangian updates. In a related direction, SafeDiffuser~\citep{xiao2025safediffuser} views denoising as a controlled dynamical system and incorporates control barrier functions (CBFs) into the sampling process. This allows constraint violations during early denoising while enforcing terminal feasibility through the CBF condition. However, balancing early-stage relaxation with reliable terminal feasibility can require careful parameter tuning; moreover, even relaxed variants still impose restrictions along the denoising trajectory.
Overall, most existing methods share a common feature: constraint satisfaction is enforced through projections or constraint-driven updates along the denoising path. In contrast, our method adopts a stochastic optimal control perspective that imposes hard constraints only on the terminal clean sample, rather than requiring feasibility of noisy intermediate iterates. It instead steers the sampling process toward feasibility of the final trajectory, reducing unnecessary restrictions on the learned diffusion dynamics. Moreover, beyond hard constraints, practical planning problems often involve soft costs or rewards. While projection-based approaches can incorporate such objectives in specific implementations, they do not naturally provide a unified framework for jointly handling hard feasibility and soft optimality. Our stochastic optimal control formulation addresses both within a single inference-time procedure, enabling efficient and flexible constrained sampling.

\paragraph{Diffusion models and stochastic optimal control.}
A growing body of work has connected diffusion sampling with stochastic optimal control (SOC), viewing the reverse diffusion process as a controlled stochastic dynamics whose drift can be modified while penalizing deviations from a reference process. One line of work uses this perspective for sampling from unnormalized target densities, where a reference diffusion process is controlled to match a desired distribution; representative examples include path-integral sampling~\citep{kappen2005pathintegrals}, denoising diffusion samplers~\citep{vargas2023denoising}, and controlled Monte Carlo diffusions~\citep{vargas2024transport}. Another line of work applies SOC to reward optimization in generative models, either through training-based reward fine-tuning, such as Adjoint Matching~\citep{domingo2024adjoint}, or through training-free guidance toward soft objectives, such as RB-Modulation~\citep{rout2025rbmodulation} and variational diffusion guidance~\citep{pandey2025variational}.
These methods demonstrate the usefulness of the SOC viewpoint for controlling diffusion samplers, but they primarily target distributional sampling or soft reward optimization. They do not directly address hard-constrained planning, where the final trajectory must satisfy feasibility constraints while remaining faithful to the learned diffusion dynamics. Extending SOC-based diffusion control to this setting is nontrivial, and existing reward-guidance methods do not yield an efficient constrained sampler. In contrast, our work develops a tractable surrogate tailored to terminally constrained sampling for diffusion-based planning, enabling efficient inference-time enforcement of hard constraints while also accommodating additional soft costs or rewards.

\paragraph{Training-based diffusion planners.}
A parallel and complementary line of work adapts diffusion-based planners through additional training or fine-tuning. For example, AdaptDiffuser~\citep{liang2023adaptdiffuser} improves diffusion planners via self-evolution, using reward-guided synthetic trajectories to update the model. SODP~\citep{fan2024task} pretrains a task-agnostic diffusion planner on diverse suboptimal data and then fine-tunes it with task-specific rewards. DPPO~\citep{ren2024diffusion} further adapts diffusion policies for continuous control and robot learning using policy-gradient reinforcement learning. These methods improve planning performance by modifying model parameters, whereas our work enforces hard constraints at inference time without retraining, making it orthogonal to training-based adaptation.

\section{Stochastic optimal control and constrained sampling}
\label{sec:stochastic optimal control - Appendix}
We expand on Section \ref{sec:method} and present the connection between stochastic optimal control and safe planning with \emph{test-time} constraints. We recall central ideas from optimal control literature~\citep{fleming2012deterministic} and motivate grounding constrained planning in this line of work.

\paragraph{Stochastic Optimal Control.} For simplicity and coherence with the literature, we consider a \emph{forward-time} Itô process between $0 \le t \le 1$, transporting probability mass from prior distribution $p_0$ toward $p_{base}$. The diffusion-centric formulation in \eqref{eq:soc problem} is equivalent up to time reversal. We parametrize the SDE by a given reference drift $f: \R^d \times [0, 1] \rightarrow \R^d$ and diffusion coefficient $g: [0, 1] \rightarrow \R$:
\begin{equation}
    d X_t = f_t(X_t) dt + g(t) d W_t, \quad X_0 \sim p_0, \quad t \in [0, 1]   
\label{eq:sde-uncontrolled}
\end{equation}
where $\left(W_t\right)_{t \ge 0}$ denotes a standard Wiener process. To control generation, we introduce an additive control drift $u: \R^d \times [0, 1] \rightarrow \mathcal{U}$ producing actions in the admissible set of control inputs $\mathcal{U}$. As optimization objectives, let $C: \R^d \rightarrow \R$ denote the terminal cost, and $\ell: \R^d \times \mathcal{U} \times [0, 1] \rightarrow \R$ be the running cost. Stochastic optimal control formulates the minimization problem of the expected cost functional $J$ over the path measure $\mathbb{P}^u$ induced by the controlled dynamics:
\begin{equation}
\label{eq:soc_problem}
\begin{aligned}
    \min_{u \in \mathcal{A}} \quad J(u) = \quad &
    \EE\left[
        C(X^u_1)
        +
        \int_0^1
        \ell\left(X^u_t, u_t(X^u_t), t\right)\,dt
    \right] \\
    & \text{s.t.}\quad
    dX^u_t
    =
    \left[
        f_t(X^u_t)
        +
        g(t)u_t(X^u_t)
    \right]dt
    +
    g(t)dW_t, \\
    & \qquad X^u_0 \sim p_0, \quad t \in [0,1].
\end{aligned}
\end{equation}

where $\mathcal{A}$ is the admissible class of progressively measurable controls satisfying $u_t(X_t^u)\in\mathcal{U}$ and $\EE_{X^u \sim \mathbb{P}^u} \left[ \int_0^1 \|u_t(X_t^u)\|^2dt \right] < \infty$. Under standard coercivity, convexity, and continuity assumptions on the data that ensure existence of optimal controls for controlled It\^o diffusions~\citep{haussmann1990existence}, we denote by $u^\star \in \arg\min_{u\in\mathcal{A}} J(u)$ an optimal solution of \eqref{eq:soc_problem}. 

\paragraph{Solving the SOC and the Hamilton--Jacobi--Bellman equation.}
Solving \eqref{eq:soc_problem} \emph{as-is} is generally intractable due to the stochastic dynamics and the expectation over all realizable paths. We present a classical dynamic programming formulation that transforms the functional minimization problem into a partial differential equation (PDE). First, for a given admissible control, define the \emph{cost-to-go} $J:\mathcal{A}\times\R^d\times[0,1]\to\R$ by
\begin{equation}
    J(u;x,t)
    =
    \EE_{\mathbb{P}^{u}_{t,x}}
    \left[
        \int_t^1
        \ell\left(X_s^u,u_s(X_s^u),s\right)\,ds
        +
        C(X_1^u)
    \right],
\label{eq:cost_function}
\end{equation}
where $\mathbb{P}^{u}_{t,x}$ denotes the path measure induced by the controlled dynamics initialized at $X_t^u=x$. The cost-to-go coincides with the expected future cost incurred by applying $u$ from state $x$ at time $t$. We then define the \emph{value function} $V:\R^d\times[0,1]\to\R$ as the infimum of this cost over all admissible controls:
\begin{equation}
    V(x,t)
    =
    \inf_{u\in\mathcal{A}} J(u;x,t).
\label{eq:value_function}
\end{equation}

The SOC and value formulations are related as taking expectations over the initial distribution yields:
\begin{equation}
J(u) = \EE_{X_0 \sim p_0} \left[ J(u; X_0, 0) \right], \\
\end{equation}
\begin{equation}
J(u^\star) = \EE_{X_0 \sim p_0}\left[ V(X_0, 0) \right].
\end{equation}
A solution to the stochastic optimal control problem can be achieved by computing the value function, which, under regularity conditions, satisfies the Hamilton--Jacobi--Bellman (HJB)~\citep{bellman1954theory} partial differential equation (Theorem \ref{thm:hjb_soc}).

\begin{theorem}[Hamilton--Jacobi--Bellman equation~\citep{fleming2012deterministic}]
\label{thm:hjb_soc}
Assume the standard conditions for the dynamic programming principle hold, and
suppose that the value function satisfies $V\in C^{1,2}(\R^d\times[0,1])$.
Define the stochastic Hamiltonian
\begin{equation}
\label{eq:soc_hamiltonian}
    \mathcal{H}(x,p,M,u,t)
    :=
    \ell(x,u,t)
    +
    \left(f_t(x)+g(t)u\right)^\top p
    +
    \frac{1}{2}g(t)^2\Tr(M).
\end{equation}
Then $V$ solves the Hamilton--Jacobi--Bellman equation
\begin{equation}
\label{eq:hjb_soc}
    -\partial_t V(x,t)
    =
    \min_{u\in\mathcal{U}}
    \mathcal{H}
    \left(
        x,
        \nabla_x V(x,t),
        \nabla_x^2 V(x,t),
        u,
        t
    \right), \quad V(x, 1) = C(x).
\end{equation} \\
Moreover, whenever the minimum is attained, an optimal feedback control
satisfies
\begin{equation}
\label{eq:hjb_optimal_control}
    u^\star(x,t)
    \in
    \arg\min_{u\in\mathcal{U}}
    \mathcal{H}
    \left(
        x,
        \nabla_x V(x,t),
        \nabla_x^2 V(x,t),
        u,
        t
    \right).
\end{equation}
\begin{proof}
The result follows from Bellman's dynamic programming principle~\citep{bellman1954theory}. 
Applying the principle over $[t,t+\Delta t]$, expanding the value function with It\^o's formula, taking expectations, and sending $\Delta t\to0$ leads to the differential HJB formulation. The terminal condition follows from the definition of the cost-to-go. 
See~\citet{fleming2012deterministic}.
\end{proof}
\end{theorem}

In practice, solving the HJB PDE exactly is rarely tractable beyond low-dimensional systems, because grid-based dynamic programming suffers from the curse of dimensionality; since our focus is on \emph{training-free} guidance rather than learning value functions, we refer the reader to training-based approaches to HJB equations and stochastic optimal control, including approximate dynamic programming, reinforcement learning, and neural PDE methods~\citep{bertsekas1996neuro,sutton2018reinforcement,han2018solving,sirignano2018dgm}.

\paragraph{Relation to KL control.} We now specialize the stochastic optimal control formulation by choosing the quadratic running cost
\begin{equation}
\label{eq:running-cost}
    \ell(x,u,t)=\frac{1}{2}\norm{u}^2.
\end{equation}
This choice gives the control objective a path-space relative-entropy interpretation, connecting stochastic optimal control to Schr\"odinger Bridge problems, path-integral control, and KL-control as outlined in the following theorem:

\begin{theorem}[Path-space KL representation]
\label{thm:path-space-kl-representation}
Let $\{X_t^0\}_{t\in[0,1]}$ be the reference stochastic interpolant
\begin{equation}
\label{eq:reference-stochastic-interpolant}
    dX_t^0=f_t(X_t^0)\,dt+g(t)dW_t,
    \qquad
    X_0^0\sim p_0,
\end{equation}
with induced path measure $\mathbb{P}^0$. Let $\mathbb{P}^u$ be the path measure induced by the controlled stochastic interpolant
\begin{equation}
\label{eq:controlled-stochastic-interpolant}
    dX_t^u=
    \left[f_t(X_t^u)+g(t)u_t(X_t^u)\right]dt
    +g(t)dW_t,
    \qquad
    X_0^u\sim p_0.
\end{equation}
Then, for progressively measurable controls satisfying the standard Girsanov integrability condition, the SOC objective with $\ell(x,u,t)=\frac{1}{2}\norm{u}^2$ is equivalent to the KL-regularized path-space objective
\begin{equation}
\label{eq:kl-terminal-cost-problem}
    \inf_{u\in\mathcal{A}}
    \left\{
        \mathrm{KL}\left(\mathbb{P}^u\,\|\,\mathbb{P}^0\right)
        +
        \EE_{\mathbb{P}^u}\left[C(X_1^u)\right]
    \right\}.
\end{equation}
\end{theorem}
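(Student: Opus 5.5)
The approach is a Girsanov change of measure between the controlled and reference path measures, after which the quadratic control energy appears exactly as the path-space relative entropy and the terminal cost is carried along unchanged. Concretely, I will show that for every admissible $u$,
\begin{equation*}
\mathrm{KL}\left(\mathbb{P}^u\,\|\,\mathbb{P}^0\right)=\EE_{\mathbb{P}^u}\left[\frac{1}{2}\int_0^1\norm{u_t(X_t^u)}^2dt\right].
\end{equation*}
Adding $\EE_{\mathbb{P}^u}[C(X_1^u)]$ to both sides shows that $J(u)$ in \eqref{eq:soc_problem} with running cost \eqref{eq:running-cost} coincides with the objective in \eqref{eq:kl-terminal-cost-problem}. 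The two infima are then taken over the same admissible class $\mathcal{A}$ of the same functional, so they are equal and have the same minimizers.

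The first step is to identify the Radon--Nikodym derivative. Both processes start from the same initial law $p_0$ and have the same diffusion coefficient $g(t)$. The drifts differ by $g(t)u_t(X_t)$, so the Girsanov shift is exactly $u_t$. Under the Girsanov integrability condition (Novikov, or the finite-energy condition in the definition of $\mathcal{A}$ plus a localization argument), we get
\begin{equation*}
\log\frac{d\mathbb{P}^u}{d\mathbb{P}^0}(X)=\int_0^1 u_t(X_t)^\top dW^0_t+\frac{1}{2}\int_0^1\norm{u_t(X_t)}^2dt .
\end{equation*}
Here $W^0$ is the process that is Brownian under $\mathbb{P}^0$, defined by $g(t)dW^0_t=dX_t-f_t(X_t)dt$. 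Because the initial marginals coincide, no extra term from the initial laws appears.

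The second step is to evaluate this log-density under $\mathbb{P}^u$. Along controlled paths, $dW^0_t=u_t(X_t)dt+dW_t$, where $W$ is a $\mathbb{P}^u$-Brownian motion. Substituting gives
\begin{equation*}
\log\frac{d\mathbb{P}^u}{d\mathbb{P}^0}=\int_0^1 u_t^\top dW_t+\frac{1}{2}\int_0^1\norm{u_t}^2dt .
\end{equation*}
The finite-energy condition $\EE\left[\int_0^1\norm{u_t}^2dt\right]<\infty$ makes the stochastic integral a true square-integrable $\mathbb{P}^u$-martingale with zero mean. Taking $\EE_{\mathbb{P}^u}$ then yields the KL identity.

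The main obstacle is rigor around the division by $g(t)$ and the martingale property. If $g$ can vanish, the change of measure has to be phrased through the shift $u_t$ directly rather than by dividing the drift difference by $g$; the additive form $g(t)u_t$ in \eqref{eq:controlled-stochastic-interpolant} is what makes this possible. For general admissible controls, where Novikov may fail, I would localize with stopping times $\tau_n=\inf\{t:\int_0^t\norm{u_s}^2ds\ge n\}$, prove the identity on $[0,\tau_n]$, and pass to the limit using monotone convergence for the energy term and lower semicontinuity of KL. If the energy is infinite, both sides are $+\infty$, so such controls do not affect either infimum.
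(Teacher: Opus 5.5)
Your proof is correct and follows the same route as the paper's. The paper invokes Girsanov to get $\mathrm{KL}(\mathbb{P}^u\,\|\,\mathbb{P}^0)=\EE_{\mathbb{P}^u}\bigl[\tfrac12\int_0^1\|u_t(X_t^u)\|^2dt\bigr]$ and then adds $\EE_{\mathbb{P}^u}[C(X_1^u)]$; you supply the martingale and localization details it leaves implicit.

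One slip to fix: written in terms of the $\mathbb{P}^0$-Brownian motion, the log-density is $\int_0^1 u_t^\top dW^0_t-\tfrac12\int_0^1\|u_t\|^2dt$, with a minus sign. With your $+\tfrac12$, substituting $dW^0_t=u_t\,dt+dW_t$ would give $\tfrac32\int_0^1\|u_t\|^2dt$. The correct result is the $\tfrac12\int_0^1\|u_t\|^2dt$ you then write.

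Your vanishing-$g$ remark also does not rescue the identity. If $g=0$ on a set of times of positive measure, $u$ has no effect on the law of $X$ there. The path-space KL is then strictly smaller than the control energy whenever $u$ does not vanish on that set. Only the two infima still coincide, because setting $u=0$ there is optimal. The paper's one-line proof tacitly assumes $g>0$.
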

\begin{proof}
By Girsanov's theorem, the controlled drift perturbation $g(t)u_t$ induces an absolutely continuous change of path measure whose Radon--Nikodym derivative yields
\[
    \mathrm{KL}\left(\mathbb{P}^u\,\|\,\mathbb{P}^0\right)
    =
    \EE_{\mathbb{P}^u}
    \left[
        \int_0^1
        \frac{1}{2}\norm{u_t(X_t^u)}^2dt
    \right].
\]
Substituting this identity into the SOC objective with $\ell(x,u,t)=\frac{1}{2}\norm{u}^2$ yields \eqref{eq:kl-terminal-cost-problem}. See~\citet{theodorou2012relative} for the corresponding relative-entropy derivation in path-integral and KL-control stochastic optimal control.
\end{proof}

Theorem~\ref{thm:path-space-kl-representation} shows that the quadratic-control SOC objective selects a guided sampler that reduces terminal cost while remaining close to the reference stochastic interpolant in path-space KL. This principle of remaining maximally close to the reference sampler, and hence avoiding unnecessary over-constraint of the latent dynamics, offers a useful lens for safe planning with pretrained diffusion models. Although the full stochastic optimal control problem is generally intractable, we emphasize that this distributional viewpoint provides a principled idealization from which reliable and efficient training-free surrogate approximations may be derived.

\paragraph{Constrained sampling.} As noted in prior work on constrained sampling~\citep{zhang2025constraineddiffuserssafeplanning,chamon2025constrainedsamplingprimalduallangevin}, the stochastic optimal control formulation naturally generalizes to planning with \emph{hard} constraints. In particular, let $\mathcal{S}\subseteq\R^d$ denote a feasible set and define the extended-valued indicator
\begin{equation}
\label{eq:indicator-function}
    \iota_{\mathcal{S}}(x)
    =
    \begin{cases}
        0, & x\in\mathcal{S},\\
        +\infty, & x\notin\mathcal{S}.
    \end{cases}
\end{equation}
Then hard-constrained sampling can be written by choosing the terminal cost
$C'(x)=C(x) + \iota_{\mathcal{S}}(x)$, yielding the constrained SOC problem
\begin{equation}
\label{eq:hard-constrained-soc}
\begin{aligned}
    \min_{u\in\mathcal{A}}
    \quad &
    \EE\left[
        \int_0^1
        \frac{1}{2}\norm{u_t(X_t^u)}^2\,dt
        +
        C'(X_1^u)
    \right] \\
    \text{s.t.}\quad
    &
    dX_t^u
    =
    \left[
        f_t(X_t^u)
        +
        g(t)u_t(X_t^u)
    \right]dt
    +
    g(t)dW_t,
    \qquad
    X_0^u\sim p_0 .
\end{aligned}
\end{equation}

Now let $C'(x)=C(x)+\iota_{\mathcal{S}}(x)$. Since
\[
    \EE[C'(X_1)]
    =
    \EE[C(X_1)]
    +
    \EE[\iota_{\mathcal{S}}(X_1)],
\]
any finite-cost solution must satisfy $X_1\in\mathcal{S}$ almost surely. Therefore, the extended-valued terminal cost is equivalently represented as an explicit terminal constraint, yielding
\begin{equation}
\label{eq:reverse-hard-constrained-soc}
\begin{aligned}
    \min_{u\in\mathcal{A}}
    \quad &
    \EE_{\mathbb{P}^u}
    \left[
        C(X_1)
        +
        \int_0^1
        \frac{1}{2}\norm{u_t(X_t)}^2\,dt
    \right] \\
    \mathrm{s.t.}\quad
    &
    dX_t^u
    =
    \left[
        f_t(X_t^u)
        +
        g(t)u_t(X_t^u)
    \right]dt
    +
    g(t)dW_t,
    \qquad
    X_0^u\sim p_0 \\
    &
    X_1\in\mathcal{S}
    \qquad
    \mathbb{P}^u\text{-a.s.}
\end{aligned}
\end{equation}

\section{Algorithm derivation}
\label{sec:algorithm derivation - Appendix}
In this section, we provide the full derivation of the algorithm, highlighting all the approximations employed to reach a computationally tractable formulation. For a self-contained derivation, we start by repeating the terminally constrained stochastic optimal control problem.

\begin{problem*}[Continuous-time stochastic optimal control problem with terminal constraints]
\label{prob:soc problem recall}
\textit{Given reference dynamics, constraint set $\mathcal{S}$, terminal cost $C$, and cost weight $\lambda$, solve for the optimal control drift $u^\star$:}
\par\noindent
\begin{equation}
\label{eq:soc problem continuous}
\begin{aligned}
\min_{u}
\quad
& \EE \left[\lambda \, C(X^u_0) + \frac{1}{2} \int_0^1 \norm{u_t(X^u_t)}_2^2 \,dt \right] \\
\text{s.t.}\quad
& d X^u_t = \left[ \tilde{f}_t^\theta(X^u_t) + g(t) u_t(X^u_t) \right]dt + g(t)d\bar{W}_t, \quad X_1 \sim \pprior \ \\
& X^u_0 \in \mathcal S
\qquad
\mathbb{P}^u\text{-a.s.}
\end{aligned}
\end{equation}
\end{problem*}
where the reverse drift is given by the learned score function $ \tilde{f_t}^\theta (X_t) = f_t(X_t) - g(t)^2 s^\theta_t(X_t)$. 

For the numerical integration of the reverse-time dynamics, let $N \in \N$ denote the number of discretization steps, and let $0 = t_0 < t_1 < ... < t_{N-1} < t_{N} = 1$ be the time grid used for sampling from the diffusion model. Since diffusion sampling proceeds backward in time, we define the uncontrolled reverse update from \(t_i\) to \(t_{i-1}\) by
\[
    X_{i-1} = \Phi_i^\theta(X_i,\varepsilon_i),
    \qquad i = 1,\dots,N,
\]
where $ \Phi_i^\theta : \mathbb{R}^d \times \mathbb{R}^d \to \mathbb{R}^d $ is the \(i\)-th denoising transition map, and $\varepsilon_i \sim \mathcal{N}(0,I_d) $ are independent across \(i\). The maps \(\Phi_i^\theta\) are left abstract in order to accommodate different numerical sampling schemes. Moreover, as is commonly done to reduce discretization-induced noise in the
final sample, we assume that the last denoising step is deterministic and
returns the model's data prediction without injecting additional noise. That is,
\[
   X_0 = \Phi^\theta_1(X_1)
       = \hat{x}^\theta_0(X_1,t_1).
\]

For the SOC surrogate, we approximate the controlled dynamics by applying an Euler discretization to the control input over each interval \([t_{i-1},t_i]\). This yields
\begin{equation}
\label{eq:controlled-discrete-dynamics}
    X^u_{i-1}
    =
    \Phi^\theta_i(X^u_i,\varepsilon_i)
    +
    g_i u_i(X^u_i)\Delta t_i,
    \qquad i=1,\dots,N,
\end{equation}
where \(g_i = g(t_i)\), \(\Delta t_i = t_i - t_{i-1}\), and \(\varepsilon_i \sim \mathcal{N}(0,I_d)\). After discretizing the control energy, we obtain the discrete-time formulation of the terminally constrained stochastic optimal control problem, stated in Problem~\ref{prob:soc problem discrete}.

\begin{problem}[Discrete-time stochastic optimal control problem with terminal constraints]
\label{prob:soc problem discrete}
\textit{Given reference drift $\tilde{f}^\theta_t$, constraint set $\mathcal{S}$, and cost weight $\lambda$, solve for the control functions $\{u_i^\star\}_{i=1}^N$:}
\par\noindent
\begin{equation}
\label{eq:soc problem discrete}
\begin{aligned}
\min_{\{u_i\}^N_{i=1}}
\quad
& \EE \left[\lambda \, C(X^u_0) + \frac{1}{2} \sum_{i=1}^N \norm{u_i(X^u_i)}_2^2 \Delta t_i \right] \\
\text{s.t.}\quad
& X^u_{i-1} = \Phi^\theta_i(X^u_i, \eps_i) + g_i\,  u_i(X^u_i) \, \Delta t_i, \quad \eps_i \sim \Normal\left(0, I_d\right), \quad \forall i, \; 1 \le i \le N, \\
& X^u_0 \in \mathcal S
\qquad
\mathbb{P}_d^u\text{-a.s.}
\end{aligned}
\end{equation}
\end{problem}
where $\mathbb{P}_d^u$ is the probability measure over state tuples induced by the controlled dynamics.

Although the preceding discretization yields a finite-horizon control problem, the optimization problem remains largely intractable in its full generality. First, the control variables are defined over the full state space \(\R^d\), while the objective involves an expectation over all realizations of the controlled stochastic dynamics. Moreover, terminal costs and hard terminal constraints induce temporal coupling across the full reverse process: the effect of an early control perturbation on the terminal state is mediated by a long composition of neural-network denoising maps. 

Rearranging the controlled dynamics, we can express the control action at each step in terms of the deviation from the uncontrolled denoising update.\footnote{We also simplify the notation and suppress the superscript \(u\) and write \(X_i\)
instead of \(X_i^u\) for controlled states.} Namely,
\begin{equation}
\label{eq:state-action-relation}
    u_i = \frac{X_{i-1} - \bar{X}^{\varepsilon_i}_{i-1}}{g_i \Delta t_i},
    \qquad
    \bar{X}^{\varepsilon_i}_{i-1} := \Phi^\theta_i(X_i,\varepsilon_i),
    \qquad i=1,\dots,N.
\end{equation}

Second, motivated by model predictive control~\citep{rawlings2017modelpredictive}, we replace the full-horizon optimization by a one-step look-ahead surrogate. Specifically, at step \(i\), the transition from \(X_i\) to \(X_{i-1}\) is evaluated using the original denoising update, while the cost-to-go over the remaining reverse process is approximated using the model's predicted clean sample. This yields the proxy
\begin{equation}
    \mathbb{E}\!\left[C(X_0) \mid X_{i-1} \right]
    \approx
    C\!\left(\mathbb{E}[X_0 \mid X_{i-1}]\right)
    \approx
    C\!\left(\hat{x}_0^\theta(X_{i-1},t_{i-1})\right),
    \qquad i=1,\dots,N.
\end{equation}

The total discrete-time SOC cost-to-go from step \(i\) is therefore approximated by retaining only the one-step control effort and replacing the remaining terminal cost by the Tweedie-based prediction. Namely, we use the surrogate
\begin{equation}
\label{eq:total-cost-estimation}
\mathbb{E}\left[\lambda C(X_0) + \frac{1}{2}\sum_{k=1}^{i} \|u_k(X_k)\|_2^2 \Delta t_k \,\middle|\, X_i \right] \approx\lambda C\!\left(\hat{x}_0^\theta(X_{i-1},t_{i-1})\right) + \frac{1}{2g_i^2\Delta t_i} \norm{X_{i-1} - \bar{X}^{\varepsilon_i}_{i-1}}_2^2 .
\end{equation}

Similarly, we enforce constraint satisfaction on the predicted clean sample \(\hat{x}_0^\theta(X_{i-1},t_{i-1})\) as a proxy for terminal feasibility, thereby replacing the full joint optimization over all control variables with a sequence of one-step subproblems. At each denoising step, we first compute \(\bar{X}^{\varepsilon_i}_{i-1}=\Phi_i^\theta(X_i,\varepsilon_i)\), with \(\varepsilon_i\sim\mathcal{N}(0,I_d)\), and correct the uncontrolled estimate by solving:
\begin{equation}
\label{eq:inner-subproblem}
\begin{aligned}
X^\star_{i-1} \in \operatorname*{arg\,min}_{X_{i-1}} \quad
& \lambda C(\hat{X}_{0|i-1})
+ \frac{1}{2g_i^2\Delta t_i}
\|X_{i-1}-\bar{X}^{\varepsilon_i}_{i-1}\|_2^2 \\
\text{s.t.} \quad
& \hat{X}_{0|i-1}\in\mathcal{S}, \qquad
\hat{X}_{0|i-1}=\hat{x}_0^\theta(X_{i-1},t_{i-1}) .
\end{aligned}
\end{equation}

Each one-step subproblem still contains a nontrivial coupling between the optimization variable \(X_{i-1}\) and the predicted clean sample \(\hat{X}_{0|i-1}:=\hat{x}_0^\theta(X_{i-1},t_{i-1})\). This coupling enters both the terminal cost and the feasibility constraint \(\hat{X}_{0|i-1}\in\mathcal{S}\). Therefore, even when \(C\) and \(\mathcal{S}\) have simple structure in the data domain, their pullback through the neural denoising map \(\hat{x}_0^\theta(\cdot,t_{i-1})\) may define a highly nonlinear and nonconvex optimization problem in the latent variable \(X_{i-1}\). For diffusion policies with tentatively millions of parameters, repeatedly embedding neural-network evaluations within the subproblem can introduce substantial computational overhead and may limit the practicality of direct latent-space optimization with generic off-the-shelf solvers.
To phrase the optimization in the data domain, we approximate the latent displacement \(X_{i-1}-\bar{X}^{\varepsilon_i}_{i-1}\) through the corresponding difference between predicted clean samples. The key step is to approximately invert Tweedie's formula by a fixed-point scheme. Rearranging \eqref{eq:tweedie formula} at time \(t_{i-1}\) gives
\begin{equation}
\label{eq:tweedie-rewritten}
    X_{i-1} = \alpha_{t_{i-1}}\hat{x}_0^\theta(X_{i-1},t_{i-1}) - \sigma_{t_{i-1}}^2 s_{t_{i-1}}^\theta(X_{i-1}).
\end{equation}
Now fix \(y=\hat{x}_0^\theta(X_{i-1},t_{i-1})\). Inverting Tweedie's estimate for this prescribed clean prediction amounts to finding \(X_{i-1}\) such that
\begin{equation}
\label{eq:fixed-point-map}
    X_{i-1}=F_{t_{i-1}}^\theta(X_{i-1};y), \qquad F_{t_{i-1}}^\theta(x;y):=\alpha_{t_{i-1}}y-\sigma_{t_{i-1}}^2s_{t_{i-1}}^\theta(x).
\end{equation}

Assuming \(F_{t_{i-1}}^\theta(\cdot;y)\) is a contraction, the Banach fixed-point theorem guarantees a unique fixed point, which can be obtained by repeated application of \(F_{t_{i-1}}^\theta(\cdot;y)\):
\begin{equation}
\label{eq:fixed-point-limit}
    X^\star_{t_{i-1}}(y) = \lim_{k\to\infty} \left(F_{t_{i-1}}^\theta(\cdot;y)\right)^{(k)}(X^{(0)}).
\end{equation}

In practice, we initialize the fixed-point inversion at the uncontrolled proposal \(\bar{X}^{\varepsilon_i}_{i-1}\), which provides a natural estimate of the latent state before control is applied. Applying one fixed-point iteration gives
\begin{equation}
\label{eq:one-fixed-point-iterate}
    X_{i-1} \approx F^\theta_{t_{i-1}}(\bar{X}^{\varepsilon_i}_{i-1};y)
    = \alpha_{t_{i-1}}y - \sigma_{t_{i-1}}^2 s^\theta_{t_{i-1}}(\bar{X}^{\varepsilon_i}_{i-1}).
\end{equation}
Combining this approximation with \eqref{eq:tweedie-rewritten} applied to
\(\bar{X}^{\varepsilon_i}_{i-1}\) yields
\begin{equation}
\label{eq:state-differences}
\begin{aligned}
X_{i-1}-\bar{X}^{\varepsilon_i}_{i-1}
&\approx
\alpha_{t_{i-1}}
\left(y-\hat{x}_0^\theta(\bar{X}^{\varepsilon_i}_{i-1},t_{i-1}) \right) \\
&=
\alpha_{t_{i-1}}
\left(\hat{x}_0^\theta(X_{i-1},t_{i-1})-\hat{x}_0^\theta(\bar{X}^{\varepsilon_i}_{i-1},t_{i-1})\right).
\end{aligned}
\end{equation}

Thus, the latent displacement can be approximated by a scaled discrepancy between Tweedie clean-sample predictions. This provides a gradient-free correction direction in latent space and allows the control penalty to be expressed in the data domain. Substituting \eqref{eq:state-differences} into the one-step surrogate yields the final receding-horizon formulation in Problem~\ref{prob:final-problem}, where the constrained optimization is carried out in the data domain rather than through repeated neural-network evaluations inside the solver.

\begin{problem}[DiRecT: receding-horizon formulation]
\label{prob:final-problem}
\textit{Given a sampling scheme \(\{\Phi_i^\theta\}_{i=1}^N\), constraint set
\(\mathcal{S}\), cost weight \(\lambda\), and initial sample
\(X_N^\star\sim p_{\mathrm{prior}}\), solve the following recursion for
\(i=N,\ldots,1\):}
\begin{equation}
\label{eq:final-problem}
\left\{
\begin{aligned}
\bar{X}^{\varepsilon_i}_{i-1}
&= \Phi_i^\theta(X_i^\star,\varepsilon_i), \qquad
\varepsilon_i\sim\mathcal{N}(0,I_d), \\
\tilde{X}_{0|i-1}
&= \hat{x}_0^\theta(\bar{X}^{\varepsilon_i}_{i-1},t_{i-1}), \\
\hat{X}^\star_{0|i-1}
&\in \operatorname*{arg\,min}_{\hat{X}_{0|i-1}}
\lambda \, C(\hat{X}_{0|i-1})
+
\frac{\alpha_{t_{i-1}}^2}{2g_i^2\Delta t_i}
\|\hat{X}_{0|i-1}-\tilde{X}_{0|i-1}\|_2^2
\quad \text{s.t.}\quad
\hat{X}_{0|i-1}\in\mathcal{S}, \\
X^\star_{i-1}
&=
\begin{cases}
\bar{X}^{\varepsilon_i}_{i-1}
+
\alpha_{t_{i-1}}
(\hat{X}^\star_{0|i-1}-\tilde{X}_{0|i-1}),
& i>1, \\
\hat{X}^\star_{0|0},
& i=1 .
\end{cases}
\end{aligned}
\right.
\end{equation}
\end{problem}

Although several approximations are introduced to reduce the general stochastic optimal control problem to a tractable form, terminal constraint satisfaction is not relaxed, as formalized in Proposition~\ref{prop:terminal-feasibility}.


\begin{remark}[Terminal convention]
At the final denoising step, we return the optimized data-domain prediction \(\hat{X}^\star_{0|0}\) directly, rather than applying the latent correction formula. This convention ensures exact terminal feasibility whenever the final subproblem is solved feasibly. It is also convenient in practical implementations, since common noise schedules, such as cosine schedules~\citep{nichol2021improved}, need not satisfy \(\alpha_{t_0}=1\) exactly.
\end{remark}

\section{Implementation details}
\label{sec:experimental details - Appendix}
\paragraph{Hardware.} All \texttt{Maze2D} and \texttt{D3IL} experiments were performed on a compute node equipped with two Intel(R) Xeon(R) Gold 5218 @ 2.30GHz CPUs and six NVIDIA Quadro RTX 8000 48GB, \texttt{MRMP} on a node with two Intel Xeon E5-2670 v2 @ 2.50 GHz CPUs and one NVIDIA Tesla V100-SXM2 32 GB GPU, and \texttt{PushT} on a machine equipped with two AMD EPYC 75F3 32-Core Processor CPUs and four NVIDIA A100 GPUs. All computations for training, model evaluation, and gradient calculation were performed on a single GPU, whereas environment rollouts and IPOPT optimizations were performed on a CPU.
\paragraph{Software.} All experiments are based on PyTorch~\citep{paszke2017automatic} for automatic differentiation and JAX~\citep{jax2018github} for our custom multi-agent path-finding optimizer. Moreover, \texttt{MRMP} and \texttt{PushT} are based on \textsc{Projected Coupled Diffusion}~\citep{luan2026projected}~\footnote{\url{https://github.com/EdmundLuan/pcd}} (MIT), which are themselves extensions of \textsc{Diffusion Policy}~\citep{chi2025diffusion}~\footnote{\url{https://github.com/real-stanford/diffusion_policy}}(MIT), \textsc{MMD}~\citep{shaoul2025multirobot}~\footnote{\url{https://github.com/yoraish/mmd}}(MIT), and \textsc{LTLDoG}~\citep{feng2024ltldog}~\footnote{\url{https://github.com/clear-nus/ltldog}}(MIT). We use \textsc{CasADi}~\citep{Andersson2019} (GNU LGPL v3.0) and \textsc{IPOPT}~\citep{wachter2006implementation} (EPL-2.0) for \emph{off-the-shelf} optimization.
\subsection{Safe maze navigation}
\label{sec:constrained navigation in Maze2d - Appendix details}
\paragraph{Training.} We train a \emph{continuous-time} diffusion model with a standard temporal UNet architecture~\citep{jia2024towards} and horizon $H = 384$. The model outputs the state-action pair trajectory \((s_0, a_0, s_1, a_1, \ldots, s_{H-1}, a_{H-1}) \in \R^{6H}\) conditioned on given start and end states. We employ data-prediction parametrization and a cosine noise schedule with offset \(s=0.008\)~\citep{nichol2021improved}. We train the model with Adam~\citep{kingma2014adam} for $500{,}000$ steps with a batch size of $256$ on the D4RL \texttt{maze2d-large-v1}~\citep{fu2020d4rl}~\footnote{\url{https://github.com/Farama-Foundation/D4RL}} dataset (CC BY 4.0), scaled by min-max normalization. We use a learning rate of $0.001$ with a cosine annealing schedule for the first $10{,}000$ steps. We save checkpoints using an exponential moving average of the model weights with decay $0.995$.
\paragraph{Environment details.} We introduce obstacles as ellipses and super-ellipses centered in $(c_x, c_y)$, semi-axes $(r_x, r_y)$, and order $p$:
\begin{equation}
    \left| \frac{x - c_x}{r_x} \right|^p + \left| \frac{y - c_y}{r_y} \right|^p \ge 1
\label{eq:maze2d obstacle definition}
\end{equation}
For both \texttt{Broad} and \texttt{Narrow} variants, we follow the obstacle placement in \citep{xiao2025safediffuser}. To enforce dynamics constraints we fit by ordinary least-squares regression the normalized transitions from the training data for a total of 3,993,503 across 1,062 episodes. For simplicity, although equivalent, we separate the state and action coefficients with the following functional form $s' = A \cdot s + B \cdot a + c$, where each transition is defined by the tuple $(s, a, s')$. When enforcing dynamic constraints, we impose the fitted dynamics as \emph{strict} equality constraints over the planning horizon. Although the per-step least squares residuals are contained, open-loop execution over the full horizon remains impractical. In this regime, predicted and executed trajectories may diverge, hindering proper evaluation of the planner. To mitigate this issue, we perform policy rollout by tracking the generated trajectory with a proportional-derivative (PD) controller with constants $P=5, D=1$. 
\paragraph{Evaluation.} We evaluate each method over $100$ i.i.d. generated samples conditioned on the same fixed endpoints. For each sample, we execute a rollout up to $800$ environment steps, at which the episode is terminated. For all methods we denoise with a DDPM sampler~\citep{ho2020denoisingdiffusionprobabilisticmodels} on a uniform time discretization. We follow standard practice and perform the last denoising iteration by returning the model prediction. To encourage shorter paths, when applicable, the cost function $C(x)$ is chosen to be the squared path length of the generated trajectory. We now briefly describe the implementation details of each method, as well as their adaptation to include equality dynamic constraints:
\begin{itemize}
    \item \textbf{Diffuser}~\citep{janner2022planningdiffusionflexiblebehavior}. We employ $32$ denoising iterations as an unguided reference for the base pretrained model. No obstacles or dynamic constraints are imposed.
    \item \textbf{Gradient Guidance}~\citep{chung2023diffusionposterior}. We employ $32$ denoising iterations and guide the noisy latent $x_t$ by computing the gradient of the cost function using posterior sampling as $\nabla C'(\hat{x}_{0,\theta}(x_t))$, where constraint violations are added as a penalty term in the cost function $C'(x) = C(x) + C_{\mathrm{penalty}}(x)$. To impose dynamic constraints, we penalize the squared residuals of each predicted transition along the prediction horizon by adding them as an additional cost penalty.
    \item \textbf{Projection}~\citep{christopher2024constrained}. We employ $32$ denoising steps and project the second half of the generated latents onto the feasible set. Dynamic constraints are introduced directly in the optimization process, which is solved with IPOPT.
    \item \textbf{Augmented Lagrangian}~\citep{zhang2025constraineddiffuserssafeplanning}. We adapt the original implementation to the continuous-time case by scaling hyperparameters to generate equivalent guided transitions. We use $256$ sampling steps with an additional $200$ iterations for the last denoising step. We use the original implementation for handling dynamic equality constraints.
    \item \textbf{SafeDiffuser-RoS}~\citep{xiao2025safediffuser}. We adapt the original implementation to the continuous-time case by scaling hyperparameters to generate equivalent guided transitions with $256$ sampling steps. We introduce dynamic constraints directly into the quadratic program. The resulting optimization problem is solved with IPOPT.
    \item \textbf{DiRecT}. We employ $32$ denoising steps and optimize over the second half of the generative process. We incorporate dynamic constraints directly into the receding-horizon subproblem, which is solved with IPOPT.
\end{itemize}
\subsection{Safe robotic manipulation}
\label{sec:d3il constrained manipulation in d3il - Appendix details}
We train a \emph{continuous-time} diffusion model with a temporal UNet architecture~\citep{jia2024towards} and horizon $H = 16$. The model outputs the state-action pair trajectory $(s_0, a_0, s_1, a_1, \ldots, s_{H-1}, a_{H-1}) \in \R^{6H}$ over the entire prediction horizon. We employ data-prediction parametrization and a cosine noise schedule with offset $s=0.008$~\citep{nichol2021improved}. We train the model with Adam~\citep{kingma2014adam} for $500{,}000$ steps with a batch size of $32$ on the D3IL \texttt{avoiding} dataset~\citep{jia2024towards}~\footnote{\url{https://github.com/ALRhub/d3il}} (MIT), scaled by min-max normalization. We use a learning rate of $2 \times 10^{-4}$ with a cosine annealing schedule for the first $100{,}000$ steps. We save checkpoints using an exponential moving average of the model weights with decay $0.995$.

\paragraph{Environment details.} The base D3IL \texttt{avoiding} task presents six circular pillars that must be avoided by the end-effector. The unguided diffusion policy learns to dodge these obstacles directly from human demonstrations. To assess performance over \emph{unseen} constraints, we restrict planning by increasing the radius of the first pillar and limiting side-movements by two planar regions. Dynamics constraints are imposed by least-squares fitting of the $7209$ training transitions across $96$ demonstrations, with the same procedure for Maze2D (see Appendix~\ref{sec:constrained navigation in Maze2d - Appendix details}).

\paragraph{Evaluation.} We evaluate each method over $100$ i.i.d. generated samples conditioned on a fixed initial position and with zero starting velocity. For each sample, we execute a rollout up to $100$ environment steps with a planning horizon $H=16$ and execution horizon $M=4$. We terminate the episode when the end-effector collides with a pillar, while we continue rollouts for violations of \emph{test-time} constraints. All other implementation details follow the procedure for  Maze2D (see Appendix~\ref{sec:constrained navigation in Maze2d - Appendix details}).

\subsection{Safe multi-robot motion planning}
\label{sec:safe multi-robot motion planning - Appendix details}
\paragraph{Problem setup.} In this task, multiple agents' trajectories in a common environment are generated in a collision-free and velocity-constrained manner. We test our method on the four environments from the MMD~\citep{shaoul2025multirobot} benchmark. A diffusion model is trained for each environment on single-agent trajectories showing a specific motion pattern. In the unconstrained setting, the objective is to generate trajectories $\traj \in \R^{H \times 2}$ conditioned on given start and target endpoints $(b, e) \in \R^{2 \times 2}$ presenting the desired motion behaviors for each MMD environment (Figure~\ref{fig:mrmp envs}):
\begin{itemize}
\item \texttt{Empty} is the simplest map, where demonstrations consist of straight-line motions and robots are penalized for deviating from a direct path.
\item \texttt{Highways} contains a central block that induces a warehouse-like traffic pattern, where agents are expected to move \emph{counterclockwise} around the obstacle.
\item \texttt{Conveyor} consists of two narrow corridors. Agents are rewarded for moving leftward in the top corridor and rightward in the bottom corridor.
\item \texttt{Drop-Region} features an obstacle-free central space and four drop-off regions. Agents are rewarded for pausing near the midpoint of any of the four square regions, emulating package-delivery tasks.
\end{itemize}
\setlength{\panelheight}{3.5cm}
\begin{figure}[t]
    \centering
    \begin{subfigure}[t]{0.24\linewidth}
        \centering
        \includegraphics[height=\panelheight,keepaspectratio]{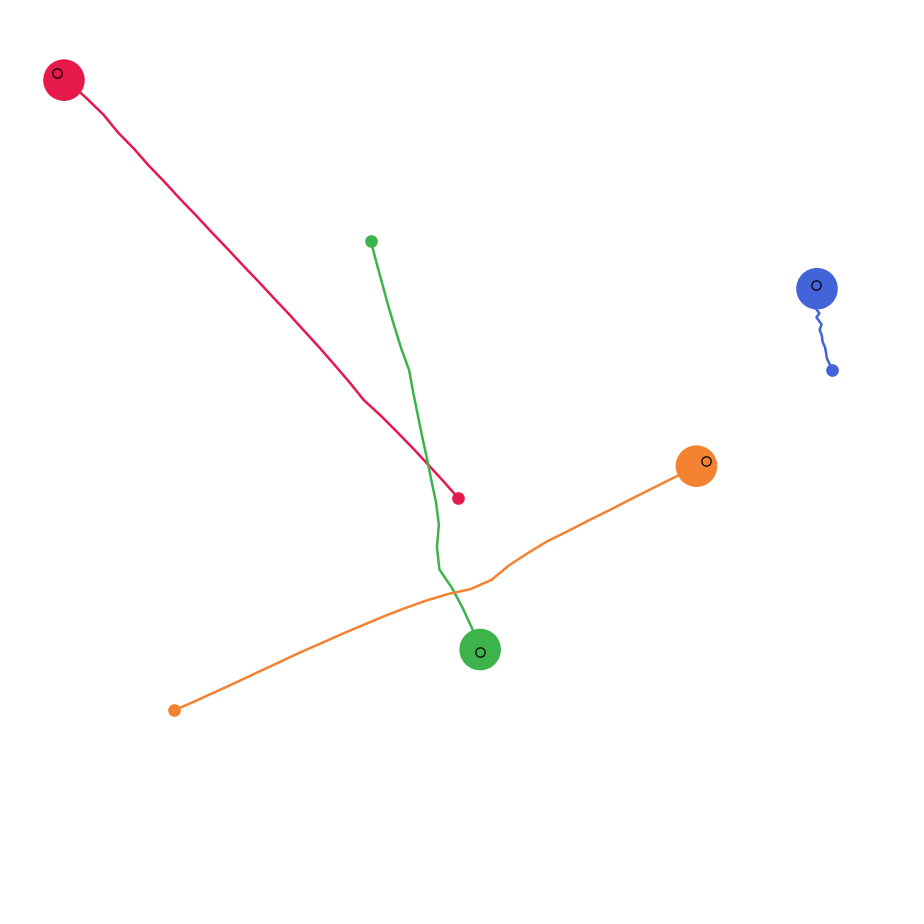}
        \caption{\texttt{Empty}}
        \label{fig:mrmp-empty}
    \end{subfigure}
    \hfill
    \begin{subfigure}[t]{0.24\linewidth}
        \centering
        \includegraphics[height=\panelheight,keepaspectratio]{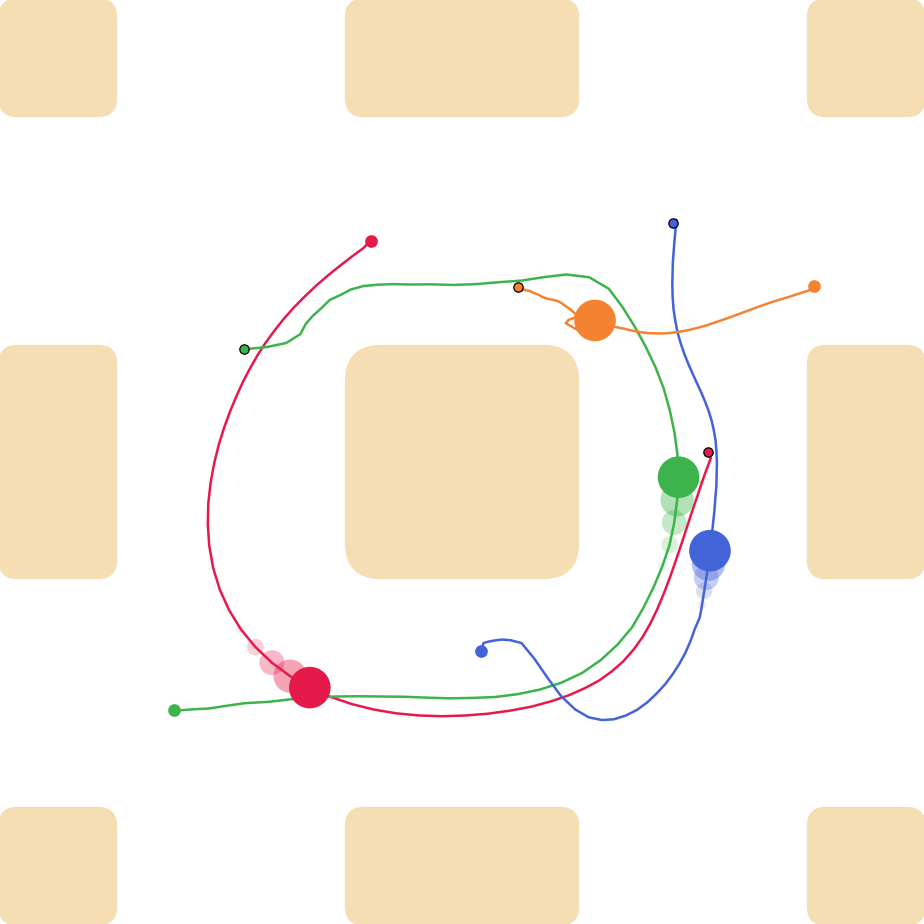}
        \caption{\texttt{Highways}}
        \label{fig:mrmp-highways}
    \end{subfigure}
    \hfill
    \begin{subfigure}[t]{0.24\linewidth}
        \centering
        \includegraphics[height=\panelheight,keepaspectratio]{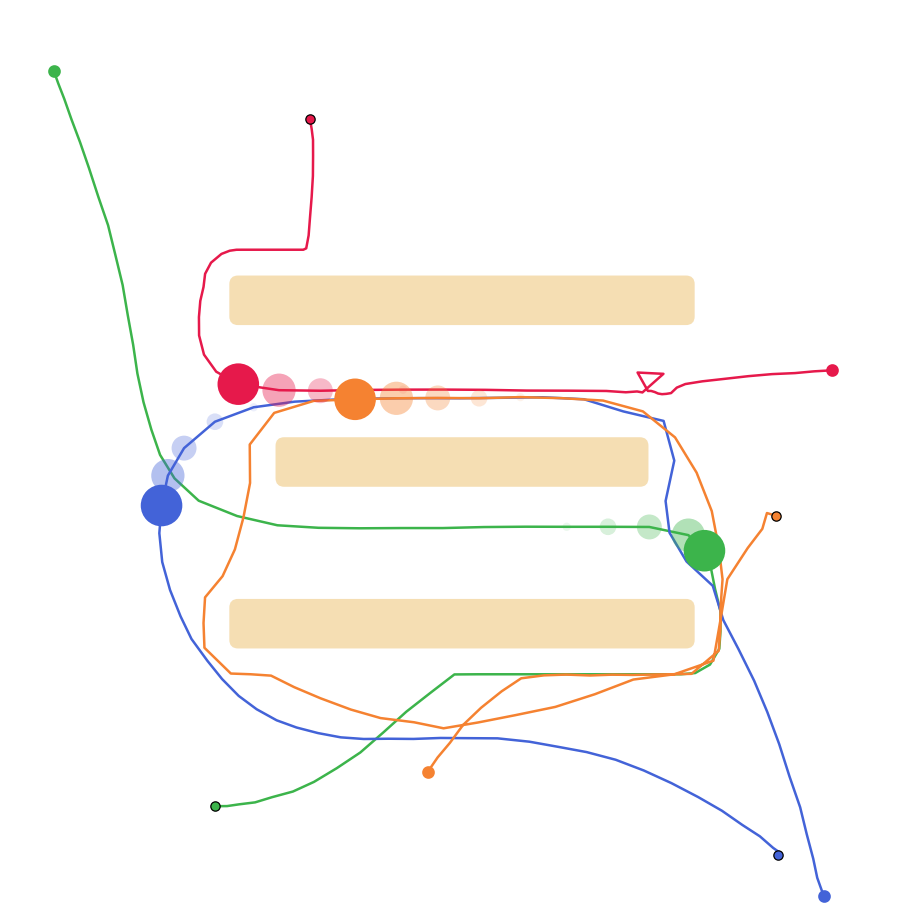}
        \caption{\texttt{Conveyor}}
        \label{fig:mrmp-conveyor}
    \end{subfigure}
    \hfill
    \begin{subfigure}[t]{0.24\linewidth}
        \centering
        \includegraphics[height=\panelheight,keepaspectratio]{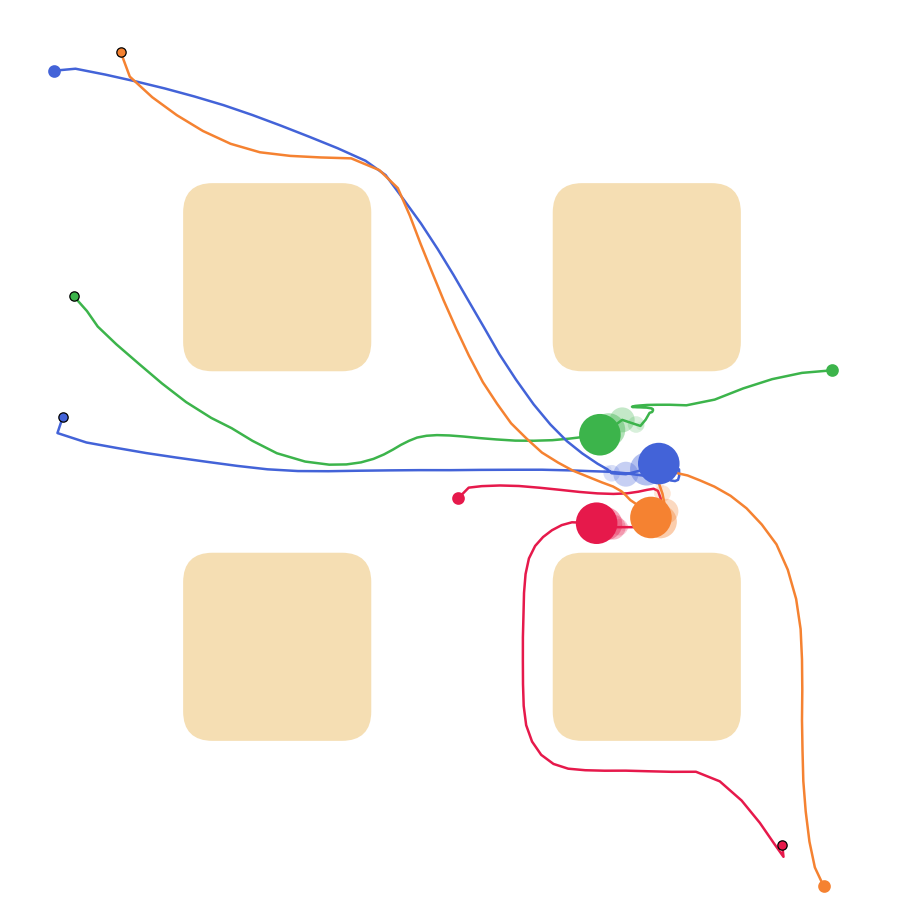}
        \caption{\texttt{Drop-Region}}
        \label{fig:mrmp-dropregion}
    \end{subfigure}
    \caption{Four major MMD environments with feasible optimized solutions. Robots navigate toward their goal positions while avoiding inter-robot and obstacle collisions and satisfying velocity constraints. Each environment exhibits a characteristic motion pattern that should be preserved.}    \label{fig:mrmp envs}
\end{figure}
Following the coupled trajectory-generation setup in~\citep{luan2026projected}, we seek to generate $N_a$ agent trajectories \emph{simultaneously}. The generated trajectories must avoid inter-robot and robot-obstacle collisions, satisfy \emph{test-time} velocity restrictions, and exhibit the desired MMD motion pattern.

Let $[K]$ denote $\{1, ..., K\}$ for any natural number $K \in \N$. We denote a generated sample of trajectories by $\traj = \left(\traj_1, ..., \traj_{N_a} \right) \in \R^{N_a \times H \times 2}$, where $\traj^j_i$ is the $j$-th predicted step for the $i$-th agent. Given an optimization objective $\mathcal{J}: \R^{N_a \times H \times 2} \rightarrow \R$ and starting locations $\{b_i\}_{i \in [N_a]}$ for each robot, the general MRMP problem is formulated as~\citep{liang2025simultaneous, shaoul2025multirobot}
\begin{subequations}
\label{eq:mrmp problem}
\begin{align}
    \min_{\traj \in \mathbb{R}^{N_a \times H \times 2}}
    \quad & \mathcal{J}(\traj) 
    \label{eq:mrmp_obj} \\
    \text{s.t.}\quad
    & \norm{\traj_i^{k+1} - \traj_i^k}_2 \le v_{\max}\Delta t,
    && \forall i \in [N_a],\ \forall k \in [H-1],
    \label{eq:mrmp_velocity} \\
    & \norm{\traj_i^{1} - b_i}_2 \le v_{\max}\Delta t,
    && \forall i \in [N_a],
    \label{eq:mrmp_initial_velocity} \\
    & \norm{\traj_i^k - \traj_j^k}_2 \ge 2\delta,
    && \forall i \neq j,\ \forall k \in [H],
    \label{eq:mrmp_inter_robot} \\
    & d(\traj_i^k,\mathcal{O}) \ge \delta,
    && \forall i \in [N_a],\ \forall k \in [H].
    \label{eq:mrmp_obstacle}
\end{align}
\end{subequations}
where $\delta$ is the robot radius, $v_{\max}$ is the test-time maximum allowable velocity, $\Delta t$ is the environment time increment between consecutive generated steps, and $d(\cdot, \mathcal{O}): \R^2 \rightarrow \R$ is the distance function to the obstacle set. Let $\Omega_V$, $\Omega_{CR}$, and $\Omega_{CO}$ denote the feasible sets associated with the velocity, inter-robot collision-avoidance, and robot-obstacle collision-avoidance constraints, respectively. We define the feasible set of Problem~\ref{eq:mrmp problem} as $\Omega = \Omega_V \cap \Omega_{CR} \cap \Omega_{CO}.$
The data adherence objective is implicitly specified by the single-agent diffusion policy. In this setting, Problem~\ref{eq:mrmp problem} can be naturally interpreted as constrained sampling from a pretrained diffusion model applied to the concatenation of all agent trajectories. More explicitly, given a diffusion model trained for a single agent, we can generate $N_a$ independent trajectories by sampling from the product distribution
\[
    \traj \sim \pdata^\star(\traj) = \prod_{i=1}^{N_a} \pdata(\traj_i).
\]
Therefore, the corresponding concatenated denoiser $\hat{x}^\star_{0,\theta}(\traj) = \bigoplus_{i=1}^{N_a} \hat{x}_{0,\theta}(\traj_i)$
is equivalent to a pretrained diffusion model that interpolates between the prior
$\Normal\left(0, \I_{N_a \times H \times 2}\right)$
and the product distribution $\pdata^\star$.
To impose \emph{test-time} constraints within our framework, we define a \emph{projection} operator onto the feasible set $\Omega$. This differs from the coupled diffusion framework of~\citep{luan2026projected} where only velocity limits are enforced exactly: rather than relying on soft collision penalties, we impose the nonconvex collision-avoidance constraints as hard constraints in the constrained optimization problem.
\paragraph{Evaluation details.}
We use the pretrained checkpoints provided by~\citep{luan2026projected}, which are trained with a \texttt{Diffusion Policy}~\citep{chi2025diffusion} backbone on the four MMD environments. For each environment, we impose a maximum allowable agent velocity, computed from forward differences of trajectory positions. For all methods, we generate samples using $100$ DDPM~\citep{ho2020denoisingdiffusionprobabilisticmodels} denoising steps, applying guidance during the second half of the denoising process.
We evaluate each method over $100$ randomly selected initial configurations. For each configuration, we generate a batch of $128$ candidate trajectories in parallel. Following~\citep{luan2026projected}, we define \emph{Success Rate} as the fraction of trials for which at least one trajectory in the generated batch is collision-free. In contrast, \emph{Constraint Safety} measures the average feasibility rate over \emph{all} generated trajectories in terms of both collision and kinematic constraints.
\subsubsection{Custom MRMP optimizer}
\label{sec:custom MRMP optimizer}
\paragraph{Problem setup.} We explicitly state the projection problem solved by the MRMP optimizer. Let
$X \in \mathbb{R}^{N_a \times H \times 2}$ denote the optimized trajectory tuple,
with $X_i^k \in \mathbb{R}^2$ denoting the position of agent $i$ at waypoint $k$.
Let $\hat X \in \mathbb{R}^{N_a \times H \times 2}$ denote the Tweedie
clean-trajectory estimate produced by the diffusion model. The proximal safe
trajectory is computed as the solution of
{
\setlength{\jot}{2pt}
\begin{subequations}
\label{eq:mrmp_projection}

\begin{equation}
\label{eq:mrmp_projection_obj}
X^\star = \argmin_{X\in\mathbb{R}^{N_a\times H\times 2}}
\quad
\frac{1}{2}\norm{X-\hat X}_F^2
+ \frac{\lambda_{\rm sm}}{2}
\sum_{i=1}^{N_a}\sum_{k=1}^{H-1}
\norm{X_i^{k+1}-X_i^k}_2^2 .
\end{equation}

\vspace{-0.8em}

\begin{alignat}{2}
\text{s.t.}\quad
& \norm{X_i^1-b_i}_2 \le v_{\max}\Delta t,
&\quad& \forall i\in[N_a],
\label{eq:mrmp_projection_start} \\
& \norm{X_i^{k+1}-X_i^k}_2 \le v_{\max}\Delta t,
&& \forall i\in[N_a],\ \forall k\in[H-1],
\label{eq:mrmp_projection_velocity}\\
& \norm{X_i^k-X_j^k}_2 \ge 2\delta,
&& \forall i\neq j,\ \forall k\in[H],
\label{eq:mrmp_projection_inter_robot}\\
& d(X_i^k,\mathcal{O}) \ge \delta,
&& \forall i\in[N_a],\ \forall k\in[H].
\label{eq:mrmp_projection_obstacle}
\end{alignat}

\end{subequations}
}
The first term keeps the projected tuple close to the diffusion model's Tweedie estimate $\hat X$, while the smoothness term with strength $\lambda_{\rm sm}$ optionally encourages straight-line trajectories. The constraints enforce initial-state consistency, inference-time velocity limits, inter-robot separation, and robot-obstacle clearance. Thus, the optimizer defines a (proximal) projection $X^\star=\Pi_\Omega(\hat X)$ onto the feasible set $\Omega=\Omega_V\cap\Omega_{CR}\cap\Omega_{CO}$.
\newline \newline
Different approaches are possible for solving this problem. A first possibility is to employ an off-the-shelf nonlinear solver such as IPOPT. However, general-purpose nonlinear solvers are typically designed around CPU-based sparse linear algebra and do not provide a plug-and-play GPU implementation suitable for extensive sweeps and batched inference. An alternative is to relax the nonconvex collision constraints and optimize the resulting objective with Lagrangian or augmented-Lagrangian methods ~\citep{liang2025simultaneous}. However, since the collision constraints enter the objective through nonlinear penalty or multiplier terms, the resulting optimization can remain nonconvex and still requires a general-purpose CPU-based nonlinear solver. Inspired by the ADMM projector in~\citep{luan2026projected}, given the convexity of the velocity constraints and the spatially local structure of the collision constraints, we derive a custom optimizer by combining successive convexification of the active collision constraints with an ADMM splitting of the resulting convex subproblem~\citep{boyd2011distributed, mao2018successive}. The complete derivation is as follows.

\paragraph{Successive convexification.}
The nonconvexity of~\eqref{eq:mrmp_projection} arises from the inter-robot and robot-obstacle collision constraints. We handle these constraints with an outer successive-convexification loop. Let $X^{(m)}$ denote the current trajectory at SCP iteration $m$. For each active inter-robot constraint, define
\[
    r_{ij}^{k,(m)} = X_i^{k,(m)} - X_j^{k,(m)}, 
    \qquad
    n_{ij}^{k,(m)} =
    \frac{r_{ij}^{k,(m)}}{\norm{r_{ij}^{k,(m)}}_2}.
\]
Using the first-order lower approximation of the norm, we replace
$\norm{X_i^k-X_j^k}_2 \ge 2\delta$ with
\[
    \big(n_{ij}^{k,(m)}\big)^\top (X_i^k-X_j^k) \ge 2\delta .
\]

For robot-obstacle constraints, let $d_\ell(p)$ denote the signed distance from point $p$ to obstacle primitive $\mathcal O_\ell$. For each active constraint
$d_\ell(X_i^k)-\delta \ge 0$, we use the first-order approximation around $X_i^{k,(m)}$:
\[
    d_\ell(X_i^{k,(m)}) - \delta
    +
    \nabla d_\ell(X_i^{k,(m)})^\top
    \big(X_i^k-X_i^{k,(m)}\big)
    \ge 0.
\]
For all MMD environments, obstacles decompose into rectangular and circular primitives, so both signed distances and their gradients can be computed exactly. At SCP iteration $m$, we compute the next trajectory $X^{(m+1)}$ by solving the following convexified, slack-penalized projection problem:
{
\setlength{\jot}{2pt}
\begin{subequations}
\label{eq:mrmp_scp_subproblem}

\begin{flalign}
\label{eq:mrmp_scp_obj}
&\begin{aligned}
(X^{(m+1)},s^{(m+1)}) \in \argmin_{X,s}\quad
& \frac{1}{2}\norm{X-\hat X}_F^2
+ \frac{\lambda_{\rm sm}}{2}
\sum_{i=1}^{N_a}\sum_{k=1}^{H-1}
\norm{X_i^{k+1}-X_i^k}_2^2  \\
& + \frac{\tau_m}{2}\norm{X-X^{(m)}}_F^2
+ \mu_m \mathbf{1}^\top s .
\end{aligned}&&
\end{flalign}

\vspace{-0.8em}

\begin{alignat}{2}
\text{s.t.}\;
& \norm{X_i^1-b_i}_2 \le v_{\max}\Delta t,
&\quad& \forall i\in[N_a],
\label{eq:mrmp_scp_start} \\
& \norm{X_i^{k+1}-X_i^k}_2 \le v_{\max}\Delta t,
&& i\in[N_a],\ k\in[H-1],
\label{eq:mrmp_scp_velocity} \\
& \big(n_{ij}^{k,(m)}\big)^\top(X_i^k-X_j^k)+s_{ij}^k \ge 2\delta,
&& \forall (i,j,k)\in\mathcal A_{\rm RR}^{(m)},
\label{eq:mrmp_scp_rr} \\
& \big(n_{i\ell}^{k,(m)}\big)^\top X_i^k+s_{i\ell}^k
  \ge
  \big(n_{i\ell}^{k,(m)}\big)^\top X_i^{k,(m)}
  -\big(d_\ell(X_i^{k,(m)})-\delta\big),
&& \forall (i,\ell,k)\in\mathcal A_{\rm O}^{(m)},
\label{eq:mrmp_scp_obs} \\
& s\ge 0.
&&
\label{eq:mrmp_scp_slack}
\end{alignat}

\end{subequations}
}
Here $\mathcal A_{\rm RR}^{(m)}$ and $\mathcal A_{\rm O}^{(m)}$ denote the active inter-robot and robot-obstacle constraints at the linearization point $X^{(m)}$. The slack variables $s\ge 0$ keep the convexified subproblem feasible, while the $\ell_1$ penalty $\mathbf{1}^\top s=\norm{s}_1$ encourages sparse violations. We use monotone schedules $\mu_{m+1}=\min\{\gamma_\mu\mu_m,\mu_{\max}\}$ and $\tau_{m+1}=\min\{\gamma_\tau\tau_m,\tau_{\max}\}$, with $\gamma_\mu,\gamma_\tau>1$. Increasing $\mu_m$ progressively enforces collision satisfaction, while increasing $\tau_m$ regularizes each SCP step around its linearization point.
\paragraph{ADMM splitting.}
To solve~\eqref{eq:mrmp_scp_subproblem}, we introduce auxiliary variables
$V,C,Z\in\mathbb{R}^{N_a\times H\times 2}$. The variable $V$ is used to impose velocity feasibility, $Z$ stores the corresponding velocity increments, and $C$ isolates the linearized collision constraints. For notational convenience, set $V_i^0=b_i$ and define the anchored first-difference operator
\[
    (D_bV)_i^k = V_i^k - V_i^{k-1},
    \qquad i\in[N_a],\ k\in[H].
\]
Thus, $(D_bV)_i^1=V_i^1-b_i$ encodes the initial velocity constraint, while the remaining entries encode consecutive displacements. The velocity constraint is equivalently written as $D_bV=Z, \; Z \in \mathcal{K}_Z$ with
\[
    \mathcal K_Z =
    \{Z:\norm{Z_i^k}_2\le v_{\max}\Delta t,\ i\in[N_a],\ k\in[H]\}.
\]

Let $G^{(m)}$ denote the matrix collecting the active linearized collision constraints at SCP iteration $m$, and let $h^{(m)}$ denote the corresponding vector of right-hand sides. The slack variable $s$ has the same dimension as $h^{(m)}$ and keeps the active collision constraints feasible. We write these constraints compactly as $(C,s)\in\mathcal K_C^{(m)}$, where
\[
    \mathcal K_C^{(m)}
    =
    \{(C,s):G^{(m)}C+s\ge h^{(m)},\ s\ge 0\}.
\]
Let $\mathbb I_{\mathcal K}$ denote the indicator function of a set $\mathcal K$, equal to $0$ on $\mathcal K$ and $+\infty$ otherwise. Define
\[
    f_m(X)
    =
    \frac{1}{2}\norm{X-\hat X}_F^2
    +
    \frac{\lambda_{\rm sm}}{2}
    \sum_{i=1}^{N_a}\sum_{k=1}^{H-1}
    \norm{X_i^{k+1}-X_i^k}_2^2
    +
    \frac{\tau_m}{2}\norm{X-X^{(m)}}_F^2 .
\]
The SCP subproblem can then be written in ADMM form as
\begin{equation}
\label{eq:mrmp_admm_split}
\begin{aligned}
\min_{X,V,C,Z,s}\quad
& f_m(X)+\mu_m\mathbf{1}^\top s
+\mathbb{I}_{\mathcal K_Z}(Z)
+\mathbb{I}_{\mathcal K_C^{(m)}}(C,s) \\
\text{s.t.}\quad
& X=V,\qquad X=C,\qquad D_bV=Z, \qquad s \ge 0 .
\end{aligned}
\end{equation}
Using scaled dual variables $U_{XV}$, $U_{XC}$, and $U_Z$ with penalties
$\rho_{XV},\rho_{XC},\rho_Z>0$, the corresponding scaled augmented Lagrangian formulation of \eqref{eq:mrmp_admm_split} is:
\begin{equation}
\label{eq:mrmp_admm_lagrangian}
\begin{aligned}
\mathcal L_\rho
=&\ f_m(X)+\mu_m\mathbf{1}^\top s
+\mathbb{I}_{\mathcal K_Z}(Z)
+\mathbb{I}_{\mathcal K_C^{(m)}}(C,s) \\
&+\frac{\rho_{XV}}{2}\norm{X-V+U_{XV}}_F^2
-\frac{\rho_{XV}}{2}\norm{U_{XV}}_F^2 \\
&+\frac{\rho_{XC}}{2}\norm{X-C+U_{XC}}_F^2
-\frac{\rho_{XC}}{2}\norm{U_{XC}}_F^2 \\
&+\frac{\rho_Z}{2}\norm{D_bV-Z+U_Z}_F^2
-\frac{\rho_Z}{2}\norm{U_Z}_F^2 .
\end{aligned}
\end{equation}
ADMM then alternates between the following updates at inner iteration $q$ of SCP iteration $m$. For the linear solves, write the anchored difference as
$D_bV=AV-B$, where $A$ is the first-difference matrix with
$(AV)_i^1=V_i^1$ and $(AV)_i^k=V_i^k-V_i^{k-1}$ for $k\ge 2$, while
$B_i^1=b_i$ and $B_i^k=0$ for $k\ge 2$.

\begin{itemize}

\item \textbf{$X$-update.}
The $X$-update minimizes the scaled augmented Lagrangian with respect to $X$ while keeping the remaining variables fixed:
\[
X^{q+1}
=
\argmin_X
\left\{
f_m(X)
+
\frac{\rho_{XV}}{2}\norm{X-V^q+U_{XV}^q}_F^2
+
\frac{\rho_{XC}}{2}\norm{X-C^q+U_{XC}^q}_F^2
\right\}.
\]
This is an unconstrained quadratic program. Setting its gradient to zero gives the linear system
\[
\Big((1+\tau_m+\rho_{XV}+\rho_{XC})I+\lambda_{\rm sm}D^\top D\Big)X^{q+1}
=
\hat X+\tau_m X^{(m)}
+\rho_{XV}(V^q-U_{XV}^q)
+\rho_{XC}(C^q-U_{XC}^q),
\]
where $D$ is the unanchored first-difference operator appearing in the smoothness term.

\item \textbf{$V$-update.}
The $V$-update collects the terms involving the velocity split:
\[
V^{q+1}
=
\argmin_V
\left\{
\frac{\rho_{XV}}{2}\norm{X^{q+1}-V+U_{XV}^q}_F^2
+
\frac{\rho_Z}{2}\norm{D_bV-Z^q+U_Z^q}_F^2
\right\}.
\]
This is also an unconstrained quadratic program. Using $D_bV=AV-B$, its optimality condition yields
\[
\big(\rho_{XV}I+\rho_ZA^\top A\big)V^{q+1}
=
\rho_{XV}(X^{q+1}+U_{XV}^q)
+
\rho_ZA^\top(Z^q-U_Z^q+B).
\]

\item \textbf{$Z$-update.}
The $Z$-update enforces the velocity bound by projecting the current anchored displacement estimate onto $\mathcal K_Z$:
\[
Z^{q+1}
=
\Pi_{\mathcal K_Z}\big(D_bV^{q+1}+U_Z^q\big).
\]
This projection has a closed-form solution as it is applied \emph{row-wise}. For
$\nu_i^k=(D_bV^{q+1})_i^k+(U_Z^q)_i^k$,
\[
Z_i^{k,q+1}
=
\begin{cases}
(v_{\max}\Delta t)\dfrac{\nu_i^k}{\norm{\nu_i^k}_2},
& \norm{\nu_i^k}_2>v_{\max}\Delta t,\\[0.8em]
\nu_i^k,
& \text{otherwise},
\end{cases}
\qquad i\in[N_a],\ k\in[H].
\]

\item \textbf{$C,s$-update.}
The collision update projects onto the linearized collision constraints while penalizing slack:
\[
(C^{q+1},s^{q+1})
=
\argmin_{(C,s)\in\mathcal K_C^{(m)}}
\left\{
\frac{\rho_{XC}}{2}\norm{C-(X^{q+1}+U_{XC}^q)}_F^2
+
\mu_m\mathbf{1}^\top s
\right\}.
\]
This is a convex quadratic program with linear inequality constraints. Rather than calling a generic QP solver, we solve its box-constrained dual. Let $Q^q=X^{q+1}+U_{XC}^q$. The dual problem is
\[
\lambda^\star
=
\argmax_{0\le \lambda\le \mu_m}
\left\{
-\frac{1}{2\rho_{XC}}\norm{(G^{(m)})^\top\lambda}_2^2
+
\lambda^\top\big(h^{(m)}-G^{(m)}Q^q\big)
\right\}.
\]
We solve this dual by projected gradient ascent on $\lambda$. The primal variables are then recovered as
\[
C^{q+1}
=
Q^q+\frac{1}{\rho_{XC}}(G^{(m)})^\top\lambda^\star,
\qquad
s^{q+1}
=
\max\{0,h^{(m)}-G^{(m)}C^{q+1}\}.
\]

\item \textbf{Dual updates.}
Finally, the scaled dual variables are updated by accumulating the primal residuals of the three splitting constraints:
\[
U_{XV}^{q+1}=U_{XV}^q+X^{q+1}-V^{q+1},\qquad
U_{XC}^{q+1}=U_{XC}^q+X^{q+1}-C^{q+1},
\]
\[
U_Z^{q+1}=U_Z^q+D_bV^{q+1}-Z^{q+1}.
\]
\end{itemize}
The $X$- and $V$-updates require solving linear systems with matrices of the form $\alpha I+\beta D^\top D$ or $\alpha I+\beta A^\top A$, which are tridiagonal along the time dimension. Their factorizations can therefore be cached and reused across ADMM iterations within each SCP iteration. We implement the projector in JAX~\citep{jax2018github}, using just-in-time compilation and GPU parallelism for batched inference.

For simplicity in implementation, we use fixed iteration budgets for all nested loops: $K_{\rm cvx}$ successive-convexification iterations, $K_{\rm ADMM}$ ADMM iterations, and $K_{\rm QP}$ projected-gradient steps for the local collision QP. Developing principled termination criteria for these loops is an interesting direction for future work and could further reduce inference-time computational overhead. We summarize the full procedure in Algorithm~\ref{alg:mrmp_scp_admm} and default parameters in Table~\ref{tab:mrmp_scp_admm_main_params}.

\begin{algorithm}[t]
\caption{Batched SCP--ADMM Projection for MRMP}
\label{alg:mrmp_scp_admm}
\small

\KwIn{Tweedie reference \(\hat X\), starts \(b\), obstacle primitives \(\mathcal O\), robot radius \(\delta\), step bound \(d_{\max}=v_{\max}\Delta t\); iterations \(K_{\rm cvx},K_{\rm ADMM},K_{\rm QP}\); penalties \(\rho_{XV},\rho_{XC},\rho_Z\); schedules \(\mu_m,\tau_m\); dual step size \(\alpha\)}
\KwOut{Projected trajectory \(X\)}

Pre-compute \(D\), \(A\), and \(B\) such that \(D_bV=AV-B\), with \(B_i^1=b_i\) and \(B_i^k=0\) for \(k\ge 2\)\;
\(X\gets \hat X\), \(V\gets X\), \(C\gets X\)\tcp*{Initialize from diffusion estimate}
\(Z\gets \Pi_{\mathcal K_Z}(AV-B)\), \quad \(U_{XV},U_{XC},U_Z\gets 0\)\tcp*{Initialize velocity split and duals}

\BlankLine
\textbf{SCP convexification}\;
\For{\(m=0,\ldots,K_{\rm cvx}-1\)}{
    \(X_{\rm lin}\gets X\)\;
    Select active constraints \(\mathcal A_{\rm RR}^{(m)},\mathcal A_{\rm O}^{(m)}\) at \(X_{\rm lin}\)\;
    Build the linearized collision constraints \(G^{(m)}C+s\ge h^{(m)}\), \(s\ge 0\)\;
    \(H_X\gets(1+\tau_m+\rho_{XV}+\rho_{XC})I+\lambda_{\rm sm}D^\top D\)\tcp*{\(X\)-update system}
    \(H_V\gets\rho_{XV}I+\rho_ZA^\top A\)\tcp*{\(V\)-update system}
    Cache factorizations of \(H_X\) and \(H_V\)\tcp*{Tridiagonal along time}
    \(C\gets X\), \(s\gets 0\), \(U_{XC}\gets 0\)\;

    \BlankLine
    \textbf{ADMM splitting}\;
    \For{\(q=0,\ldots,K_{\rm ADMM}-1\)}{
        \textbf{\(X,V,Z\) updates}\;
        \(R_X\gets \hat X+\tau_m X_{\rm lin}+\rho_{XV}(V-U_{XV})+\rho_{XC}(C-U_{XC})\)\;
        \(X\gets \textsc{SolveBatch}(H_X X=R_X)\)\tcp*{Batched linear solve on GPU}
        \(R_V\gets \rho_{XV}(X+U_{XV})+\rho_ZA^\top(Z-U_Z+B)\)\;
        \(V\gets \textsc{SolveBatch}(H_V V=R_V)\)\tcp*{Batched linear solve on GPU}
        \(W\gets AV-B+U_Z\)\;
        \(Z_i^k\gets \min\left\{1,\dfrac{d_{\max}}{\norm{W_i^k}_2+\varepsilon_{min}}\right\}W_i^k,\quad i\in[N_a],\ k\in[H]\)\tcp*{Project onto velocity balls}

        \BlankLine
        \textbf{Local collision-QP dual}\;
        \(Q\gets X+U_{XC}\), \quad \(\lambda\gets 0\)\;
        \For{\(r=0,\ldots,K_{\rm QP}-1\)}{
            \(g_\lambda\gets -\rho_{XC}^{-1}G^{(m)}(G^{(m)})^\top\lambda+\big(h^{(m)}-G^{(m)}Q\big)\)\tcp*{Dual gradient}
            \(\lambda\gets \operatorname{clip}(\lambda+\alpha g_\lambda,\ 0,\ \mu_m)\)\tcp*{Box-constrained dual step}
        }

        \BlankLine
        \textbf{\(C,s\) recovery and dual updates}\;
        \(C\gets Q+\rho_{XC}^{-1}(G^{(m)})^\top\lambda\)\;
        \(s\gets \max\{0,\ h^{(m)}-G^{(m)}C\}\)\;
        \(U_{XV}\gets U_{XV}+X-V\)\tcp*{Position--velocity residual}
        \(U_{XC}\gets U_{XC}+X-C\)\tcp*{Position--collision residual}
        \(U_Z\gets U_Z+AV-B-Z\)\tcp*{Velocity-projection residual}
    }
}
\Return{\(X\)}
\end{algorithm}

\begin{table}[t]
\centering
\small
\setlength{\tabcolsep}{4.5pt}
\renewcommand{\arraystretch}{1.10}
\caption{Main hyperparameters used by the batched SCP--ADMM MRMP optimizer.}
\label{tab:mrmp_scp_admm_main_params}
\begin{tabular}{lcccccccccccc}
\toprule
\textbf{Symbol}
& $K_{\rm cvx}$
& $K_{\rm ADMM}$
& $K_{\rm QP}$
& $\rho_{XV}$
& $\rho_{XC}$
& $\rho_Z$
& $\mu_{\rm init}$
& $\mu_{\max}$
& $\gamma_\mu$
& $\tau_{\rm init}$
& $\tau_{\max}$
& $\gamma_\tau$ \\
\midrule
\textbf{Value}
& $5$
& $50$
& $100$
& $1.0$
& $1.0$
& $1.0$
& $1.0$
& $1000.0$
& $2.0$
& $0.1$
& $10.0$
& $1.5$ \\
\bottomrule
\end{tabular}
\end{table}

\subsection{Safe and diverse contact-rich manipulation}
\label{sec:diverse constrained planning in PushT - Appendix Details}
\paragraph{Training.} We employ a diffusion model with noise-prediction parametrization on the augmented expert demonstrations dataset using default parameters in~\citep{feng2024ltldog}. 
\paragraph{Evaluation.} We follow the same setup as in~\citep{luan2026projected} and generate $100$ pairs of trajectories for each of $50$ random starting configurations. We constrain each agent by imposing \emph{test-time} velocity limits corresponding to the $90\%$ quantile in the original work. Sampling is performed with a planning horizon of $H_{p}=16$, execution horizon $H_{a}=8$, and for a maximum of $360$ environment steps. 
\paragraph{Cost functions.} For clarity of exposition, we repeat the definition of the cost functions used to encourage non-intersecting trajectories during coupled sampling. \\

\textit{Determinantal Point Process (DPP)}~\citep{feng2024ltldog}. Given a generated pair $(X, Y) \in \R^{H \times 2 \times 2}$, the DPP cost directly penalizes high cosine similarity between the flattened trajectories $(\bar{X}, \bar{Y}) \in \R^{2H \times 2}$:
\begin{equation}
    c_{\text{DPP}}(X, Y) = \log \left( \cos \angle (\bar{X}, \bar{Y}) + \varepsilon \right)
\label{eq:dpp cost function}    
\end{equation}
where $\varepsilon > 1$ regulates cost sharpness. \\

\textit{Log-Barrier (LB)}. Given a trajectory tuple $(X, Y) \in \R^{H \times 2 \times 2}$, log-barrier cost discourages proximity of trajectories, as measured by the Euclidean distance: 
\begin{equation}
    c_{\text{LB}}(X, Y) = - \sum_{i=1}^H \log ( \norm{X_i - Y_i} + \alpha)
\label{eq:lb cost function}
\end{equation}
where $\alpha > 0$ is an offset parameter. 
\paragraph{Projected Gradient Descent.} For the PCD-DPP and PCD-LB baselines, we use one gradient iteration per denoising step with guidance strengths of $\gamma_{DPP} = 0.2$ and $\gamma_{LB} = 0.02$ respectively. For DiRecT-DPP and DiRecT-LB we introduce diversity costs directly in the optimization problem \eqref{eq:inner-subproblem} scaled by an appropriate weight $\lambda_c > 0$. For a comparison under similar computational budgets, the optimization problem is not solved to convergence, instead we employ one iteration of projected gradient descent~\citep{levitin1966constrained}. Given a pair of denoised estimates $(\bar{X}, \bar{Y})$, the optimized pair $(X^\star, Y^\star)$ becomes\footnote{We include the step size for the gradient update in $\lambda_c$, instead of defining an additional hyperparameter. Moreover, the gradient of the quadratic regularization from Tweedie's estimate vanishes at $(\bar{X}, \bar{Y})$.}:
\begin{equation}
\begin{aligned}
X^\star &= \Pi_{\mathcal{K}} \left( \bar X-\lambda_c\nabla_X c_{\text{DPP}/\text{LB}}(X,Y)\big|_{\bar X,\bar Y} \right),\\
Y^\star &= \Pi_{\mathcal{K}} \left(\bar Y-\lambda_c\nabla_Y c_{\text{DPP}/\text{LB}}(X,Y)\big|_{\bar X,\bar Y} \right).
\end{aligned}
\label{eq:projected gradient descent pusht}
\end{equation}
Here, $\Pi_K$ is the same ADMM-based projector as the PCD baselines over the set $\mathcal{K}$ of velocity-restricted trajectories. We sweep multiple weights for $\lambda_c$ and choose $\lambda_c = 0.005$ for DiRecT-LB and $\lambda_c = 0.05$ for DiRecT-DPP. Additional details are provided in Appendix~\ref{sec:pareto-optimal analysis for PushT}.
\section{Additional results}
\label{sec:additional results - Appendix}
\subsection{Safe maze navigation}
\label{sec:constrained navigation in Maze2d - Appendix}
We report in Table~\ref{tab: maze2d results additional} a detailed comparison of our method against additional baselines, with a focus on the difference between \emph{generated} and \emph{rollout} safety. We consider these additional methods for comparison: \textbf{Classifier Guidance}~\citep{dhariwal2021diffusionmodelsbeatgans} which employs gradient guidance through \emph{soft} violation costs without posterior sampling, \textbf{Primal-Dual}~\citep{zhang2025constraineddiffuserssafeplanning} as an additional Lagrangian-based method, and the two additional SafeDiffuser variations \textbf{SafeDiffuser-ReS}, and \textbf{SafeDiffuser-TVS}~\citep{xiao2025safediffuser} for CBF-based constraining. We also report additional metrics on the \emph{generated} trajectory: (i) \emph{Safety Rate}, measuring the fraction of samples that correctly avoid obstacles; (ii) mean \emph{Violations}, measuring the number of steps the predicted trajectory remains infeasible; (iii) \emph{Acceleration Smoothness} (AS), quantifying the change in velocity across the trajectory, and computed as the magnitude of the second-order position differences averaged across all planned transitions: $AS = (H - 2)^{-1} \ \sum_{i=1}^{H-2} \norm{s_{i+1} - 2 s_i + s_{i-1}}$; (iv) \emph{Curvature Smoothness} (CS), measuring the mean curvature of the generated trajectory. Following~\citep{dai2025safeflowsaferobotmotion}, we define the turning angle $\theta_i = \cos^{-1} (w_i^T w_{i+1} / (\norm{w_i} \norm{w_{i+1}})), \;\; w_i = s_{i+1} - s_i, \;\; i=0, ..., H-3$, and compute the metric as the average across the planned horizon $CS = (H - 2)^{-1} \sum_{i=0}^{H-3}(1 - \cos \theta_i)$.
\input{tables/maze2d_results_additional}

In both variants DiRecT outperforms all other baselines in terms of \emph{rollout} safety. Even though Projected Diffusion, Primal-Dual and the three SafeDiffuser variations generate safe trajectories at planning time on the \texttt{broad} instance of the problem, their rollout performance remains limited. This can be explained by noting that these methods find feasible solutions to
the optimization problem by departing substantially from the data distribution.
As shown in Figures~\ref{fig:maze2d broad visualization}--\ref{fig:maze2d narrow visualization}, prior methods in highly constrained dynamic settings often produce trajectories that depart from the learned maze structure and pass through maze walls. In contrast, our method performs optimization near the data manifold, thereby remaining proximal to the data distribution. In the narrow setting, we observe that IPOPT struggles to find feasible solutions within the allowed \(200\) iterations, sometimes leading to infeasible generated solutions for projection-based methods. Moreover, primal-dual methods and some SafeDiffuser variants exhibit instabilities on certain samples under dynamic constraints. 
\subsection{Safe robotic manipulation}
\label{sec:constrained manipulation in D3IL - Appendix}
We compare the same extended baselines as in Table~\ref{tab: maze2d results additional} and report safety and success rates of these additional policies in Table~\ref{tab:d3il results additional}. Our method is the only one to obtain perfect task success rate.

For the primal-dual implementation, we tuned the reference hyperparameters to the best of our ability, including the dual-ascent learning rate. Nevertheless, in the highly constrained setting with dynamics constraints, the method exhibited increasing instability, occasionally producing trajectories that triggered errors in the MuJoCo simulator during execution. 
\input{tables/d3il_results_additional.tex}
\FloatBarrier
\subsection{Safe multi-robot motion planning}
\label{sec:safe multi-robot motion planning - Appendix}
We present sweep results for each of the three velocity limitations and number of agents in Tables~\ref{tab:empty results additional},~\ref{tab:highways results additional},~\ref{tab:conveyor results additional},~\ref{tab:dropregion results additional}. We report computational times as well as the mean number of collisions. DiRecT consistently improves \emph{Success} and \emph{Constraint safety}. On \texttt{Empty} and \texttt{Highways}, our method matches the  \emph{data adherence} of the pretrained model for all velocity limitations. We note that on \texttt{Conveyor} and \texttt{Drop-Region} comparing data adherence between DiRecT and the PCD variants is not straightforward for $N \in \{12, 16, 20 \}$, as the latter are not capable of generating feasible trajectories, thereby showcasing high task success while neglecting safety. 
\input{tables/mrmp_results_additional}
\subsection{Safe and diverse contact-rich manipulation}
\label{sec:pareto-optimal analysis for PushT}
As outlined in the ablation studies of~\citep{luan2026projected}, task success and trajectory diversity (DTW, DFD) are conflicting objectives in the \texttt{PushT} task. Therefore, there exists a Pareto-optimality front that can be traced by varying coupling strength. We sweep the coupling strength coefficient for gradient guidance in PCD and the step size of the Projected Gradient Descent update $\lambda_c$ for DiRecT. Figure~\ref{fig:pusht pareto fronts} shows that Pareto fronts obtained with our method dominate those derived from PCD for both DPP and LB cost functions. Furthermore, DiRecT improves over the baselines while requiring only \emph{half} the projection and gradient evaluations, as we start guidance in the second half of denoising. This highlights the importance of optimizing near the data distribution where constraints and costs are more naturally defined. 

\begin{figure}[h]
    \centering
    \newcommand{\paretoheight}{4.2cm}
    \begin{subfigure}[t]{0.495\linewidth}
        \centering
        \includegraphics[height=\paretoheight,keepaspectratio]{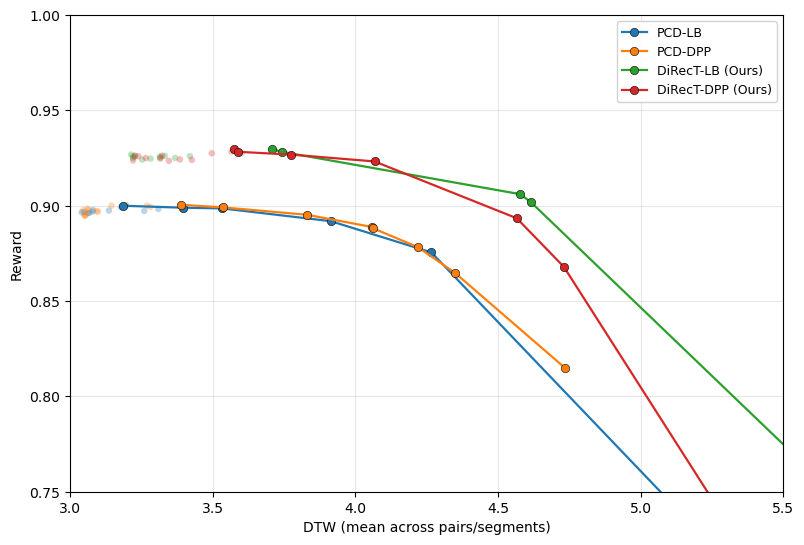}
        \caption{\textbf{TC} vs \textbf{DTW}. Top-right is better.}
        \label{fig:pusht-pareto-dtw}
    \end{subfigure}%
    \hspace{0.01\linewidth}%
    \begin{subfigure}[t]{0.495\linewidth}
        \centering
        \includegraphics[height=\paretoheight,keepaspectratio]{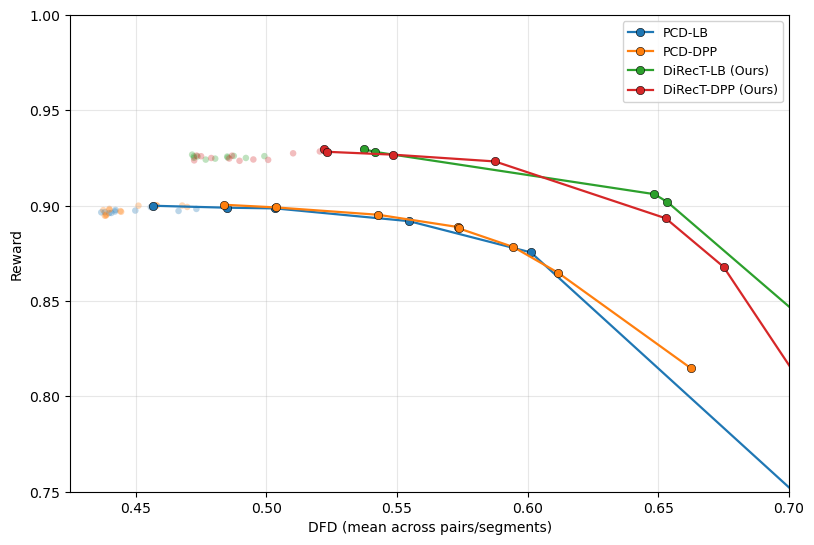}
        \caption{\textbf{TC} vs \textbf{DFD}. Top-right is better.}
        \label{fig:pusht-pareto-dfd}
    \end{subfigure}
    \caption{Pareto fronts for the PCD and DiRecT variants, obtained by sweeping the gradient coupling strength $\gamma$ for PCD and the cost weight $\lambda_c$ for DiRecT. Fronts are drawn as lines connecting Pareto-optimal points for each method, while dominated points are shaded.}
\label{fig:pusht pareto fronts}
\end{figure}
\FloatBarrier
\pagebreak
\clearpage
\section{Visualizations}
\label{sec:visualizations - Appendix}
\begin{figure}[h]
    \centering
    \includegraphics[width=0.89\linewidth]{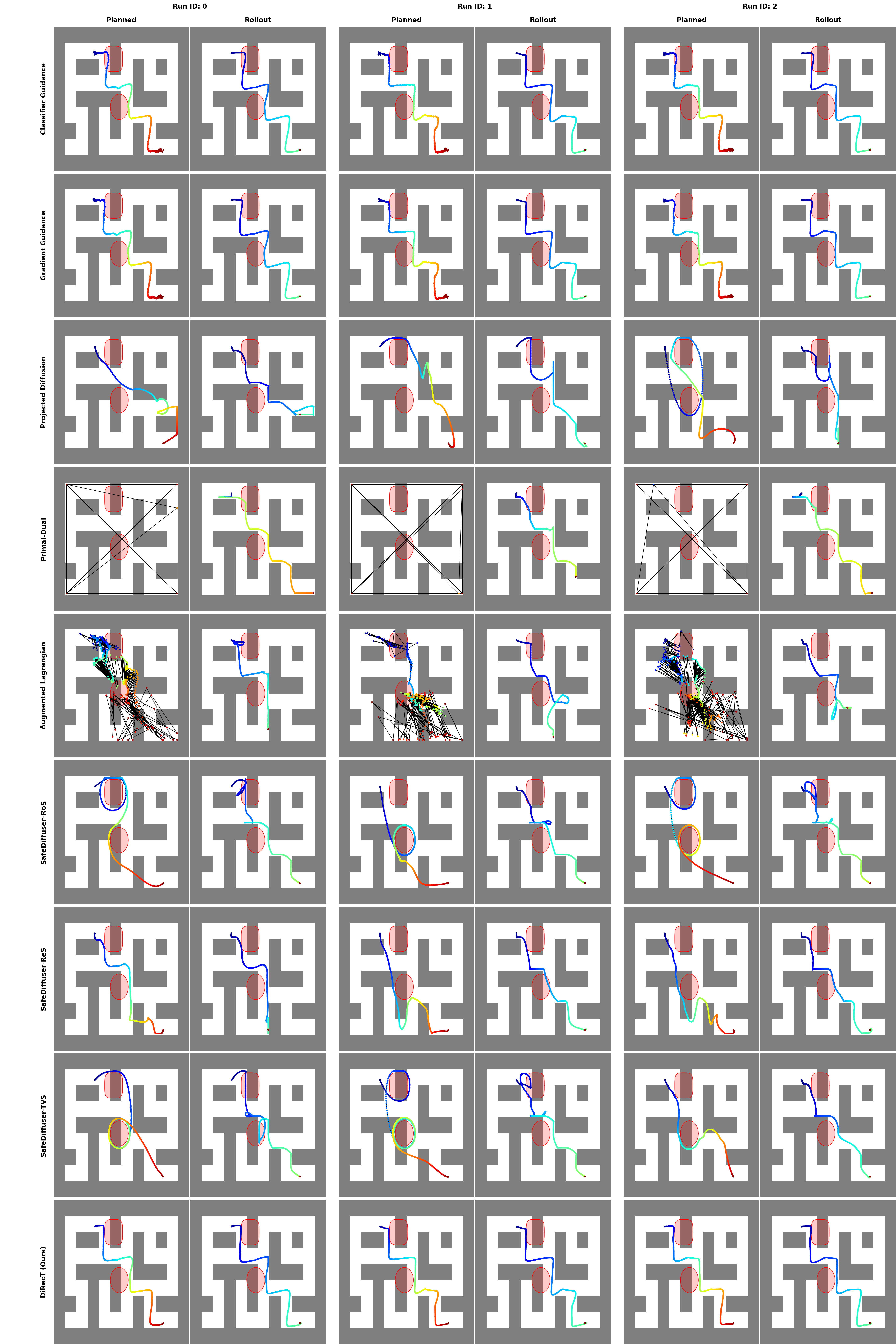}
    \caption{Comparison of \emph{planned} and \emph{executed} trajectories for safe navigation in \texttt{Maze2D-broad}. Each row corresponds to a planning method.}
\label{fig:maze2d broad visualization}
\end{figure}

\begin{figure}[h]
    \centering
    \includegraphics[width=0.89\linewidth]{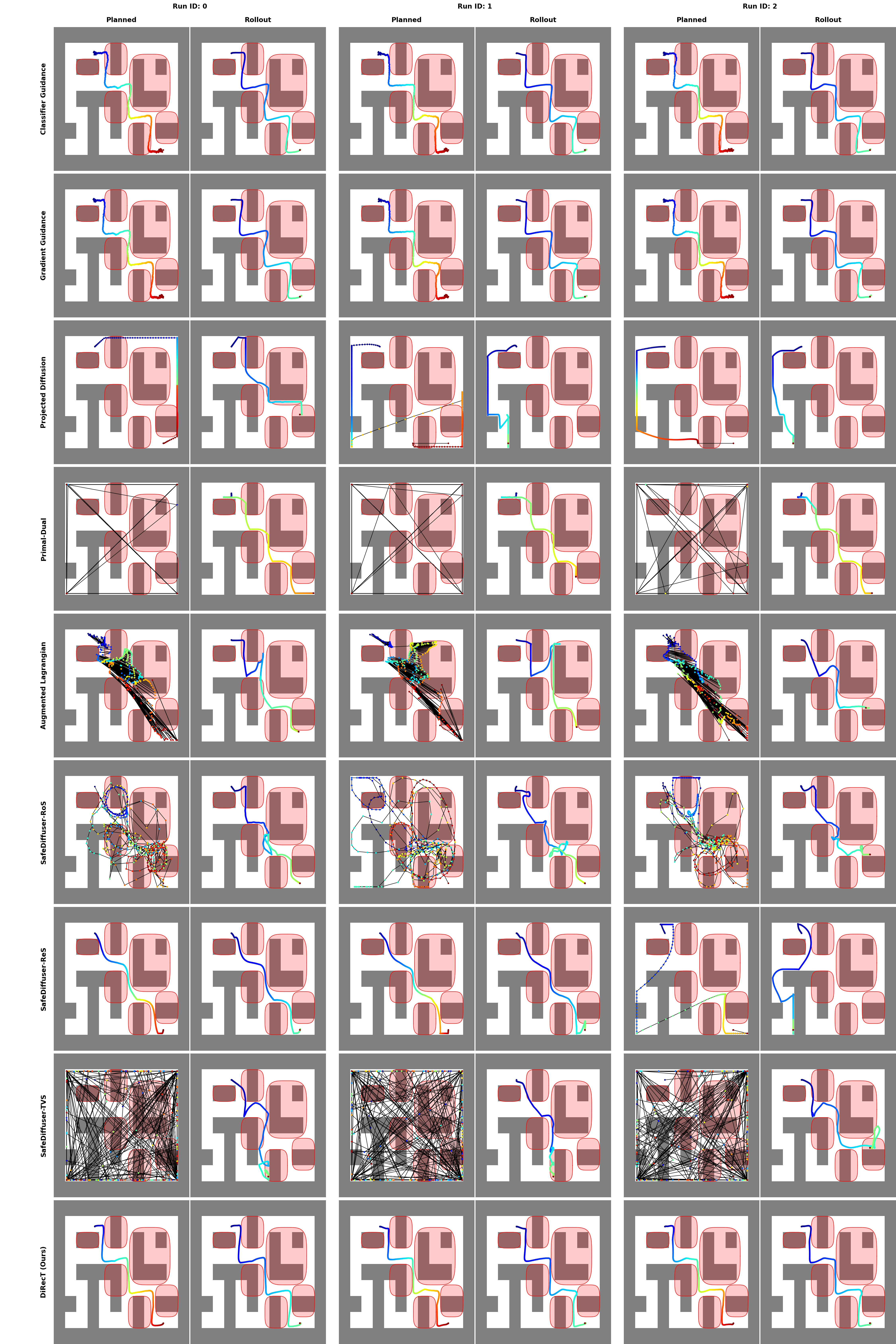}
    \caption{Comparison of \emph{planned} and \emph{executed} trajectories for safe navigation in \texttt{Maze2D-narrow}. Each row corresponds to a planning method.}
    \label{fig:maze2d narrow visualization}
\end{figure}
\begin{figure}[h]
    \centering
    \includegraphics[width=0.95\linewidth]{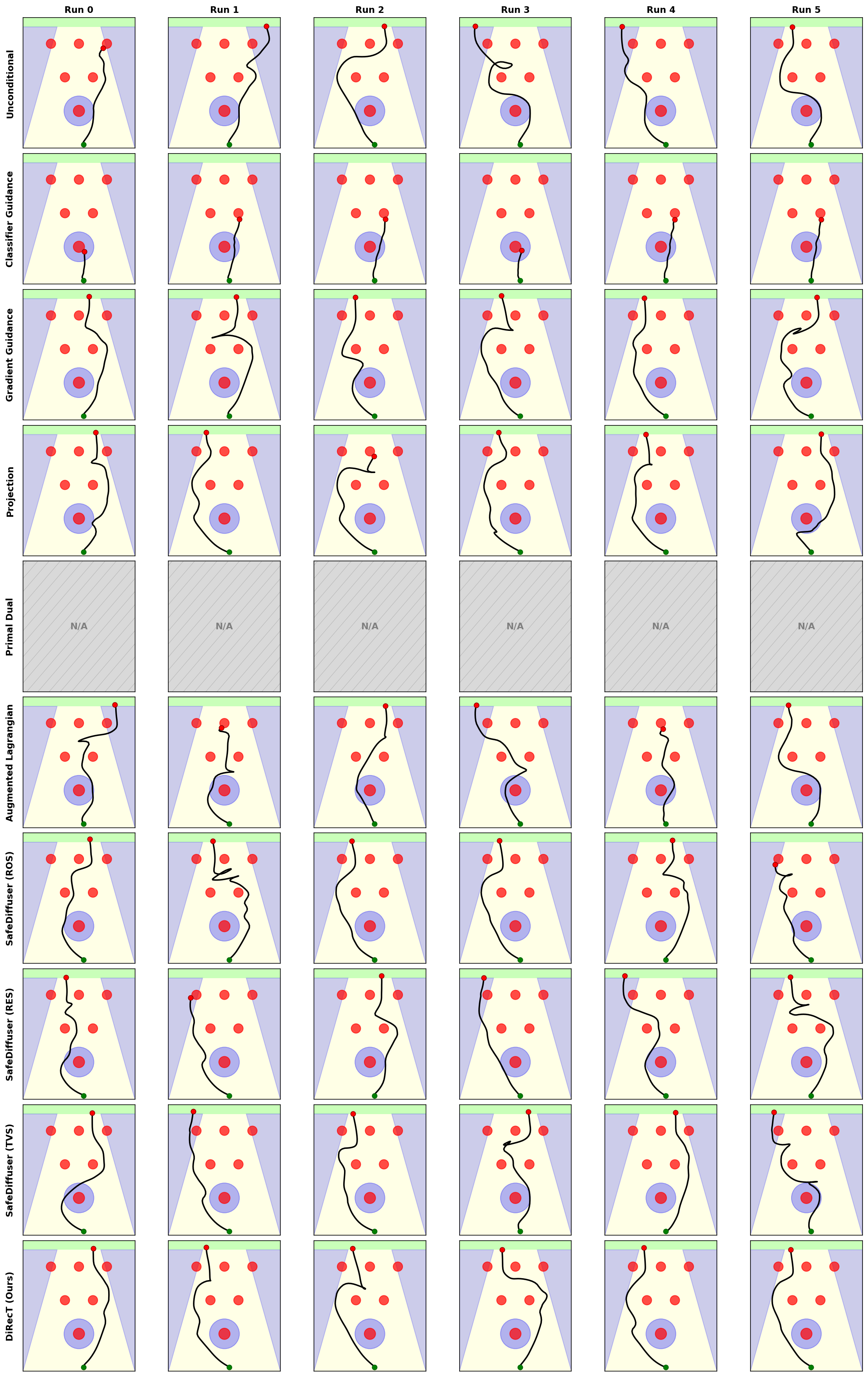}
    \caption{Comparison of environment rollouts for different sampling schemes for safe manipulation in D3IL \texttt{avoiding}.}
    \label{fig:d3il-dyn}
\end{figure}
\begin{figure}[h]
    \centering
    \includegraphics[width=\linewidth]{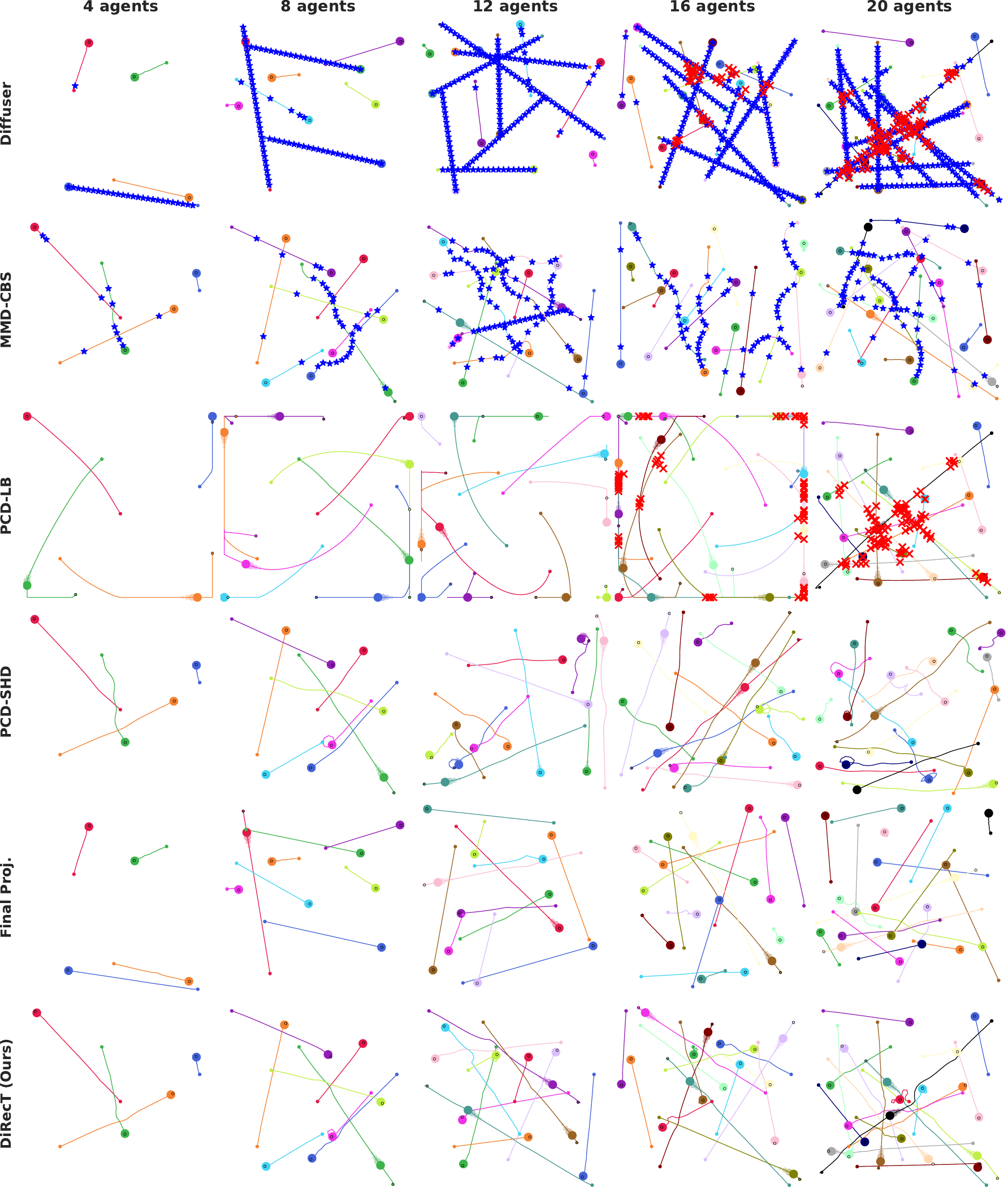}
    \caption{Visualizations of trajectories produced by different algorithms for the \texttt{Empty} MMD environment. Velocity is limited to $80\%$ of the maximum action. Robots are rewarded for moving in a straight line, while avoiding collisions (red crosses) and velocity violations (blue stars).}
    \label{fig: mrmp-empty-grid}
\end{figure}
\begin{figure}[h]
    \centering
    \includegraphics[width=\linewidth]{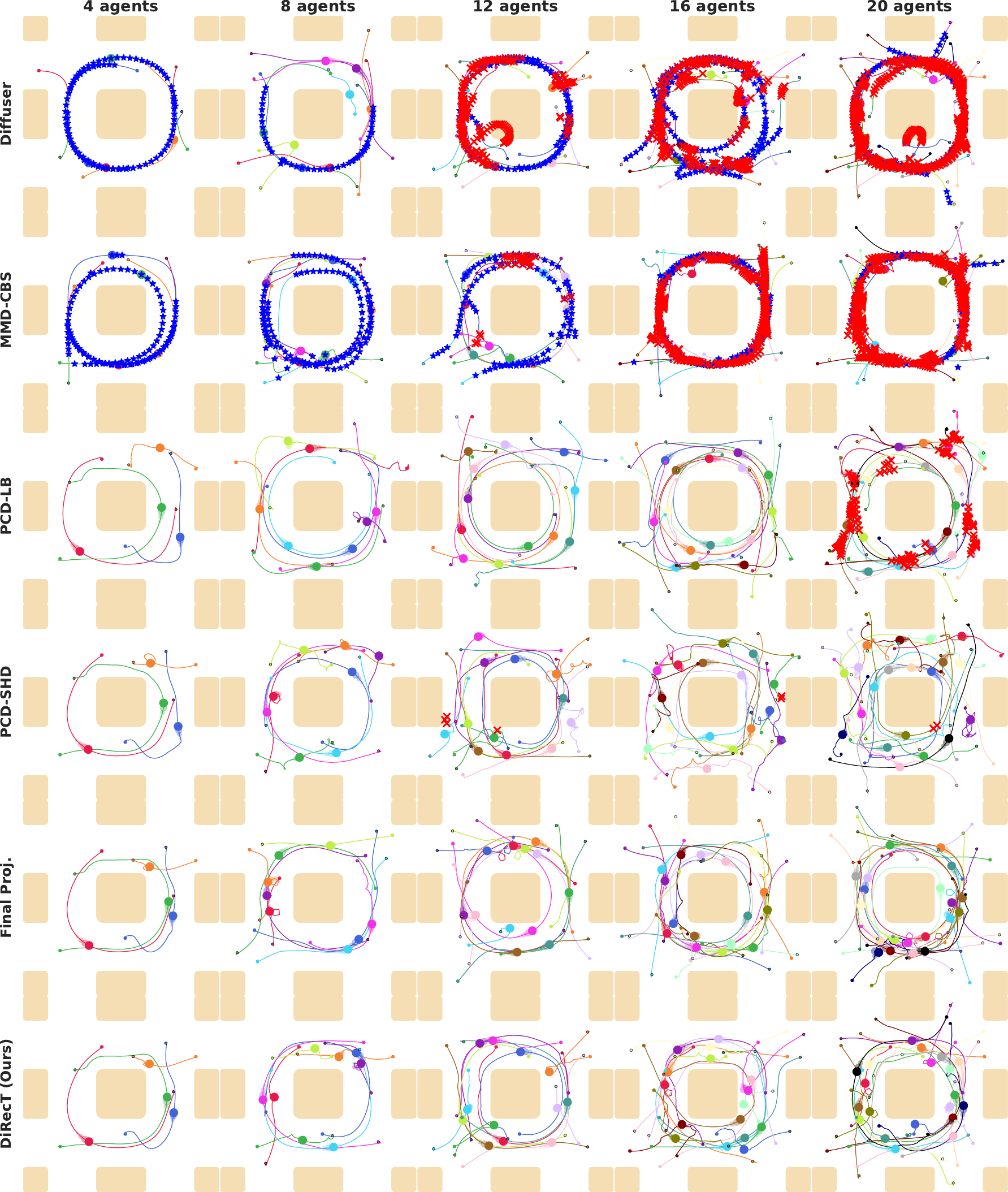}
    \caption{Visualizations of trajectories produced by different algorithms for the \texttt{Highways} MMD environment. Velocity is limited to $75\%$ of the maximum action. Robots are rewarded for moving \emph{counterclockwise} around the central obstacle, while avoiding collisions (red crosses) and velocity violations (blue stars).}
    \label{fig: mrmp-highways-grid}
\end{figure}
\begin{figure}[h]
    \centering
    \includegraphics[width=\linewidth]{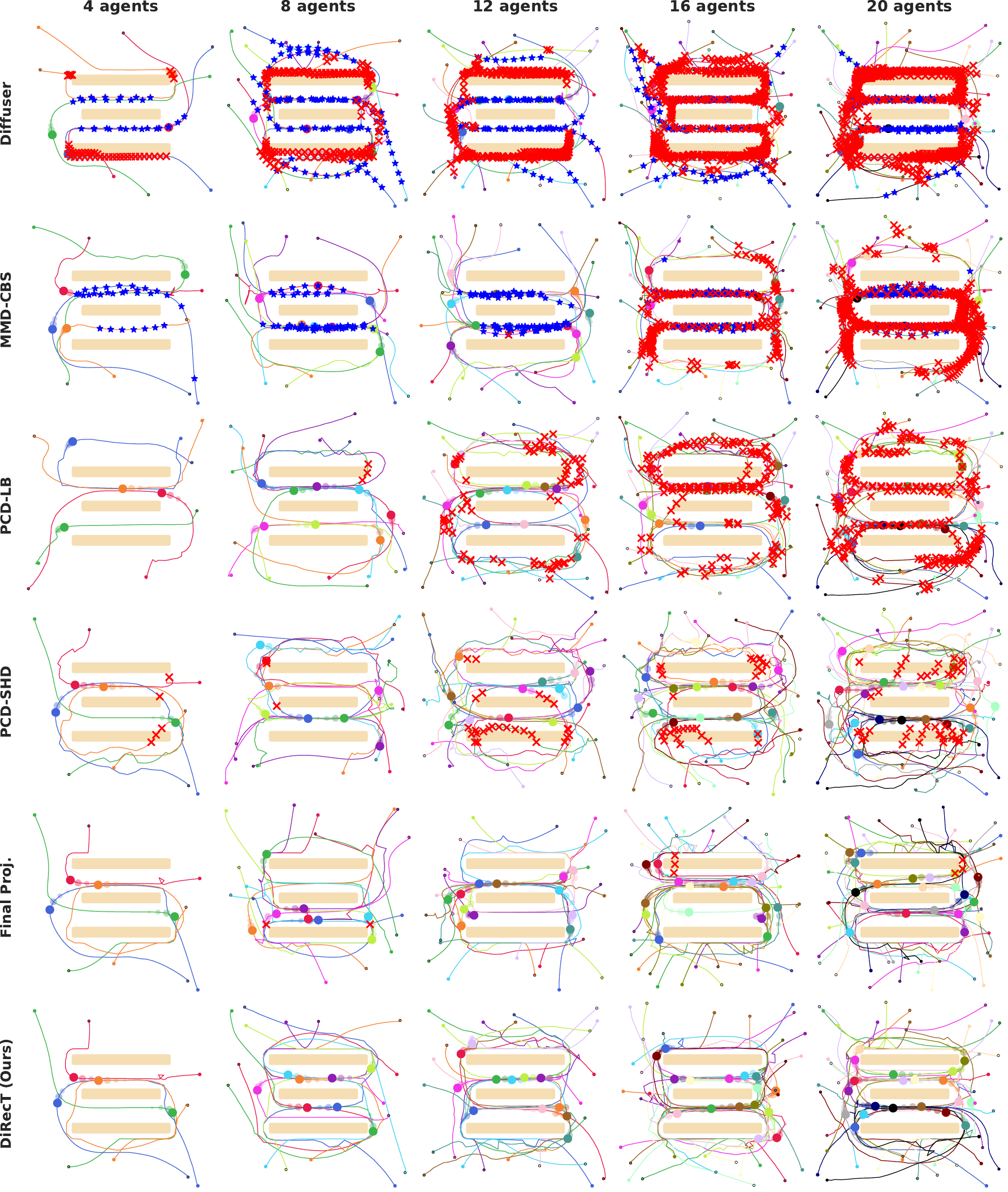}
    \caption{Visualizations of trajectories produced by different algorithms for the \texttt{Conveyor} MMD environment. Velocity is limited to $85\%$ of the maximum action. Robots are rewarded for moving \emph{leftward} in the top corridor and \emph{rightward} in the bottom one, while avoiding collisions (red crosses) and velocity violations (blue stars).}
    \label{fig: mrmp-conveyor-grid}
\end{figure}
\begin{figure}[h]
    \centering
    \includegraphics[width=\linewidth]{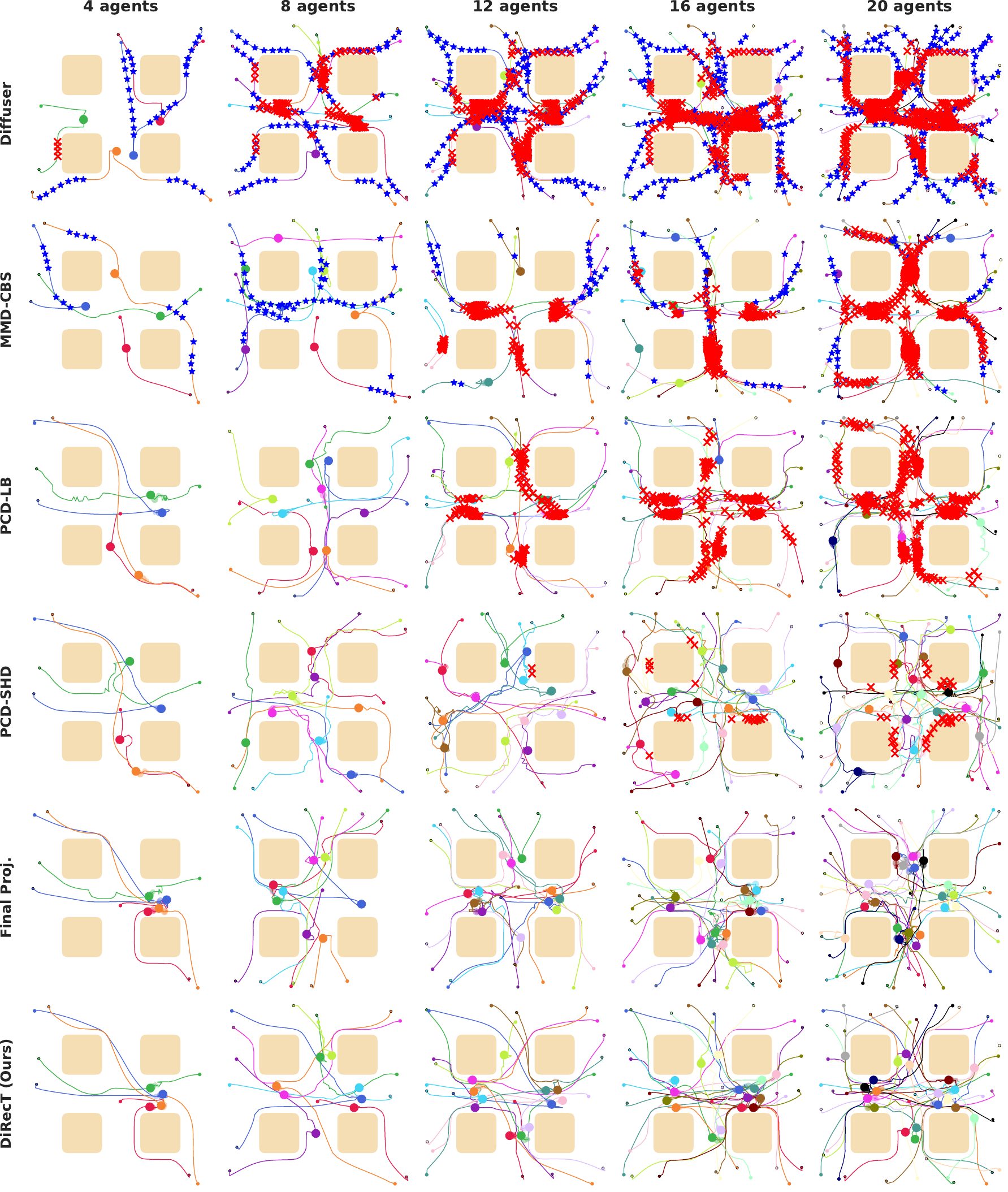}
    \caption{Visualizations of trajectories produced by different algorithms for the \texttt{Drop-Region} MMD environment. Velocity is limited to $85\%$ of the maximum action. Robots are rewarded for \emph{stopping} at the drop-off regions, while avoiding collisions (red crosses) and velocity violations (blue stars).}
    \label{fig: mrmp-dropregion-grid}
\end{figure}
\begin{figure}[h]
    \centering
    \includegraphics[width=\linewidth]{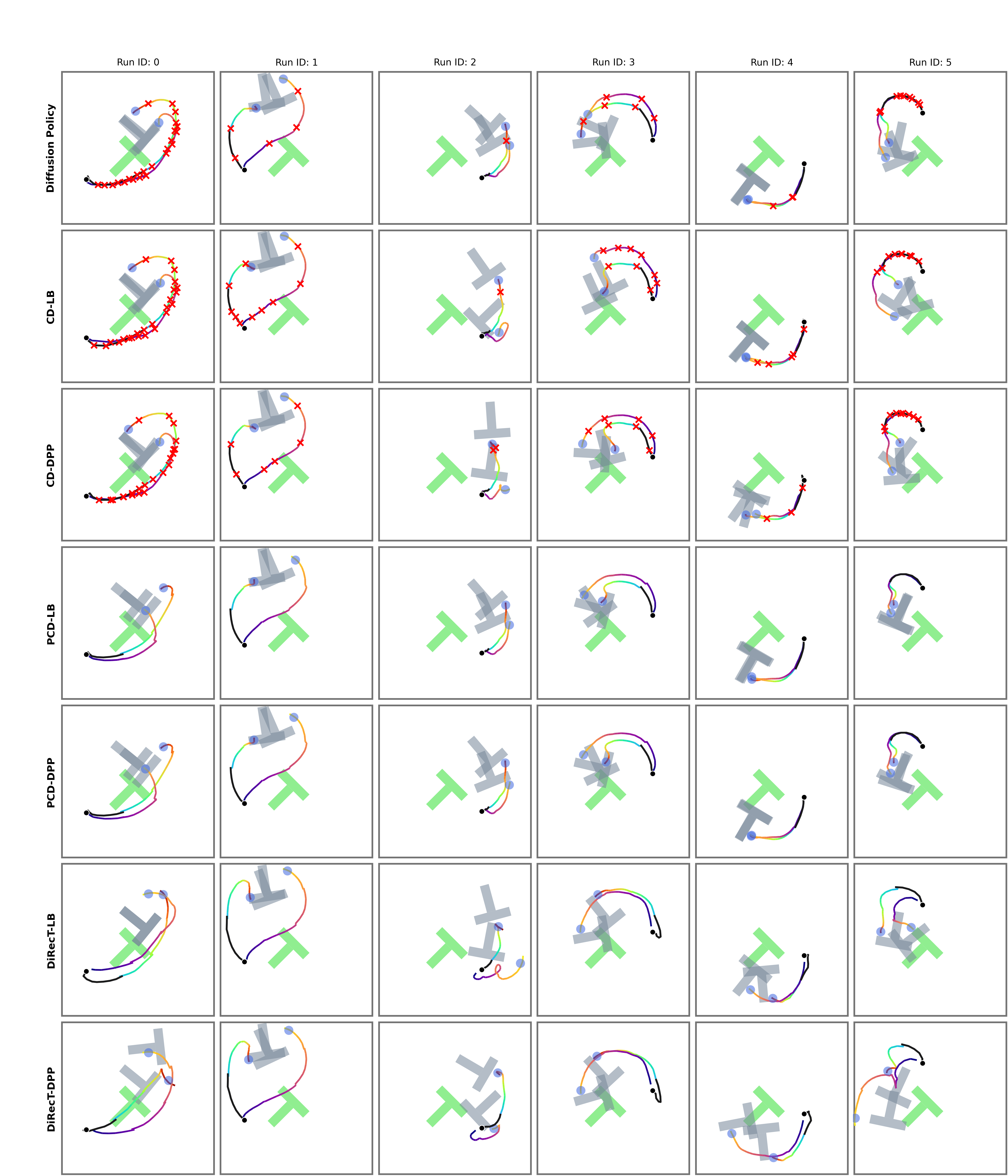}
    \caption{Visual comparison of different methods on coupled \texttt{PushT} generation with velocity constraints. Each column corresponds to the first sample of different initializations with red crosses to denote constraint violations. Only the first 64 executed steps are depicted to avoid clutter.}
    \label{fig: pusht-grid}
\end{figure}

%% file: tables/maze2d_results_additional.tex
\begin{table}[t!]
\centering
\scriptsize
\setlength{\tabcolsep}{3.2pt}
\renewcommand{\arraystretch}{1.10}
\caption{Results on the Maze2D constrained navigation task. Mean values are computed over $100$ i.i.d. samples and reported alongside their standard deviation. Gray columns report metrics evaluated on the planned trajectories before execution. Best value(s) in boldface.}
\label{tab: maze2d results additional}
\begin{adjustbox}{max width=\linewidth}
\begin{tabular}{llcccc>{\color{gray}}c>{\color{gray}}c>{\color{gray}}c>{\color{gray}}c}
\toprule
\textbf{Env}
& \textbf{Method}
& \textbf{SR} $\uparrow$
& \textbf{Viol.} $\downarrow$
& \textbf{Score} $\uparrow$
& \textbf{Time (s)} $\downarrow$
& \textbf{Gen. SR} $\uparrow$
& \textbf{Gen. Viol.} $\downarrow$
& \textbf{Gen. AS} $\downarrow$
& \textbf{Gen. CS} $\downarrow$ \\
\midrule

\multirow{10}{*}{Maze2D Broad}
& Diffuser~\citep{janner2022planningdiffusionflexiblebehavior}
& $0.000$
& $45.920{\scriptstyle \pm 11.863}$
& $\mathbf{1.601{\scriptstyle \pm 0.025}}$
& $0.316{\scriptstyle \pm 0.078}$
& $0.000{\scriptstyle \pm 0.000}$
& $49.040{\scriptstyle \pm 17.426}$
& $0.057{\scriptstyle \pm 0.005}$
& $0.636{\scriptstyle \pm 0.077}$ \\

& Classifier Guidance~\citep{dhariwal2021diffusionmodelsbeatgans}
& $0.030$
& $37.480{\scriptstyle \pm 13.260}$
& $1.587{\scriptstyle \pm 0.164}$
& $0.654{\scriptstyle \pm 0.093}$
& $0.040{\scriptstyle \pm 0.197}$
& $35.760{\scriptstyle \pm 19.482}$
& $0.055{\scriptstyle \pm 0.005}$
& $0.619{\scriptstyle \pm 0.077}$ \\

& Gradient Guidance~\citep{chung2023diffusionposterior}
& $0.040$
& $25.520{\scriptstyle \pm 10.178}$
& $1.587{\scriptstyle \pm 0.164}$
& $3.637{\scriptstyle \pm 0.076}$
& $0.040{\scriptstyle \pm 0.197}$
& $10.220{\scriptstyle \pm 7.882}$
& $0.056{\scriptstyle \pm 0.005}$
& $0.636{\scriptstyle \pm 0.073}$ \\

& Projected Diffusion~\citep{christopher2024constrained}
& $0.220$
& $42.000{\scriptstyle \pm 47.282}$
& $1.119{\scriptstyle \pm 0.708}$
& $46.398{\scriptstyle \pm 11.096}$
& $\mathbf{1.000{\scriptstyle \pm 0.000}}$
& $\mathbf{0.000{\scriptstyle \pm 0.000}}$
& $\mathbf{0.002{\scriptstyle \pm 0.000}}$
& $0.011{\scriptstyle \pm 0.004}$ \\

& Primal-Dual~\citep{zhang2025constraineddiffuserssafeplanning}
& $0.170$
& $26.610{\scriptstyle \pm 31.734}$
& $0.267{\scriptstyle \pm 0.417}$
& $5.957{\scriptstyle \pm 0.083}$
& $\mathbf{1.000{\scriptstyle \pm 0.000}}$
& $\mathbf{0.000{\scriptstyle \pm 0.000}}$
& $21.299{\scriptstyle \pm 0.219}$
& $1.861{\scriptstyle \pm 0.015}$ \\

& Augmented Lagrangian~\citep{zhang2025constraineddiffuserssafeplanning}
& $0.080$
& $72.920{\scriptstyle \pm 92.751}$
& $-0.010{\scriptstyle \pm 0.145}$
& $5.676{\scriptstyle \pm 0.566}$
& $0.000{\scriptstyle \pm 0.000}$
& $91.010{\scriptstyle \pm 55.079}$
& $2.152{\scriptstyle \pm 0.974}$
& $1.407{\scriptstyle \pm 0.452}$ \\

& SafeDiffuser-RoS~\citep{xiao2025safediffuser}
& $0.040$
& $66.260{\scriptstyle \pm 41.720}$
& $1.186{\scriptstyle \pm 0.533}$
& $148.967{\scriptstyle \pm 3.831}$
& $\mathbf{1.000{\scriptstyle \pm 0.000}}$
& $\mathbf{0.000{\scriptstyle \pm 0.000}}$
& $0.008{\scriptstyle \pm 0.007}$
& $0.010{\scriptstyle \pm 0.004}$ \\

& SafeDiffuser-ReS~\citep{xiao2025safediffuser}
& $0.520$
& $9.690{\scriptstyle \pm 11.948}$
& $0.907{\scriptstyle \pm 0.815}$
& $486.568{\scriptstyle \pm 61.073}$
& $\mathbf{1.000{\scriptstyle \pm 0.000}}$
& $\mathbf{0.000{\scriptstyle \pm 0.000}}$
& $0.003{\scriptstyle \pm 0.003}$
& $0.015{\scriptstyle \pm 0.006}$ \\

& SafeDiffuser-TVS~\citep{xiao2025safediffuser}
& $0.040$
& $62.240{\scriptstyle \pm 38.460}$
& $1.219{\scriptstyle \pm 0.561}$
& $128.758{\scriptstyle \pm 2.988}$
& $\mathbf{1.000{\scriptstyle \pm 0.000}}$
& $\mathbf{0.000{\scriptstyle \pm 0.000}}$
& $0.005{\scriptstyle \pm 0.004}$
& $0.010{\scriptstyle \pm 0.004}$ \\

& \textbf{DiRecT (Ours)}
& $\mathbf{0.970}$
& $\mathbf{0.450{\scriptstyle \pm 3.806}}$
& $1.500{\scriptstyle \pm 0.419}$
& $17.657{\scriptstyle \pm 0.629}$
& $\mathbf{1.000{\scriptstyle \pm 0.000}}$
& $\mathbf{0.000{\scriptstyle \pm 0.000}}$
& $\mathbf{0.002{\scriptstyle \pm 0.000}}$
& $\mathbf{0.008{\scriptstyle \pm 0.002}}$ \\

\midrule

\multirow{10}{*}{Maze2D Narrow}
& Diffuser~\citep{janner2022planningdiffusionflexiblebehavior}
& $0.000$
& $103.940{\scriptstyle \pm 21.881}$
& $1.601{\scriptstyle \pm 0.025}$
& $0.321{\scriptstyle \pm 0.086}$
& $0.000{\scriptstyle \pm 0.000}$
& $101.230{\scriptstyle \pm 24.551}$
& $0.057{\scriptstyle \pm 0.005}$
& $0.636{\scriptstyle \pm 0.077}$ \\

& Classifier Guidance~\citep{dhariwal2021diffusionmodelsbeatgans}
& $0.000$
& $85.280{\scriptstyle \pm 24.224}$
& $1.608{\scriptstyle \pm 0.022}$
& $0.695{\scriptstyle \pm 0.079}$
& $0.000{\scriptstyle \pm 0.000}$
& $76.120{\scriptstyle \pm 26.527}$
& $0.056{\scriptstyle \pm 0.005}$
& $0.629{\scriptstyle \pm 0.077}$ \\

& Gradient Guidance~\citep{chung2023diffusionposterior}
& $0.000$
& $51.300{\scriptstyle \pm 19.221}$
& $1.619{\scriptstyle \pm 0.029}$
& $3.608{\scriptstyle \pm 0.023}$
& $0.000{\scriptstyle \pm 0.000}$
& $23.390{\scriptstyle \pm 14.123}$
& $0.056{\scriptstyle \pm 0.005}$
& $0.647{\scriptstyle \pm 0.072}$ \\

& Projected Diffusion~\citep{christopher2024constrained}
& $0.780$
& $37.290{\scriptstyle \pm 129.876}$
& $0.163{\scriptstyle \pm 0.459}$
& $112.918{\scriptstyle \pm 19.517}$
& $0.420{\scriptstyle \pm 0.496}$
& $17.630{\scriptstyle \pm 30.680}$
& $0.018{\scriptstyle \pm 0.039}$
& $0.033{\scriptstyle \pm 0.077}$ \\

& Primal-Dual~\citep{zhang2025constraineddiffuserssafeplanning}
& $0.000$
& $55.970{\scriptstyle \pm 49.478}$
& $0.268{\scriptstyle \pm 0.418}$
& $5.154{\scriptstyle \pm 0.464}$
& $0.910{\scriptstyle \pm 0.288}$
& $0.110{\scriptstyle \pm 0.373}$
& $21.306{\scriptstyle \pm 0.226}$
& $1.861{\scriptstyle \pm 0.014}$ \\

& Augmented Lagrangian~\citep{zhang2025constraineddiffuserssafeplanning}
& $0.000$
& $217.510{\scriptstyle \pm 159.431}$
& $0.156{\scriptstyle \pm 0.449}$
& $6.022{\scriptstyle \pm 0.122}$
& $0.000{\scriptstyle \pm 0.000}$
& $197.520{\scriptstyle \pm 45.739}$
& $5.516{\scriptstyle \pm 1.301}$
& $1.765{\scriptstyle \pm 0.040}$ \\

& SafeDiffuser-RoS~\citep{xiao2025safediffuser}
& $0.000$
& $278.870{\scriptstyle \pm 209.870}$
& $0.735{\scriptstyle \pm 0.681}$
& $509.850{\scriptstyle \pm 19.929}$
& $0.000{\scriptstyle \pm 0.000}$
& $136.250{\scriptstyle \pm 39.029}$
& $0.419{\scriptstyle \pm 0.168}$
& $0.350{\scriptstyle \pm 0.148}$ \\

& SafeDiffuser-ReS~\citep{xiao2025safediffuser}
& $0.370$
& $79.090{\scriptstyle \pm 161.395}$
& $1.155{\scriptstyle \pm 0.764}$
& $1423.534{\scriptstyle \pm 41.319}$
& $0.760{\scriptstyle \pm 0.429}$
& $6.320{\scriptstyle \pm 16.578}$
& $0.031{\scriptstyle \pm 0.084}$
& $0.078{\scriptstyle \pm 0.164}$ \\

& SafeDiffuser-TVS~\citep{xiao2025safediffuser}
& $0.000$
& $315.040{\scriptstyle \pm 223.000}$
& $0.729{\scriptstyle \pm 0.694}$
& $384.860{\scriptstyle \pm 9.681}$
& $0.000{\scriptstyle \pm 0.000}$
& $35.460{\scriptstyle \pm 12.962}$
& $0.059{\scriptstyle \pm 0.026}$
& $0.038{\scriptstyle \pm 0.011}$ \\

& \textbf{DiRecT (Ours)}
& $\mathbf{0.940}$
& $\mathbf{0.230{\scriptstyle \pm 1.038}}$
& $\mathbf{1.624{\scriptstyle \pm 0.027}}$
& $98.093{\scriptstyle \pm 5.047}$
& $\mathbf{1.000{\scriptstyle \pm 0.000}}$
& $\mathbf{0.000{\scriptstyle \pm 0.000}}$
& $\mathbf{0.002{\scriptstyle \pm 0.000}}$
& $\mathbf{0.010{\scriptstyle \pm 0.006}}$ \\

\bottomrule
\end{tabular}
\end{adjustbox}
\end{table}

%% file: tables/d3il_results_additional.tex
\begin{table}[h]
\centering
\scriptsize
\setlength{\tabcolsep}{3.5pt}
\renewcommand{\arraystretch}{1.10}
\caption{Results on the D3IL avoiding constrained manipulation task. Mean values are computed over $100$ i.i.d. trials and reported with standard deviations where applicable. Best value in boldface.}
\label{tab:d3il results additional}
\begin{adjustbox}{max width=\linewidth}
\begin{tabular}{lcccc}
\toprule
\textbf{Method}
& \textbf{SR} $\uparrow$
& \textbf{Task} $\uparrow$
& \textbf{Steps (Safe)} $\downarrow$
& \textbf{Time (s)} $\downarrow$ \\
\midrule

Diffuser~\citep{janner2022planningdiffusionflexiblebehavior}
& $0.060$
& $0.960$
& $70.30{\scriptstyle \pm 11.83}$
& $0.224{\scriptstyle \pm 0.002}$ \\

Classifier Guidance~\citep{dhariwal2021diffusionmodelsbeatgans}
& $0.000$
& $0.000$
& --
& $0.627{\scriptstyle \pm 0.030}$ \\

Gradient Guidance~\citep{chung2023diffusionposterior}
& $0.880$
& $0.980$
& $62.58{\scriptstyle \pm 7.05}$
& $6.615{\scriptstyle \pm 0.540}$ \\

Projected Diffusion~\citep{christopher2024constrained}
& $0.770$
& $0.970$
& $63.77{\scriptstyle \pm 6.88}$
& $1.367{\scriptstyle \pm 0.057}$ \\

Primal-Dual~\citep{zhang2025constraineddiffuserssafeplanning}
& --
& --
& --
& -- \\

Augmented Lagrangian~\citep{zhang2025constraineddiffuserssafeplanning}
& $0.220$
& $0.840$
& $63.45{\scriptstyle \pm 7.22}$
& $4.152{\scriptstyle \pm 0.018}$ \\

SafeDiffuser-RoS~\citep{xiao2025safediffuser}
& $0.370$
& $0.820$
& $63.19{\scriptstyle \pm 11.56}$
& $18.890{\scriptstyle \pm 0.877}$ \\

SafeDiffuser-ReS~\citep{xiao2025safediffuser}
& $0.220$
& $0.860$
& $62.09{\scriptstyle \pm 9.62}$
& $12.949{\scriptstyle \pm 0.839}$ \\

SafeDiffuser-TVS~\citep{xiao2025safediffuser}
& $0.300$
& $0.850$
& $\mathbf{60.57{\scriptstyle \pm 8.49}}$
& $13.278{\scriptstyle \pm 0.488}$ \\

\textbf{DiRecT (Ours)}
& $\mathbf{1.000}$
& $\mathbf{1.000}$
& $62.86{\scriptstyle \pm 6.90}$
& $0.688{\scriptstyle \pm 0.061}$ \\

\bottomrule
\end{tabular}
\end{adjustbox}
\end{table}

%% file: tables/mrmp_results_additional.tex
\begin{table}[ht]
\centering
\scriptsize
\setlength{\tabcolsep}{2.0pt}
\renewcommand{\arraystretch}{1.05}
\caption{MRMP results on \texttt{Empty}. SR: success rate (\%), CS: constraint safety (\%), Coll: mean collisions, DA: data adherence, Time: computation time (s). Standard deviations are over different initializations and boldface represents best among the methods actively enforcing constraints.}
\label{tab:empty results additional}
\begin{adjustbox}{max width=\linewidth}
\begin{tabular}{clccccccccccccccc}
\toprule
\textbf{\# Ag.} & \textbf{Method} & \multicolumn{5}{c}{\textbf{Low ($v_{\max}$=0.675)}} & \multicolumn{5}{c}{\textbf{Medium ($v_{\max}$=0.692)}} & \multicolumn{5}{c}{\textbf{High ($v_{\max}$=0.703)}} \\
\cmidrule(lr){3-7}\cmidrule(lr){8-12}\cmidrule(lr){13-17}
 &  & \textbf{SR}$\uparrow$ & \textbf{CS}$\uparrow$ & \textbf{Coll}$\downarrow$ & \textbf{DA}$\uparrow$ & \textbf{Time (s)}$\downarrow$ & \textbf{SR}$\uparrow$ & \textbf{CS}$\uparrow$ & \textbf{Coll}$\downarrow$ & \textbf{DA}$\uparrow$ & \textbf{Time (s)}$\downarrow$ & \textbf{SR}$\uparrow$ & \textbf{CS}$\uparrow$ & \textbf{Coll}$\downarrow$ & \textbf{DA}$\uparrow$ & \textbf{Time (s)}$\downarrow$ \\
\midrule
\multirow{6}{*}{4} & Diffuser & 65.0 & 61.7${\scriptstyle \pm 47.4}$ & 1.15${\scriptstyle \pm 1.74}$ & 0.997${\scriptstyle \pm 0.018}$ & 1.0${\scriptstyle \pm 0.4}$ & 65.0 & 61.7${\scriptstyle \pm 47.4}$ & 1.15${\scriptstyle \pm 1.74}$ & 0.997${\scriptstyle \pm 0.018}$ & 1.0${\scriptstyle \pm 0.4}$ & 65.0 & 61.7${\scriptstyle \pm 47.4}$ & 1.15${\scriptstyle \pm 1.74}$ & 0.997${\scriptstyle \pm 0.018}$ & 1.0${\scriptstyle \pm 0.4}$ \\
 & MMD-CBS & 100.0 & 100.0 & 0.00 & -- & 5.5${\scriptstyle \pm 0.7}$ & 100.0 & 100.0 & 0.00 & -- & 5.5${\scriptstyle \pm 0.7}$ & 100.0 & 100.0 & 0.00 & -- & 5.5${\scriptstyle \pm 0.7}$ \\
 & PCD-LB & 85.0 & 81.4${\scriptstyle \pm 34.5}$ & 0.41${\scriptstyle \pm 0.90}$ & 0.739${\scriptstyle \pm 0.102}$ & 3.0${\scriptstyle \pm 1.5}$ & 75.0 & 73.3${\scriptstyle \pm 44.0}$ & 0.60${\scriptstyle \pm 1.08}$ & 0.984${\scriptstyle \pm 0.052}$ & 2.0 & 87.0 & 82.6${\scriptstyle \pm 33.5}$ & 0.39${\scriptstyle \pm 0.85}$ & 0.736${\scriptstyle \pm 0.102}$ & 2.8${\scriptstyle \pm 1.3}$ \\
 & PCD-SHD & 90.0 & 86.4${\scriptstyle \pm 31.8}$ & 0.27${\scriptstyle \pm 0.68}$ & 0.994${\scriptstyle \pm 0.020}$ & 2.0${\scriptstyle \pm 0.3}$ & 93.0 & 87.2${\scriptstyle \pm 30.8}$ & 0.22${\scriptstyle \pm 0.53}$ & 0.994${\scriptstyle \pm 0.020}$ & 1.9 & 95.0 & 90.0${\scriptstyle \pm 25.1}$ & 0.16${\scriptstyle \pm 0.42}$ & 0.994${\scriptstyle \pm 0.020}$ & 2.0${\scriptstyle \pm 0.1}$ \\
 & Final Proj. & 89.0 & 84.7${\scriptstyle \pm 35.1}$ & 0.26${\scriptstyle \pm 0.80}$ & 0.997${\scriptstyle \pm 0.017}$ & 2.9${\scriptstyle \pm 0.4}$ & 97.0 & 94.8${\scriptstyle \pm 20.1}$ & 0.14${\scriptstyle \pm 0.68}$ & \textbf{0.998${\scriptstyle \pm 0.015}$} & 2.3 & 97.0 & 96.1${\scriptstyle \pm 18.1}$ & 0.11${\scriptstyle \pm 0.57}$ & \textbf{0.998${\scriptstyle \pm 0.017}$} & 3.7${\scriptstyle \pm 2.3}$ \\
 & DiRecT (Ours) & \textbf{98.0} & \textbf{98.0${\scriptstyle \pm 14.0}$} & \textbf{0.05${\scriptstyle \pm 0.35}$} & \textbf{0.998${\scriptstyle \pm 0.016}$} & 31.1${\scriptstyle \pm 0.9}$ & \textbf{98.0} & \textbf{98.0${\scriptstyle \pm 14.0}$} & \textbf{0.05${\scriptstyle \pm 0.34}$} & \textbf{0.998${\scriptstyle \pm 0.015}$} & 19.8 & \textbf{99.0} & \textbf{98.2${\scriptstyle \pm 12.6}$} & \textbf{0.04${\scriptstyle \pm 0.31}$} & \textbf{0.998${\scriptstyle \pm 0.016}$} & 29.4${\scriptstyle \pm 11.9}$ \\
\midrule
\multirow{6}{*}{8} & Diffuser & 15.0 & 9.4${\scriptstyle \pm 28.2}$ & 6.20${\scriptstyle \pm 4.44}$ & 0.996${\scriptstyle \pm 0.015}$ & 0.9${\scriptstyle \pm 0.4}$ & 15.0 & 9.4${\scriptstyle \pm 28.2}$ & 6.20${\scriptstyle \pm 4.44}$ & 0.996${\scriptstyle \pm 0.015}$ & 0.9${\scriptstyle \pm 0.4}$ & 15.0 & 9.4${\scriptstyle \pm 28.2}$ & 6.20${\scriptstyle \pm 4.44}$ & 0.996${\scriptstyle \pm 0.015}$ & 0.9${\scriptstyle \pm 0.4}$ \\
 & MMD-CBS & 100.0 & 100.0 & 0.00 & -- & 13.3${\scriptstyle \pm 0.2}$ & 100.0 & 100.0 & 0.00 & -- & 13.3${\scriptstyle \pm 0.2}$ & 100.0 & 100.0 & 0.00 & -- & 13.3${\scriptstyle \pm 0.2}$ \\
 & PCD-LB & 40.5 & 22.1${\scriptstyle \pm 36.7}$ & 3.35${\scriptstyle \pm 2.85}$ & 0.744${\scriptstyle \pm 0.073}$ & 6.0${\scriptstyle \pm 2.0}$ & 25.0 & 19.7${\scriptstyle \pm 38.1}$ & 3.91${\scriptstyle \pm 3.55}$ & 0.977${\scriptstyle \pm 0.041}$ & 4.1 & 40.5 & 23.4${\scriptstyle \pm 37.8}$ & 3.06${\scriptstyle \pm 2.58}$ & 0.743${\scriptstyle \pm 0.074}$ & 5.3${\scriptstyle \pm 2.0}$ \\
 & PCD-SHD & 74.0 & 58.9${\scriptstyle \pm 44.2}$ & 1.15${\scriptstyle \pm 1.64}$ & 0.985${\scriptstyle \pm 0.022}$ & 3.9${\scriptstyle \pm 0.2}$ & 77.0 & 66.3${\scriptstyle \pm 41.8}$ & 0.89${\scriptstyle \pm 1.36}$ & 0.986${\scriptstyle \pm 0.022}$ & 3.6 & 82.0 & 68.8${\scriptstyle \pm 40.7}$ & 0.74${\scriptstyle \pm 1.12}$ & 0.986${\scriptstyle \pm 0.022}$ & 2.8${\scriptstyle \pm 0.2}$ \\
 & Final Proj. & 45.0 & 31.6${\scriptstyle \pm 43.2}$ & 2.11${\scriptstyle \pm 2.57}$ & \textbf{0.995${\scriptstyle \pm 0.016}$} & 2.7${\scriptstyle \pm 0.3}$ & 73.0 & 57.7${\scriptstyle \pm 45.0}$ & 1.22${\scriptstyle \pm 2.06}$ & 0.995${\scriptstyle \pm 0.015}$ & 2.1 & 79.0 & 66.1${\scriptstyle \pm 43.3}$ & 0.93${\scriptstyle \pm 1.69}$ & \textbf{0.995${\scriptstyle \pm 0.015}$} & 3.3${\scriptstyle \pm 1.6}$ \\
 & DiRecT (Ours) & \textbf{95.0} & \textbf{94.0${\scriptstyle \pm 22.8}$} & \textbf{0.13${\scriptstyle \pm 0.55}$} & \textbf{0.995${\scriptstyle \pm 0.015}$} & 26.5${\scriptstyle \pm 7.4}$ & \textbf{96.0} & \textbf{93.1${\scriptstyle \pm 24.1}$} & \textbf{0.13${\scriptstyle \pm 0.52}$} & \textbf{0.996${\scriptstyle \pm 0.015}$} & 23.9 & \textbf{97.0} & \textbf{94.3${\scriptstyle \pm 22.2}$} & \textbf{0.12${\scriptstyle \pm 0.53}$} & \textbf{0.995${\scriptstyle \pm 0.015}$} & 29.8${\scriptstyle \pm 12.1}$ \\
\midrule
\multirow{6}{*}{12} & Diffuser & 1.0 & 1.0${\scriptstyle \pm 9.9}$ & 14.52${\scriptstyle \pm 7.73}$ & 0.998${\scriptstyle \pm 0.006}$ & 1.0${\scriptstyle \pm 0.2}$ & 1.0 & 1.0${\scriptstyle \pm 9.9}$ & 14.52${\scriptstyle \pm 7.73}$ & 0.998${\scriptstyle \pm 0.006}$ & 1.0${\scriptstyle \pm 0.2}$ & 1.0 & 1.0${\scriptstyle \pm 9.9}$ & 14.52${\scriptstyle \pm 7.73}$ & 0.998${\scriptstyle \pm 0.006}$ & 1.0${\scriptstyle \pm 0.2}$ \\
 & MMD-CBS & 100.0 & 100.0 & 0.00 & -- & 56.9${\scriptstyle \pm 2.4}$ & 100.0 & 100.0 & 0.00 & -- & 56.9${\scriptstyle \pm 2.4}$ & 100.0 & 100.0 & 0.00 & -- & 56.9${\scriptstyle \pm 2.4}$ \\
 & PCD-LB & 3.0 & 1.3${\scriptstyle \pm 9.3}$ & 11.43${\scriptstyle \pm 5.65}$ & 0.748${\scriptstyle \pm 0.050}$ & 6.2${\scriptstyle \pm 2.6}$ & 2.0 & 2.0${\scriptstyle \pm 14.0}$ & 11.04${\scriptstyle \pm 6.40}$ & 0.987${\scriptstyle \pm 0.022}$ & 3.7 & 2.5 & 1.3${\scriptstyle \pm 9.4}$ & 10.78${\scriptstyle \pm 5.36}$ & 0.748${\scriptstyle \pm 0.050}$ & 5.1${\scriptstyle \pm 1.9}$ \\
 & PCD-SHD & 34.0 & 18.8${\scriptstyle \pm 31.9}$ & 2.97${\scriptstyle \pm 2.66}$ & 0.984${\scriptstyle \pm 0.013}$ & 3.8${\scriptstyle \pm 0.7}$ & 48.0 & 27.8${\scriptstyle \pm 35.7}$ & 2.25${\scriptstyle \pm 2.10}$ & 0.984${\scriptstyle \pm 0.013}$ & 3.9 & 62.0 & 32.6${\scriptstyle \pm 37.6}$ & 1.90${\scriptstyle \pm 1.79}$ & 0.984${\scriptstyle \pm 0.013}$ & 3.5${\scriptstyle \pm 0.2}$ \\
 & Final Proj. & 25.0 & 15.7${\scriptstyle \pm 33.8}$ & 4.80${\scriptstyle \pm 4.40}$ & \textbf{0.997${\scriptstyle \pm 0.006}$} & 4.4${\scriptstyle \pm 1.0}$ & 51.0 & 32.9${\scriptstyle \pm 44.0}$ & 2.98${\scriptstyle \pm 3.65}$ & \textbf{0.997${\scriptstyle \pm 0.006}$} & 3.6 & 62.0 & 48.9${\scriptstyle \pm 46.0}$ & 2.18${\scriptstyle \pm 3.09}$ & \textbf{0.997${\scriptstyle \pm 0.006}$} & 4.4${\scriptstyle \pm 0.9}$ \\
 & DiRecT (Ours) & \textbf{95.5} & \textbf{90.5${\scriptstyle \pm 27.7}$} & \textbf{0.24${\scriptstyle \pm 0.78}$} & \textbf{0.997${\scriptstyle \pm 0.007}$} & 42.6${\scriptstyle \pm 2.8}$ & \textbf{95.0} & \textbf{87.0${\scriptstyle \pm 30.4}$} & \textbf{0.30${\scriptstyle \pm 0.83}$} & \textbf{0.997${\scriptstyle \pm 0.007}$} & 37.0 & \textbf{95.5} & \textbf{87.3${\scriptstyle \pm 29.7}$} & \textbf{0.29${\scriptstyle \pm 0.76}$} & \textbf{0.997${\scriptstyle \pm 0.007}$} & 39.5${\scriptstyle \pm 1.4}$ \\
\midrule
\multirow{6}{*}{16} & Diffuser & 0.0 & 0.0${\scriptstyle \pm 0.0}$ & 26.17${\scriptstyle \pm 9.98}$ & 0.998${\scriptstyle \pm 0.006}$ & 1.4${\scriptstyle \pm 0.4}$ & 0.0 & 0.0${\scriptstyle \pm 0.0}$ & 26.17${\scriptstyle \pm 9.98}$ & 0.998${\scriptstyle \pm 0.006}$ & 1.4${\scriptstyle \pm 0.4}$ & 0.0 & 0.0${\scriptstyle \pm 0.0}$ & 26.17${\scriptstyle \pm 9.98}$ & 0.998${\scriptstyle \pm 0.006}$ & 1.4${\scriptstyle \pm 0.4}$ \\
 & MMD-CBS & 100.0 & 100.0 & 0.00 & -- & 48.0${\scriptstyle \pm 3.3}$ & 100.0 & 100.0 & 0.00 & -- & 48.0${\scriptstyle \pm 3.3}$ & 100.0 & 100.0 & 0.00 & -- & 48.0${\scriptstyle \pm 3.3}$ \\
 & PCD-LB & 0.0 & 0.0${\scriptstyle \pm 0.0}$ & 30.57${\scriptstyle \pm 9.83}$ & 0.740${\scriptstyle \pm 0.044}$ & 8.7${\scriptstyle \pm 3.8}$ & 0.0 & 0.0${\scriptstyle \pm 0.0}$ & 21.22${\scriptstyle \pm 8.61}$ & 0.986${\scriptstyle \pm 0.023}$ & 5.4 & 0.0 & 0.0${\scriptstyle \pm 0.0}$ & 29.78${\scriptstyle \pm 9.75}$ & 0.740${\scriptstyle \pm 0.044}$ & 6.7${\scriptstyle \pm 2.0}$ \\
 & PCD-SHD & 19.0 & 7.0${\scriptstyle \pm 19.0}$ & 6.18${\scriptstyle \pm 3.71}$ & 0.979${\scriptstyle \pm 0.018}$ & 6.7${\scriptstyle \pm 0.9}$ & 29.0 & 10.4${\scriptstyle \pm 23.9}$ & 4.85${\scriptstyle \pm 3.13}$ & 0.979${\scriptstyle \pm 0.017}$ & 5.2 & 29.0 & 12.4${\scriptstyle \pm 25.1}$ & 4.12${\scriptstyle \pm 2.76}$ & 0.979${\scriptstyle \pm 0.017}$ & 4.9${\scriptstyle \pm 0.3}$ \\
 & Final Proj. & 5.0 & 2.6${\scriptstyle \pm 11.9}$ & 9.16${\scriptstyle \pm 5.09}$ & \textbf{0.995${\scriptstyle \pm 0.007}$} & 6.4${\scriptstyle \pm 1.1}$ & 22.0 & 12.1${\scriptstyle \pm 28.6}$ & 5.66${\scriptstyle \pm 4.07}$ & 0.995${\scriptstyle \pm 0.007}$ & 5.7 & 36.5 & 20.1${\scriptstyle \pm 35.2}$ & 4.23${\scriptstyle \pm 3.51}$ & 0.995${\scriptstyle \pm 0.007}$ & 5.7${\scriptstyle \pm 0.3}$ \\
 & DiRecT (Ours) & \textbf{90.0} & \textbf{80.4${\scriptstyle \pm 35.4}$} & \textbf{0.57${\scriptstyle \pm 1.30}$} & \textbf{0.995${\scriptstyle \pm 0.007}$} & 58.2${\scriptstyle \pm 3.7}$ & \textbf{93.0} & \textbf{78.7${\scriptstyle \pm 37.4}$} & \textbf{0.54${\scriptstyle \pm 1.16}$} & \textbf{0.996${\scriptstyle \pm 0.007}$} & 59.1 & \textbf{93.0} & \textbf{81.8${\scriptstyle \pm 34.5}$} & \textbf{0.42${\scriptstyle \pm 0.95}$} & \textbf{0.996${\scriptstyle \pm 0.007}$} & 60.9${\scriptstyle \pm 0.9}$ \\
\midrule
\multirow{6}{*}{20} & Diffuser & 0.0 & 0.0${\scriptstyle \pm 0.0}$ & 40.39${\scriptstyle \pm 14.13}$ & 0.998${\scriptstyle \pm 0.006}$ & 1.7${\scriptstyle \pm 0.4}$ & 0.0 & 0.0${\scriptstyle \pm 0.0}$ & 40.39${\scriptstyle \pm 14.13}$ & 0.998${\scriptstyle \pm 0.006}$ & 1.7${\scriptstyle \pm 0.4}$ & 0.0 & 0.0${\scriptstyle \pm 0.0}$ & 40.39${\scriptstyle \pm 14.13}$ & 0.998${\scriptstyle \pm 0.006}$ & 1.7${\scriptstyle \pm 0.4}$ \\
 & MMD-CBS & 33.3 & 46.7 & 14.67 & -- & 69.2${\scriptstyle \pm 8.5}$ & 33.3 & 46.7 & 14.67 & -- & 69.2${\scriptstyle \pm 8.5}$ & 33.3 & 46.7 & 14.67 & -- & 69.2${\scriptstyle \pm 8.5}$ \\
 & PCD-LB & 0.0 & 0.0${\scriptstyle \pm 0.0}$ & 34.43${\scriptstyle \pm 12.76}$ & 0.988${\scriptstyle \pm 0.018}$ & 8.7 & 0.0 & 0.0${\scriptstyle \pm 0.0}$ & 33.09${\scriptstyle \pm 12.23}$ & 0.988${\scriptstyle \pm 0.018}$ & 8.0 & 0.0 & 0.0${\scriptstyle \pm 0.0}$ & 32.03${\scriptstyle \pm 11.73}$ & 0.988${\scriptstyle \pm 0.018}$ & 7.6 \\
 & PCD-SHD & 7.0 & 2.7${\scriptstyle \pm 12.5}$ & 9.07${\scriptstyle \pm 4.94}$ & 0.973${\scriptstyle \pm 0.018}$ & 7.1 & 11.0 & 4.1${\scriptstyle \pm 16.3}$ & 7.20${\scriptstyle \pm 4.10}$ & 0.973${\scriptstyle \pm 0.018}$ & 8.1 & 19.0 & 5.5${\scriptstyle \pm 18.4}$ & 6.07${\scriptstyle \pm 3.45}$ & 0.973${\scriptstyle \pm 0.018}$ & 8.0 \\
 & Final Proj. & 1.0 & 0.0${\scriptstyle \pm 0.4}$ & 13.54${\scriptstyle \pm 7.95}$ & 0.995${\scriptstyle \pm 0.007}$ & 9.3${\scriptstyle \pm 2.8}$ & 13.0 & 4.7${\scriptstyle \pm 16.5}$ & 8.24${\scriptstyle \pm 6.38}$ & 0.995${\scriptstyle \pm 0.007}$ & 8.1 & 23.0 & 11.3${\scriptstyle \pm 25.5}$ & 6.37${\scriptstyle \pm 5.59}$ & 0.995${\scriptstyle \pm 0.007}$ & 8.1${\scriptstyle \pm 0.2}$ \\
 & DiRecT (Ours) & \textbf{90.0} & \textbf{76.1${\scriptstyle \pm 39.0}$} & \textbf{0.82${\scriptstyle \pm 1.88}$} & \textbf{0.996${\scriptstyle \pm 0.007}$} & 80.7 & \textbf{87.0} & \textbf{77.5${\scriptstyle \pm 37.5}$} & \textbf{0.76${\scriptstyle \pm 1.71}$} & \textbf{0.996${\scriptstyle \pm 0.007}$} & 87.2 & \textbf{90.0} & \textbf{80.9${\scriptstyle \pm 35.3}$} & \textbf{0.62${\scriptstyle \pm 1.48}$} & \textbf{0.996${\scriptstyle \pm 0.007}$} & 89.6 \\
\bottomrule
\end{tabular}
\end{adjustbox}
\end{table}

\begin{table}[ht]
\centering
\scriptsize
\setlength{\tabcolsep}{2.0pt}
\renewcommand{\arraystretch}{1.05}
\caption{MRMP results on \texttt{Highways}. SR: success rate (\%), CS: constraint safety (\%), Coll: mean collisions, DA: data adherence, Time: computation time (s). Standard deviations are over different initializations and boldface represents best among the methods actively enforcing constraints.}
\label{tab:highways results additional}
\begin{adjustbox}{max width=\linewidth}
\begin{tabular}{clccccccccccccccc}
\toprule
\textbf{\# Ag.} & \textbf{Method} & \multicolumn{5}{c}{\textbf{Low ($v_{\max}$=0.647)}} & \multicolumn{5}{c}{\textbf{Medium ($v_{\max}$=0.781)}} & \multicolumn{5}{c}{\textbf{High ($v_{\max}$=0.878)}} \\
\cmidrule(lr){3-7}\cmidrule(lr){8-12}\cmidrule(lr){13-17}
 &  & \textbf{SR}$\uparrow$ & \textbf{CS}$\uparrow$ & \textbf{Coll}$\downarrow$ & \textbf{DA}$\uparrow$ & \textbf{Time (s)}$\downarrow$ & \textbf{SR}$\uparrow$ & \textbf{CS}$\uparrow$ & \textbf{Coll}$\downarrow$ & \textbf{DA}$\uparrow$ & \textbf{Time (s)}$\downarrow$ & \textbf{SR}$\uparrow$ & \textbf{CS}$\uparrow$ & \textbf{Coll}$\downarrow$ & \textbf{DA}$\uparrow$ & \textbf{Time (s)}$\downarrow$ \\
\midrule
\multirow{6}{*}{4} & Diffuser & 54.0 & 16.2${\scriptstyle \pm 30.5}$ & 8.91${\scriptstyle \pm 7.89}$ & 0.994${\scriptstyle \pm 0.034}$ & 0.5${\scriptstyle \pm 0.2}$ & 54.0 & 16.2${\scriptstyle \pm 30.5}$ & 8.91${\scriptstyle \pm 7.89}$ & 0.994${\scriptstyle \pm 0.034}$ & 0.5${\scriptstyle \pm 0.2}$ & 54.0 & 16.2${\scriptstyle \pm 30.5}$ & 8.91${\scriptstyle \pm 7.89}$ & 0.994${\scriptstyle \pm 0.034}$ & 0.5${\scriptstyle \pm 0.2}$ \\
 & MMD-CBS & 100.0 & 100.0 & 0.00 & -- & 5.2${\scriptstyle \pm 0.6}$ & 100.0 & 100.0 & 0.00 & -- & 5.2${\scriptstyle \pm 0.6}$ & 100.0 & 100.0 & 0.00 & -- & 5.2${\scriptstyle \pm 0.6}$ \\
 & PCD-LB & 94.0 & 60.4${\scriptstyle \pm 38.8}$ & 0.82${\scriptstyle \pm 1.20}$ & 0.977${\scriptstyle \pm 0.066}$ & 1.9 & 98.0 & 69.6${\scriptstyle \pm 32.3}$ & 0.82${\scriptstyle \pm 1.12}$ & 0.980${\scriptstyle \pm 0.062}$ & 1.9 & 48.5 & 34.2${\scriptstyle \pm 15.8}$ & 0.44${\scriptstyle \pm 0.59}$ & 0.938${\scriptstyle \pm 0.085}$ & 1.8${\scriptstyle \pm 0.1}$ \\
 & PCD-SHD & 96.0 & 78.1${\scriptstyle \pm 31.9}$ & 0.07${\scriptstyle \pm 0.20}$ & 0.978${\scriptstyle \pm 0.064}$ & 1.8 & \textbf{100.0} & 93.6${\scriptstyle \pm 9.6}$ & 0.03${\scriptstyle \pm 0.04}$ & 0.980${\scriptstyle \pm 0.060}$ & 2.0 & \textbf{100.0} & 94.9${\scriptstyle \pm 8.3}$ & 0.02${\scriptstyle \pm 0.05}$ & 0.981${\scriptstyle \pm 0.058}$ & 1.8${\scriptstyle \pm 0.0}$ \\
 & Final Proj. & 99.0 & 76.7${\scriptstyle \pm 30.4}$ & 0.72${\scriptstyle \pm 1.52}$ & \textbf{0.994${\scriptstyle \pm 0.034}$} & 2.5 & \textbf{100.0} & 99.5${\scriptstyle \pm 3.8}$ & 0.01${\scriptstyle \pm 0.08}$ & \textbf{0.994${\scriptstyle \pm 0.034}$} & 2.5 & \textbf{100.0} & \textbf{100.0${\scriptstyle \pm 0.0}$} & \textbf{0.00${\scriptstyle \pm 0.00}$} & \textbf{0.994${\scriptstyle \pm 0.034}$} & 4.4${\scriptstyle \pm 2.5}$ \\
 & DiRecT (Ours) & \textbf{100.0} & \textbf{100.0${\scriptstyle \pm 0.0}$} & \textbf{0.00${\scriptstyle \pm 0.00}$} & 0.993${\scriptstyle \pm 0.035}$ & 28.5 & \textbf{100.0} & \textbf{100.0${\scriptstyle \pm 0.0}$} & \textbf{0.00${\scriptstyle \pm 0.00}$} & 0.993${\scriptstyle \pm 0.035}$ & 28.6 & \textbf{100.0} & \textbf{100.0${\scriptstyle \pm 0.0}$} & \textbf{0.00${\scriptstyle \pm 0.00}$} & 0.993${\scriptstyle \pm 0.035}$ & 49.2${\scriptstyle \pm 27.6}$ \\
\midrule
\multirow{6}{*}{8} & Diffuser & 2.0 & 0.1${\scriptstyle \pm 0.9}$ & 39.77${\scriptstyle \pm 18.43}$ & 0.992${\scriptstyle \pm 0.025}$ & 0.5${\scriptstyle \pm 0.2}$ & 2.0 & 0.1${\scriptstyle \pm 0.9}$ & 39.77${\scriptstyle \pm 18.43}$ & 0.992${\scriptstyle \pm 0.025}$ & 0.5${\scriptstyle \pm 0.2}$ & 2.0 & 0.1${\scriptstyle \pm 0.9}$ & 39.77${\scriptstyle \pm 18.43}$ & 0.992${\scriptstyle \pm 0.025}$ & 0.5${\scriptstyle \pm 0.2}$ \\
 & MMD-CBS & 100.0 & 100.0 & 0.00 & -- & 42.7${\scriptstyle \pm 2.0}$ & 100.0 & 100.0 & 0.00 & -- & 42.7${\scriptstyle \pm 2.0}$ & 100.0 & 100.0 & 0.00 & -- & 42.7${\scriptstyle \pm 2.0}$ \\
 & PCD-LB & 63.0 & 11.1${\scriptstyle \pm 18.6}$ & 5.12${\scriptstyle \pm 3.42}$ & 0.972${\scriptstyle \pm 0.047}$ & 2.5 & 72.0 & 12.3${\scriptstyle \pm 20.5}$ & 6.16${\scriptstyle \pm 3.28}$ & 0.975${\scriptstyle \pm 0.043}$ & 2.5 & 27.0 & 3.3${\scriptstyle \pm 5.9}$ & 3.83${\scriptstyle \pm 2.00}$ & 0.949${\scriptstyle \pm 0.056}$ & 2.5${\scriptstyle \pm 0.4}$ \\
 & PCD-SHD & 91.0 & 38.5${\scriptstyle \pm 34.3}$ & 0.37${\scriptstyle \pm 0.47}$ & 0.973${\scriptstyle \pm 0.045}$ & 2.4 & \textbf{100.0} & 64.3${\scriptstyle \pm 26.5}$ & 0.23${\scriptstyle \pm 0.29}$ & 0.976${\scriptstyle \pm 0.041}$ & 2.4 & \textbf{100.0} & 69.2${\scriptstyle \pm 23.0}$ & 0.15${\scriptstyle \pm 0.17}$ & 0.977${\scriptstyle \pm 0.040}$ & 2.3${\scriptstyle \pm 0.1}$ \\
 & Final Proj. & 88.0 & 30.9${\scriptstyle \pm 32.1}$ & 4.22${\scriptstyle \pm 4.10}$ & \textbf{0.992${\scriptstyle \pm 0.025}$} & 3.5 & \textbf{100.0} & 98.1${\scriptstyle \pm 5.6}$ & 0.07${\scriptstyle \pm 0.27}$ & \textbf{0.992${\scriptstyle \pm 0.025}$} & 3.5 & \textbf{100.0} & 99.8${\scriptstyle \pm 1.1}$ & 0.01${\scriptstyle \pm 0.07}$ & \textbf{0.992${\scriptstyle \pm 0.025}$} & 4.9${\scriptstyle \pm 1.9}$ \\
 & DiRecT (Ours) & \textbf{100.0} & \textbf{100.0${\scriptstyle \pm 0.1}$} & \textbf{0.00${\scriptstyle \pm 0.01}$} & 0.991${\scriptstyle \pm 0.026}$ & 40.4 & \textbf{100.0} & \textbf{100.0${\scriptstyle \pm 0.0}$} & \textbf{0.00${\scriptstyle \pm 0.00}$} & \textbf{0.992${\scriptstyle \pm 0.026}$} & 40.3 & \textbf{100.0} & \textbf{100.0${\scriptstyle \pm 0.0}$} & \textbf{0.00${\scriptstyle \pm 0.00}$} & \textbf{0.992${\scriptstyle \pm 0.026}$} & 54.3${\scriptstyle \pm 19.9}$ \\
\midrule
\multirow{6}{*}{12} & Diffuser & 0.0 & 0.0${\scriptstyle \pm 0.0}$ & 91.38${\scriptstyle \pm 23.06}$ & 0.991${\scriptstyle \pm 0.022}$ & 0.6${\scriptstyle \pm 0.3}$ & 0.0 & 0.0${\scriptstyle \pm 0.0}$ & 91.38${\scriptstyle \pm 23.06}$ & 0.991${\scriptstyle \pm 0.022}$ & 0.6${\scriptstyle \pm 0.3}$ & 0.0 & 0.0${\scriptstyle \pm 0.0}$ & 91.38${\scriptstyle \pm 23.06}$ & 0.991${\scriptstyle \pm 0.022}$ & 0.6${\scriptstyle \pm 0.3}$ \\
 & MMD-CBS & 33.3 & 63.9 & 3.67 & -- & 59.5${\scriptstyle \pm 2.4}$ & 33.3 & 63.9 & 3.67 & -- & 59.5${\scriptstyle \pm 2.4}$ & 33.3 & 63.9 & 3.67 & -- & 59.5${\scriptstyle \pm 2.4}$ \\
 & PCD-LB & 9.0 & 0.5${\scriptstyle \pm 3.1}$ & 19.54${\scriptstyle \pm 8.23}$ & 0.971${\scriptstyle \pm 0.044}$ & 3.6 & 1.0 & 0.0${\scriptstyle \pm 0.1}$ & 25.21${\scriptstyle \pm 8.56}$ & 0.974${\scriptstyle \pm 0.041}$ & 2.8 & 0.0 & 0.0${\scriptstyle \pm 0.0}$ & 17.40${\scriptstyle \pm 5.68}$ & 0.952${\scriptstyle \pm 0.044}$ & 2.9${\scriptstyle \pm 0.4}$ \\
 & PCD-SHD & 78.0 & 11.9${\scriptstyle \pm 19.2}$ & 1.05${\scriptstyle \pm 0.90}$ & 0.972${\scriptstyle \pm 0.042}$ & 3.1 & 99.0 & 26.1${\scriptstyle \pm 21.2}$ & 0.63${\scriptstyle \pm 0.51}$ & 0.975${\scriptstyle \pm 0.039}$ & 2.9 & \textbf{100.0} & 30.3${\scriptstyle \pm 20.3}$ & 0.44${\scriptstyle \pm 0.33}$ & 0.976${\scriptstyle \pm 0.038}$ & 2.6${\scriptstyle \pm 0.1}$ \\
 & Final Proj. & 55.0 & 7.5${\scriptstyle \pm 15.7}$ & 11.73${\scriptstyle \pm 6.72}$ & \textbf{0.991${\scriptstyle \pm 0.022}$} & 5.4 & \textbf{100.0} & 92.4${\scriptstyle \pm 10.7}$ & 0.31${\scriptstyle \pm 0.51}$ & \textbf{0.991${\scriptstyle \pm 0.022}$} & 5.4 & \textbf{100.0} & 99.7${\scriptstyle \pm 0.6}$ & 0.01${\scriptstyle \pm 0.02}$ & \textbf{0.991${\scriptstyle \pm 0.022}$} & 5.8${\scriptstyle \pm 0.6}$ \\
 & DiRecT (Ours) & \textbf{100.0} & \textbf{99.7${\scriptstyle \pm 1.4}$} & \textbf{0.01${\scriptstyle \pm 0.03}$} & 0.990${\scriptstyle \pm 0.023}$ & 65.0 & \textbf{100.0} & \textbf{100.0${\scriptstyle \pm 0.0}$} & \textbf{0.00${\scriptstyle \pm 0.00}$} & 0.990${\scriptstyle \pm 0.023}$ & 64.7 & \textbf{100.0} & \textbf{100.0${\scriptstyle \pm 0.0}$} & \textbf{0.00${\scriptstyle \pm 0.00}$} & 0.990${\scriptstyle \pm 0.023}$ & 65.8${\scriptstyle \pm 2.0}$ \\
\midrule
\multirow{6}{*}{16} & Diffuser & 0.0 & 0.0${\scriptstyle \pm 0.0}$ & 165.86${\scriptstyle \pm 31.12}$ & 0.995${\scriptstyle \pm 0.014}$ & 0.8${\scriptstyle \pm 0.3}$ & 0.0 & 0.0${\scriptstyle \pm 0.0}$ & 165.86${\scriptstyle \pm 31.12}$ & 0.995${\scriptstyle \pm 0.014}$ & 0.8${\scriptstyle \pm 0.3}$ & 0.0 & 0.0${\scriptstyle \pm 0.0}$ & 165.86${\scriptstyle \pm 31.12}$ & 0.995${\scriptstyle \pm 0.014}$ & 0.8${\scriptstyle \pm 0.3}$ \\
 & MMD-CBS & 0.0 & 6.2 & 27.00 & -- & 72.4${\scriptstyle \pm 4.1}$ & 0.0 & 6.2 & 27.00 & -- & 72.4${\scriptstyle \pm 4.1}$ & 0.0 & 6.2 & 27.00 & -- & 72.4${\scriptstyle \pm 4.1}$ \\
 & PCD-LB & 1.0 & 0.1${\scriptstyle \pm 0.6}$ & 45.62${\scriptstyle \pm 12.48}$ & 0.973${\scriptstyle \pm 0.033}$ & 3.8 & 0.0 & 0.0${\scriptstyle \pm 0.0}$ & 56.59${\scriptstyle \pm 12.29}$ & 0.976${\scriptstyle \pm 0.031}$ & 3.7 & 0.0 & 0.0${\scriptstyle \pm 0.0}$ & 43.82${\scriptstyle \pm 9.83}$ & 0.956${\scriptstyle \pm 0.039}$ & 3.8${\scriptstyle \pm 0.7}$ \\
 & PCD-SHD & 41.0 & 3.9${\scriptstyle \pm 10.0}$ & 2.16${\scriptstyle \pm 1.43}$ & 0.975${\scriptstyle \pm 0.031}$ & 3.8 & 80.0 & 8.6${\scriptstyle \pm 12.3}$ & 1.49${\scriptstyle \pm 1.01}$ & 0.978${\scriptstyle \pm 0.029}$ & 3.6 & 87.0 & 9.1${\scriptstyle \pm 11.5}$ & 1.06${\scriptstyle \pm 0.66}$ & 0.979${\scriptstyle \pm 0.028}$ & 3.4${\scriptstyle \pm 0.1}$ \\
 & Final Proj. & 19.0 & 2.0${\scriptstyle \pm 9.2}$ & 27.99${\scriptstyle \pm 14.18}$ & \textbf{0.994${\scriptstyle \pm 0.014}$} & 7.7 & \textbf{100.0} & 74.5${\scriptstyle \pm 23.2}$ & 1.34${\scriptstyle \pm 1.96}$ & \textbf{0.994${\scriptstyle \pm 0.014}$} & 7.8 & \textbf{100.0} & 98.7${\scriptstyle \pm 4.0}$ & 0.04${\scriptstyle \pm 0.14}$ & \textbf{0.994${\scriptstyle \pm 0.014}$} & 8.2${\scriptstyle \pm 0.6}$ \\
 & DiRecT (Ours) & \textbf{100.0} & \textbf{97.7${\scriptstyle \pm 7.7}$} & \textbf{0.07${\scriptstyle \pm 0.32}$} & \textbf{0.994${\scriptstyle \pm 0.015}$} & 93.6 & \textbf{100.0} & \textbf{99.9${\scriptstyle \pm 1.2}$} & \textbf{0.00${\scriptstyle \pm 0.04}$} & \textbf{0.994${\scriptstyle \pm 0.015}$} & 93.8 & \textbf{100.0} & \textbf{100.0${\scriptstyle \pm 0.0}$} & \textbf{0.00${\scriptstyle \pm 0.00}$} & \textbf{0.994${\scriptstyle \pm 0.015}$} & 94.4${\scriptstyle \pm 1.7}$ \\
\midrule
\multirow{6}{*}{20} & Diffuser & 0.0 & 0.0${\scriptstyle \pm 0.0}$ & 272.99${\scriptstyle \pm 46.77}$ & 0.992${\scriptstyle \pm 0.017}$ & 0.8${\scriptstyle \pm 0.3}$ & 0.0 & 0.0${\scriptstyle \pm 0.0}$ & 272.99${\scriptstyle \pm 46.77}$ & 0.992${\scriptstyle \pm 0.017}$ & 0.8${\scriptstyle \pm 0.3}$ & 0.0 & 0.0${\scriptstyle \pm 0.0}$ & 272.99${\scriptstyle \pm 46.77}$ & 0.992${\scriptstyle \pm 0.017}$ & 0.8${\scriptstyle \pm 0.3}$ \\
 & MMD-CBS & 0.0 & 0.0 & 35.00 & -- & 73.8${\scriptstyle \pm 5.2}$ & 0.0 & 0.0 & 35.00 & -- & 73.8${\scriptstyle \pm 5.2}$ & 0.0 & 0.0 & 35.00 & -- & 73.8${\scriptstyle \pm 5.2}$ \\
 & PCD-LB & 0.0 & 0.0${\scriptstyle \pm 0.0}$ & 85.17${\scriptstyle \pm 18.69}$ & 0.973${\scriptstyle \pm 0.032}$ & 5.2 & 0.0 & 0.0${\scriptstyle \pm 0.0}$ & 105.16${\scriptstyle \pm 19.39}$ & 0.976${\scriptstyle \pm 0.030}$ & 4.3 & 0.0 & 0.0${\scriptstyle \pm 0.0}$ & 120.10${\scriptstyle \pm 19.24}$ & 0.978${\scriptstyle \pm 0.028}$ & 3.8 \\
 & PCD-SHD & 11.0 & 0.2${\scriptstyle \pm 0.7}$ & 4.07${\scriptstyle \pm 1.96}$ & 0.974${\scriptstyle \pm 0.030}$ & 4.8 & 39.0 & 1.0${\scriptstyle \pm 2.3}$ & 2.63${\scriptstyle \pm 1.22}$ & 0.977${\scriptstyle \pm 0.028}$ & 4.2 & 43.0 & 1.2${\scriptstyle \pm 2.5}$ & 1.94${\scriptstyle \pm 0.85}$ & 0.978${\scriptstyle \pm 0.027}$ & 3.8 \\
 & Final Proj. & 1.0 & 0.0${\scriptstyle \pm 0.2}$ & 52.60${\scriptstyle \pm 22.60}$ & \textbf{0.992${\scriptstyle \pm 0.018}$} & 10.4 & \textbf{100.0} & 49.9${\scriptstyle \pm 30.6}$ & 4.15${\scriptstyle \pm 4.45}$ & \textbf{0.992${\scriptstyle \pm 0.018}$} & 10.4 & \textbf{100.0} & 96.1${\scriptstyle \pm 5.8}$ & 0.14${\scriptstyle \pm 0.25}$ & \textbf{0.992${\scriptstyle \pm 0.018}$} & 10.8${\scriptstyle \pm 0.6}$ \\
 & DiRecT (Ours) & \textbf{100.0} & \textbf{95.2${\scriptstyle \pm 8.1}$} & \textbf{0.17${\scriptstyle \pm 0.37}$} & \textbf{0.992${\scriptstyle \pm 0.018}$} & 126.8 & \textbf{100.0} & \textbf{100.0${\scriptstyle \pm 0.1}$} & \textbf{0.00${\scriptstyle \pm 0.00}$} & \textbf{0.992${\scriptstyle \pm 0.018}$} & 126.8 & \textbf{100.0} & \textbf{100.0${\scriptstyle \pm 0.0}$} & \textbf{0.00${\scriptstyle \pm 0.00}$} & \textbf{0.992${\scriptstyle \pm 0.018}$} & 128.9 \\
\bottomrule
\end{tabular}
\end{adjustbox}
\end{table}

\begin{table}[ht]
\centering
\scriptsize
\setlength{\tabcolsep}{2.0pt}
\renewcommand{\arraystretch}{1.05}
\caption{MRMP results on \texttt{Conveyor}. SR: success rate (\%), CS: constraint safety (\%), Coll: mean collisions, DA: data adherence, Time: computation time (s). Standard deviations are over different initializations and boldface represents best among the methods actively enforcing constraints.}
\label{tab:conveyor results additional}
\begin{adjustbox}{max width=\linewidth}
\begin{tabular}{clccccccccccccccc}
\toprule
\textbf{\# Ag.} & \textbf{Method} & \multicolumn{5}{c}{\textbf{Low ($v_{\max}$=1.21)}} & \multicolumn{5}{c}{\textbf{Medium ($v_{\max}$=1.46)}} & \multicolumn{5}{c}{\textbf{High ($v_{\max}$=1.76)}} \\
\cmidrule(lr){3-7}\cmidrule(lr){8-12}\cmidrule(lr){13-17}
 &  & \textbf{SR}$\uparrow$ & \textbf{CS}$\uparrow$ & \textbf{Coll}$\downarrow$ & \textbf{DA}$\uparrow$ & \textbf{Time (s)}$\downarrow$ & \textbf{SR}$\uparrow$ & \textbf{CS}$\uparrow$ & \textbf{Coll}$\downarrow$ & \textbf{DA}$\uparrow$ & \textbf{Time (s)}$\downarrow$ & \textbf{SR}$\uparrow$ & \textbf{CS}$\uparrow$ & \textbf{Coll}$\downarrow$ & \textbf{DA}$\uparrow$ & \textbf{Time (s)}$\downarrow$ \\
\midrule
\multirow{6}{*}{4} & Diffuser & 1.0 & 0.0${\scriptstyle \pm 0.1}$ & 8.78${\scriptstyle \pm 4.23}$ & 0.960${\scriptstyle \pm 0.039}$ & 1.1${\scriptstyle \pm 0.3}$ & 1.0 & 0.0${\scriptstyle \pm 0.1}$ & 8.78${\scriptstyle \pm 4.23}$ & 0.960${\scriptstyle \pm 0.039}$ & 1.1${\scriptstyle \pm 0.3}$ & 1.0 & 0.0${\scriptstyle \pm 0.1}$ & 8.78${\scriptstyle \pm 4.23}$ & 0.960${\scriptstyle \pm 0.039}$ & 1.1${\scriptstyle \pm 0.3}$ \\
 & MMD-CBS & 100.0 & 100.0 & 0.00 & -- & 6.9${\scriptstyle \pm 0.8}$ & 100.0 & 100.0 & 0.00 & -- & 6.9${\scriptstyle \pm 0.8}$ & 100.0 & 100.0 & 0.00 & -- & 6.9${\scriptstyle \pm 0.8}$ \\
 & PCD-LB & \textbf{100.0} & 24.9${\scriptstyle \pm 14.1}$ & 0.94${\scriptstyle \pm 0.42}$ & 0.896${\scriptstyle \pm 0.077}$ & 2.0 & \textbf{100.0} & 39.6${\scriptstyle \pm 15.1}$ & 0.91${\scriptstyle \pm 0.39}$ & \textbf{0.968${\scriptstyle \pm 0.032}$} & 1.9 & \textbf{100.0} & 41.8${\scriptstyle \pm 15.3}$ & 0.96${\scriptstyle \pm 0.42}$ & \textbf{0.986${\scriptstyle \pm 0.022}$} & 1.8 \\
 & PCD-SHD & \textbf{100.0} & 28.6${\scriptstyle \pm 15.9}$ & 0.09${\scriptstyle \pm 0.08}$ & \textbf{0.902${\scriptstyle \pm 0.069}$} & 2.1 & \textbf{100.0} & 42.0${\scriptstyle \pm 17.5}$ & 0.04${\scriptstyle \pm 0.05}$ & 0.964${\scriptstyle \pm 0.030}$ & 1.9 & \textbf{100.0} & 44.7${\scriptstyle \pm 17.0}$ & 0.01${\scriptstyle \pm 0.02}$ & 0.980${\scriptstyle \pm 0.021}$ & 1.8 \\
 & Final Proj. & \textbf{100.0} & 82.7${\scriptstyle \pm 15.9}$ & 0.15${\scriptstyle \pm 0.19}$ & 0.864${\scriptstyle \pm 0.065}$ & 5.8 & \textbf{100.0} & 94.3${\scriptstyle \pm 12.3}$ & \textbf{0.00${\scriptstyle \pm 0.01}$} & 0.953${\scriptstyle \pm 0.032}$ & 3.6 & \textbf{100.0} & 94.5${\scriptstyle \pm 12.5}$ & \textbf{0.00${\scriptstyle \pm 0.00}$} & 0.980${\scriptstyle \pm 0.019}$ & 5.2 \\
 & DiRecT (Ours) & \textbf{100.0} & \textbf{96.6${\scriptstyle \pm 10.5}$} & \textbf{0.00${\scriptstyle \pm 0.00}$} & 0.835${\scriptstyle \pm 0.072}$ & 50.5 & \textbf{100.0} & \textbf{96.9${\scriptstyle \pm 10.3}$} & \textbf{0.00${\scriptstyle \pm 0.00}$} & 0.937${\scriptstyle \pm 0.043}$ & 27.7 & \textbf{100.0} & \textbf{96.9${\scriptstyle \pm 10.2}$} & \textbf{0.00${\scriptstyle \pm 0.00}$} & 0.981${\scriptstyle \pm 0.017}$ & 61.2 \\
\midrule
\multirow{6}{*}{8} & Diffuser & 0.0 & 0.0${\scriptstyle \pm 0.0}$ & 41.62${\scriptstyle \pm 9.14}$ & 0.960${\scriptstyle \pm 0.025}$ & 1.2${\scriptstyle \pm 0.3}$ & 0.0 & 0.0${\scriptstyle \pm 0.0}$ & 41.62${\scriptstyle \pm 9.14}$ & 0.960${\scriptstyle \pm 0.025}$ & 1.2${\scriptstyle \pm 0.3}$ & 0.0 & 0.0${\scriptstyle \pm 0.0}$ & 41.62${\scriptstyle \pm 9.14}$ & 0.960${\scriptstyle \pm 0.025}$ & 1.2${\scriptstyle \pm 0.3}$ \\
 & MMD-CBS & 100.0 & 100.0 & 0.00 & -- & 22.3${\scriptstyle \pm 2.3}$ & 100.0 & 100.0 & 0.00 & -- & 22.3${\scriptstyle \pm 2.3}$ & 100.0 & 100.0 & 0.00 & -- & 22.3${\scriptstyle \pm 2.3}$ \\
 & PCD-LB & 36.0 & 0.7${\scriptstyle \pm 1.4}$ & 7.87${\scriptstyle \pm 2.00}$ & \textbf{0.895${\scriptstyle \pm 0.052}$} & 2.4 & 48.0 & 1.1${\scriptstyle \pm 2.0}$ & 7.87${\scriptstyle \pm 1.94}$ & \textbf{0.969${\scriptstyle \pm 0.026}$} & 2.7 & 43.0 & 0.7${\scriptstyle \pm 1.2}$ & 8.32${\scriptstyle \pm 1.95}$ & \textbf{0.988${\scriptstyle \pm 0.014}$} & 2.3 \\
 & PCD-SHD & 75.0 & 1.8${\scriptstyle \pm 2.7}$ & 0.39${\scriptstyle \pm 0.13}$ & 0.859${\scriptstyle \pm 0.050}$ & 2.4 & 89.0 & 3.8${\scriptstyle \pm 4.0}$ & 0.15${\scriptstyle \pm 0.06}$ & 0.940${\scriptstyle \pm 0.030}$ & 2.6 & 94.0 & 4.8${\scriptstyle \pm 4.2}$ & 0.04${\scriptstyle \pm 0.02}$ & 0.964${\scriptstyle \pm 0.018}$ & 2.3 \\
 & Final Proj. & \textbf{100.0} & 50.8${\scriptstyle \pm 16.8}$ & 0.96${\scriptstyle \pm 0.70}$ & 0.845${\scriptstyle \pm 0.049}$ & 6.2 & \textbf{100.0} & 87.0${\scriptstyle \pm 16.5}$ & 0.02${\scriptstyle \pm 0.02}$ & 0.949${\scriptstyle \pm 0.023}$ & 4.2 & \textbf{100.0} & 88.5${\scriptstyle \pm 17.1}$ & \textbf{0.00${\scriptstyle \pm 0.01}$} & 0.976${\scriptstyle \pm 0.012}$ & 6.4 \\
 & DiRecT (Ours) & \textbf{100.0} & \textbf{92.7${\scriptstyle \pm 13.4}$} & \textbf{0.01${\scriptstyle \pm 0.01}$} & 0.820${\scriptstyle \pm 0.053}$ & 66.0 & \textbf{100.0} & \textbf{93.9${\scriptstyle \pm 13.4}$} & \textbf{0.00${\scriptstyle \pm 0.00}$} & 0.933${\scriptstyle \pm 0.031}$ & 35.3 & \textbf{100.0} & \textbf{93.8${\scriptstyle \pm 13.5}$} & \textbf{0.00${\scriptstyle \pm 0.00}$} & 0.976${\scriptstyle \pm 0.013}$ & 67.7 \\
\midrule
\multirow{6}{*}{12} & Diffuser & 0.0 & 0.0${\scriptstyle \pm 0.0}$ & 99.55${\scriptstyle \pm 11.98}$ & 0.961${\scriptstyle \pm 0.023}$ & 1.2${\scriptstyle \pm 0.4}$ & 0.0 & 0.0${\scriptstyle \pm 0.0}$ & 99.55${\scriptstyle \pm 11.98}$ & 0.961${\scriptstyle \pm 0.023}$ & 1.2${\scriptstyle \pm 0.4}$ & 0.0 & 0.0${\scriptstyle \pm 0.0}$ & 99.55${\scriptstyle \pm 11.98}$ & 0.961${\scriptstyle \pm 0.023}$ & 1.2${\scriptstyle \pm 0.4}$ \\
 & MMD-CBS & 0.0 & 79.2 & 1.33 & -- & 50.0${\scriptstyle \pm 7.9}$ & 0.0 & 79.2 & 1.33 & -- & 50.0${\scriptstyle \pm 7.9}$ & 0.0 & 79.2 & 1.33 & -- & 50.0${\scriptstyle \pm 7.9}$ \\
 & PCD-LB & 0.0 & 0.0${\scriptstyle \pm 0.0}$ & 27.50${\scriptstyle \pm 4.01}$ & \textbf{0.896${\scriptstyle \pm 0.038}$} & 3.1 & 0.0 & 0.0${\scriptstyle \pm 0.0}$ & 28.07${\scriptstyle \pm 3.59}$ & \textbf{0.973${\scriptstyle \pm 0.017}$} & 3.3 & 0.0 & 0.0${\scriptstyle \pm 0.0}$ & 29.75${\scriptstyle \pm 3.57}$ & \textbf{0.988${\scriptstyle \pm 0.012}$} & 2.8 \\
 & PCD-SHD & 1.0 & 0.0${\scriptstyle \pm 0.1}$ & 0.86${\scriptstyle \pm 0.20}$ & 0.829${\scriptstyle \pm 0.039}$ & 3.1 & 9.0 & 0.1${\scriptstyle \pm 0.3}$ & 0.34${\scriptstyle \pm 0.10}$ & 0.922${\scriptstyle \pm 0.023}$ & 3.4 & 10.0 & 0.1${\scriptstyle \pm 0.2}$ & 0.09${\scriptstyle \pm 0.04}$ & 0.952${\scriptstyle \pm 0.017}$ & 2.8 \\
 & Final Proj. & \textbf{100.0} & 24.7${\scriptstyle \pm 10.0}$ & 2.65${\scriptstyle \pm 1.13}$ & 0.837${\scriptstyle \pm 0.034}$ & 6.5 & \textbf{100.0} & 82.1${\scriptstyle \pm 13.9}$ & 0.09${\scriptstyle \pm 0.07}$ & 0.944${\scriptstyle \pm 0.016}$ & 5.3 & \textbf{100.0} & 87.1${\scriptstyle \pm 15.4}$ & \textbf{0.00${\scriptstyle \pm 0.01}$} & 0.972${\scriptstyle \pm 0.011}$ & 6.6 \\
 & DiRecT (Ours) & \textbf{100.0} & \textbf{89.9${\scriptstyle \pm 11.6}$} & \textbf{0.04${\scriptstyle \pm 0.04}$} & 0.823${\scriptstyle \pm 0.035}$ & 66.7 & \textbf{100.0} & \textbf{93.0${\scriptstyle \pm 11.8}$} & \textbf{0.00${\scriptstyle \pm 0.01}$} & 0.931${\scriptstyle \pm 0.021}$ & 53.5 & \textbf{100.0} & \textbf{93.3${\scriptstyle \pm 12.0}$} & \textbf{0.00${\scriptstyle \pm 0.00}$} & 0.973${\scriptstyle \pm 0.013}$ & 67.3 \\
\midrule
\multirow{6}{*}{16} & Diffuser & 0.0 & 0.0${\scriptstyle \pm 0.0}$ & 181.11${\scriptstyle \pm 19.59}$ & 0.962${\scriptstyle \pm 0.019}$ & 1.6${\scriptstyle \pm 0.2}$ & 0.0 & 0.0${\scriptstyle \pm 0.0}$ & 181.11${\scriptstyle \pm 19.59}$ & 0.962${\scriptstyle \pm 0.019}$ & 1.6${\scriptstyle \pm 0.2}$ & 0.0 & 0.0${\scriptstyle \pm 0.0}$ & 181.11${\scriptstyle \pm 19.59}$ & 0.962${\scriptstyle \pm 0.019}$ & 1.6${\scriptstyle \pm 0.2}$ \\
 & MMD-CBS & 0.0 & 6.2 & 17.33 & -- & 63.2${\scriptstyle \pm 2.3}$ & 0.0 & 6.2 & 17.33 & -- & 63.2${\scriptstyle \pm 2.3}$ & 0.0 & 6.2 & 17.33 & -- & 63.2${\scriptstyle \pm 2.3}$ \\
 & PCD-LB & 0.0 & 0.0${\scriptstyle \pm 0.0}$ & 64.74${\scriptstyle \pm 8.28}$ & \textbf{0.902${\scriptstyle \pm 0.036}$} & 3.9 & 0.0 & 0.0${\scriptstyle \pm 0.0}$ & 65.15${\scriptstyle \pm 7.61}$ & \textbf{0.976${\scriptstyle \pm 0.014}$} & 4.3 & 0.0 & 0.0${\scriptstyle \pm 0.0}$ & 68.06${\scriptstyle \pm 7.30}$ & \textbf{0.988${\scriptstyle \pm 0.010}$} & 3.6 \\
 & PCD-SHD & 0.0 & 0.0${\scriptstyle \pm 0.0}$ & 1.58${\scriptstyle \pm 0.29}$ & 0.809${\scriptstyle \pm 0.035}$ & 4.0 & 0.0 & 0.0${\scriptstyle \pm 0.0}$ & 0.64${\scriptstyle \pm 0.13}$ & 0.910${\scriptstyle \pm 0.024}$ & 4.4 & 1.0 & 0.0${\scriptstyle \pm 0.1}$ & 0.17${\scriptstyle \pm 0.06}$ & 0.943${\scriptstyle \pm 0.017}$ & 3.6 \\
 & Final Proj. & 98.0 & 8.4${\scriptstyle \pm 5.9}$ & 5.49${\scriptstyle \pm 2.23}$ & 0.822${\scriptstyle \pm 0.035}$ & 6.5 & \textbf{100.0} & 67.9${\scriptstyle \pm 18.8}$ & 0.26${\scriptstyle \pm 0.18}$ & 0.937${\scriptstyle \pm 0.017}$ & 9.8 & \textbf{100.0} & 79.8${\scriptstyle \pm 20.7}$ & 0.01${\scriptstyle \pm 0.02}$ & 0.967${\scriptstyle \pm 0.012}$ & 6.8 \\
 & DiRecT (Ours) & \textbf{100.0} & \textbf{81.4${\scriptstyle \pm 16.0}$} & \textbf{0.08${\scriptstyle \pm 0.06}$} & 0.820${\scriptstyle \pm 0.030}$ & 68.9 & \textbf{100.0} & \textbf{87.2${\scriptstyle \pm 17.2}$} & \textbf{0.00${\scriptstyle \pm 0.01}$} & 0.927${\scriptstyle \pm 0.020}$ & 99.1 & \textbf{100.0} & \textbf{87.3${\scriptstyle \pm 17.3}$} & \textbf{0.00${\scriptstyle \pm 0.00}$} & 0.970${\scriptstyle \pm 0.011}$ & 69.2 \\
\midrule
\multirow{6}{*}{20} & Diffuser & 0.0 & 0.0${\scriptstyle \pm 0.0}$ & 293.67${\scriptstyle \pm 20.60}$ & 0.961${\scriptstyle \pm 0.016}$ & 1.8${\scriptstyle \pm 0.1}$ & 0.0 & 0.0${\scriptstyle \pm 0.0}$ & 293.67${\scriptstyle \pm 20.60}$ & 0.961${\scriptstyle \pm 0.016}$ & 1.8${\scriptstyle \pm 0.1}$ & 0.0 & 0.0${\scriptstyle \pm 0.0}$ & 293.67${\scriptstyle \pm 20.60}$ & 0.961${\scriptstyle \pm 0.016}$ & 1.8${\scriptstyle \pm 0.1}$ \\
 & MMD-CBS & 0.0 & 0.0 & 35.83 & -- & 66.2${\scriptstyle \pm 5.7}$ & 0.0 & 0.0 & 35.83 & -- & 66.2${\scriptstyle \pm 5.7}$ & 0.0 & 0.0 & 35.83 & -- & 66.2${\scriptstyle \pm 5.7}$ \\
 & PCD-LB & 0.0 & 0.0${\scriptstyle \pm 0.0}$ & 121.49${\scriptstyle \pm 10.18}$ & \textbf{0.896${\scriptstyle \pm 0.031}$} & 4.4 & 0.0 & 0.0${\scriptstyle \pm 0.0}$ & 121.45${\scriptstyle \pm 9.39}$ & \textbf{0.973${\scriptstyle \pm 0.013}$} & 5.5 & 0.0 & 0.0${\scriptstyle \pm 0.0}$ & 125.76${\scriptstyle \pm 8.99}$ & \textbf{0.987${\scriptstyle \pm 0.009}$} & 4.2 \\
 & PCD-SHD & 0.0 & 0.0${\scriptstyle \pm 0.0}$ & 2.74${\scriptstyle \pm 0.42}$ & 0.778${\scriptstyle \pm 0.032}$ & 4.5 & 0.0 & 0.0${\scriptstyle \pm 0.0}$ & 1.16${\scriptstyle \pm 0.18}$ & 0.886${\scriptstyle \pm 0.024}$ & 5.5 & 0.0 & 0.0${\scriptstyle \pm 0.0}$ & 0.30${\scriptstyle \pm 0.08}$ & 0.926${\scriptstyle \pm 0.018}$ & 4.2 \\
 & Final Proj. & 65.0 & 1.2${\scriptstyle \pm 1.3}$ & 11.43${\scriptstyle \pm 3.99}$ & 0.806${\scriptstyle \pm 0.033}$ & 9.1 & \textbf{100.0} & 50.9${\scriptstyle \pm 15.8}$ & 0.74${\scriptstyle \pm 0.50}$ & 0.927${\scriptstyle \pm 0.017}$ & 12.5 & \textbf{100.0} & 75.0${\scriptstyle \pm 19.8}$ & 0.04${\scriptstyle \pm 0.06}$ & 0.960${\scriptstyle \pm 0.011}$ & 9.2 \\
 & DiRecT (Ours) & \textbf{100.0} & \textbf{72.4${\scriptstyle \pm 16.2}$} & \textbf{0.25${\scriptstyle \pm 0.18}$} & 0.815${\scriptstyle \pm 0.028}$ & 97.8 & \textbf{100.0} & \textbf{86.5${\scriptstyle \pm 17.0}$} & \textbf{0.01${\scriptstyle \pm 0.02}$} & 0.921${\scriptstyle \pm 0.020}$ & 125.4 & \textbf{100.0} & \textbf{87.3${\scriptstyle \pm 17.3}$} & \textbf{0.00${\scriptstyle \pm 0.00}$} & 0.965${\scriptstyle \pm 0.010}$ & 98.1 \\
\bottomrule
\end{tabular}
\end{adjustbox}
\end{table}

\clearpage
\begin{table}[H]
\centering
\scriptsize
\setlength{\tabcolsep}{2.0pt}
\renewcommand{\arraystretch}{1.05}
\caption{MRMP results on \texttt{Drop-Region}. SR: success rate (\%), CS: constraint safety (\%), Coll: mean collisions, DA: data adherence, Time: computation time (s). Standard deviations are over different initializations and boldface represents best among the methods actively enforcing constraints.}
\label{tab:dropregion results additional}
\begin{adjustbox}{max width=\linewidth}
\begin{tabular}{clccccccccccccccc}
\toprule
\textbf{\# Ag.} & \textbf{Method} & \multicolumn{5}{c}{\textbf{Low ($v_{\max}$=0.928)}} & \multicolumn{5}{c}{\textbf{Medium ($v_{\max}$=1.13)}} & \multicolumn{5}{c}{\textbf{High ($v_{\max}$=1.34)}} \\
\cmidrule(lr){3-7}\cmidrule(lr){8-12}\cmidrule(lr){13-17}
 &  & \textbf{SR}$\uparrow$ & \textbf{CS}$\uparrow$ & \textbf{Coll}$\downarrow$ & \textbf{DA}$\uparrow$ & \textbf{Time (s)}$\downarrow$ & \textbf{SR}$\uparrow$ & \textbf{CS}$\uparrow$ & \textbf{Coll}$\downarrow$ & \textbf{DA}$\uparrow$ & \textbf{Time (s)}$\downarrow$ & \textbf{SR}$\uparrow$ & \textbf{CS}$\uparrow$ & \textbf{Coll}$\downarrow$ & \textbf{DA}$\uparrow$ & \textbf{Time (s)}$\downarrow$ \\
\midrule
\multirow{6}{*}{4} & Diffuser & 40.0 & 1.1${\scriptstyle \pm 1.8}$ & 17.17${\scriptstyle \pm 3.75}$ & 0.991${\scriptstyle \pm 0.007}$ & 0.6${\scriptstyle \pm 0.4}$ & 40.0 & 1.1${\scriptstyle \pm 1.8}$ & 17.17${\scriptstyle \pm 3.75}$ & 0.991${\scriptstyle \pm 0.007}$ & 0.6${\scriptstyle \pm 0.4}$ & 40.0 & 1.1${\scriptstyle \pm 1.8}$ & 17.17${\scriptstyle \pm 3.75}$ & 0.991${\scriptstyle \pm 0.007}$ & 0.6${\scriptstyle \pm 0.4}$ \\
 & MMD-CBS & 100.0 & 100.0 & 0.00 & -- & 9.2${\scriptstyle \pm 1.0}$ & 100.0 & 100.0 & 0.00 & -- & 9.2${\scriptstyle \pm 1.0}$ & 100.0 & 100.0 & 0.00 & -- & 9.2${\scriptstyle \pm 1.0}$ \\
 & PCD-LB & \textbf{100.0} & 38.6${\scriptstyle \pm 10.2}$ & 2.74${\scriptstyle \pm 0.85}$ & \textbf{0.980${\scriptstyle \pm 0.017}$} & 1.9 & \textbf{100.0} & 36.6${\scriptstyle \pm 9.8}$ & 2.89${\scriptstyle \pm 0.86}$ & 0.986${\scriptstyle \pm 0.013}$ & 1.9 & \textbf{100.0} & 36.1${\scriptstyle \pm 9.0}$ & 2.92${\scriptstyle \pm 0.80}$ & 0.987${\scriptstyle \pm 0.010}$ & 1.8 \\
 & PCD-SHD & \textbf{100.0} & 88.0${\scriptstyle \pm 6.2}$ & 0.05${\scriptstyle \pm 0.03}$ & 0.965${\scriptstyle \pm 0.018}$ & 1.9 & \textbf{100.0} & 88.0${\scriptstyle \pm 5.1}$ & 0.04${\scriptstyle \pm 0.03}$ & 0.965${\scriptstyle \pm 0.017}$ & 1.7 & \textbf{100.0} & 86.9${\scriptstyle \pm 5.2}$ & 0.02${\scriptstyle \pm 0.02}$ & 0.965${\scriptstyle \pm 0.016}$ & 1.7 \\
 & Final Proj. & \textbf{100.0} & 84.6${\scriptstyle \pm 11.9}$ & 0.35${\scriptstyle \pm 0.39}$ & \textbf{0.980${\scriptstyle \pm 0.015}$} & 2.2 & \textbf{100.0} & 98.4${\scriptstyle \pm 2.1}$ & 0.03${\scriptstyle \pm 0.05}$ & \textbf{0.989${\scriptstyle \pm 0.007}$} & 2.1 & \textbf{100.0} & 99.8${\scriptstyle \pm 0.5}$ & \textbf{0.00${\scriptstyle \pm 0.01}$} & \textbf{0.990${\scriptstyle \pm 0.007}$} & 5.9 \\
 & DiRecT (Ours) & \textbf{100.0} & \textbf{100.0${\scriptstyle \pm 0.0}$} & \textbf{0.00${\scriptstyle \pm 0.00}$} & 0.971${\scriptstyle \pm 0.021}$ & 25.4 & \textbf{100.0} & \textbf{100.0${\scriptstyle \pm 0.0}$} & \textbf{0.00${\scriptstyle \pm 0.00}$} & 0.985${\scriptstyle \pm 0.011}$ & 24.2 & \textbf{100.0} & \textbf{100.0${\scriptstyle \pm 0.0}$} & \textbf{0.00${\scriptstyle \pm 0.00}$} & 0.989${\scriptstyle \pm 0.008}$ & 64.0 \\
\midrule
\multirow{6}{*}{8} & Diffuser & 0.0 & 0.0${\scriptstyle \pm 0.0}$ & 82.64${\scriptstyle \pm 9.51}$ & 0.991${\scriptstyle \pm 0.005}$ & 0.6${\scriptstyle \pm 0.4}$ & 0.0 & 0.0${\scriptstyle \pm 0.0}$ & 82.64${\scriptstyle \pm 9.51}$ & 0.991${\scriptstyle \pm 0.005}$ & 0.6${\scriptstyle \pm 0.4}$ & 0.0 & 0.0${\scriptstyle \pm 0.0}$ & 82.64${\scriptstyle \pm 9.51}$ & 0.991${\scriptstyle \pm 0.005}$ & 0.6${\scriptstyle \pm 0.4}$ \\
 & MMD-CBS & 100.0 & 100.0 & 0.00 & -- & 55.5${\scriptstyle \pm 1.9}$ & 100.0 & 100.0 & 0.00 & -- & 55.5${\scriptstyle \pm 1.9}$ & 100.0 & 100.0 & 0.00 & -- & 55.5${\scriptstyle \pm 1.9}$ \\
 & PCD-LB & 11.0 & 0.1${\scriptstyle \pm 0.3}$ & 29.94${\scriptstyle \pm 3.91}$ & \textbf{0.981${\scriptstyle \pm 0.014}$} & 3.5 & 7.0 & 0.1${\scriptstyle \pm 0.2}$ & 31.40${\scriptstyle \pm 3.80}$ & \textbf{0.988${\scriptstyle \pm 0.009}$} & 2.3 & 2.0 & 0.0${\scriptstyle \pm 0.1}$ & 32.13${\scriptstyle \pm 3.91}$ & \textbf{0.989${\scriptstyle \pm 0.008}$} & 2.1 \\
 & PCD-SHD & \textbf{100.0} & 26.3${\scriptstyle \pm 7.1}$ & 0.45${\scriptstyle \pm 0.12}$ & 0.883${\scriptstyle \pm 0.021}$ & 3.7 & \textbf{100.0} & 24.6${\scriptstyle \pm 6.5}$ & 0.34${\scriptstyle \pm 0.09}$ & 0.883${\scriptstyle \pm 0.020}$ & 2.3 & \textbf{100.0} & 23.1${\scriptstyle \pm 6.1}$ & 0.18${\scriptstyle \pm 0.05}$ & 0.880${\scriptstyle \pm 0.021}$ & 2.1 \\
 & Final Proj. & \textbf{100.0} & 59.7${\scriptstyle \pm 16.6}$ & 1.76${\scriptstyle \pm 1.10}$ & 0.971${\scriptstyle \pm 0.012}$ & 2.6 & \textbf{100.0} & 94.6${\scriptstyle \pm 4.1}$ & 0.14${\scriptstyle \pm 0.15}$ & 0.983${\scriptstyle \pm 0.007}$ & 2.6 & \textbf{100.0} & 97.9${\scriptstyle \pm 1.5}$ & 0.02${\scriptstyle \pm 0.02}$ & 0.985${\scriptstyle \pm 0.007}$ & 5.3 \\
 & DiRecT (Ours) & \textbf{100.0} & \textbf{100.0${\scriptstyle \pm 0.1}$} & \textbf{0.00${\scriptstyle \pm 0.01}$} & 0.965${\scriptstyle \pm 0.015}$ & 29.2 & \textbf{100.0} & \textbf{100.0${\scriptstyle \pm 0.1}$} & \textbf{0.00${\scriptstyle \pm 0.00}$} & 0.982${\scriptstyle \pm 0.009}$ & 28.9 & \textbf{100.0} & \textbf{100.0${\scriptstyle \pm 0.2}$} & \textbf{0.00${\scriptstyle \pm 0.00}$} & 0.986${\scriptstyle \pm 0.007}$ & 65.6 \\
\midrule
\multirow{6}{*}{12} & Diffuser & 0.0 & 0.0${\scriptstyle \pm 0.0}$ & 194.40${\scriptstyle \pm 14.29}$ & 0.991${\scriptstyle \pm 0.004}$ & 0.7${\scriptstyle \pm 0.5}$ & 0.0 & 0.0${\scriptstyle \pm 0.0}$ & 194.40${\scriptstyle \pm 14.29}$ & 0.991${\scriptstyle \pm 0.004}$ & 0.7${\scriptstyle \pm 0.5}$ & 0.0 & 0.0${\scriptstyle \pm 0.0}$ & 194.40${\scriptstyle \pm 14.29}$ & 0.991${\scriptstyle \pm 0.004}$ & 0.7${\scriptstyle \pm 0.5}$ \\
 & MMD-CBS & 0.0 & 8.3 & 12.00 & -- & 67.8${\scriptstyle \pm 1.0}$ & 0.0 & 8.3 & 12.00 & -- & 67.8${\scriptstyle \pm 1.0}$ & 0.0 & 8.3 & 12.00 & -- & 67.8${\scriptstyle \pm 1.0}$ \\
 & PCD-LB & 0.0 & 0.0${\scriptstyle \pm 0.0}$ & 94.96${\scriptstyle \pm 9.00}$ & \textbf{0.982${\scriptstyle \pm 0.010}$} & 3.0 & 0.0 & 0.0${\scriptstyle \pm 0.0}$ & 98.91${\scriptstyle \pm 8.58}$ & \textbf{0.989${\scriptstyle \pm 0.006}$} & 2.7 & 0.0 & 0.0${\scriptstyle \pm 0.0}$ & 100.86${\scriptstyle \pm 8.58}$ & \textbf{0.990${\scriptstyle \pm 0.005}$} & 2.6 \\
 & PCD-SHD & 71.0 & 1.1${\scriptstyle \pm 1.0}$ & 1.45${\scriptstyle \pm 0.19}$ & 0.787${\scriptstyle \pm 0.025}$ & 4.5 & 60.0 & 0.8${\scriptstyle \pm 0.9}$ & 1.08${\scriptstyle \pm 0.18}$ & 0.787${\scriptstyle \pm 0.023}$ & 2.8 & 50.0 & 0.6${\scriptstyle \pm 0.8}$ & 0.59${\scriptstyle \pm 0.09}$ & 0.782${\scriptstyle \pm 0.023}$ & 2.7 \\
 & Final Proj. & \textbf{100.0} & 33.4${\scriptstyle \pm 14.3}$ & 4.80${\scriptstyle \pm 2.24}$ & 0.952${\scriptstyle \pm 0.012}$ & 4.1 & \textbf{100.0} & 83.6${\scriptstyle \pm 5.6}$ & 0.45${\scriptstyle \pm 0.28}$ & 0.968${\scriptstyle \pm 0.008}$ & 4.1 & \textbf{100.0} & 89.6${\scriptstyle \pm 2.8}$ & 0.12${\scriptstyle \pm 0.04}$ & 0.973${\scriptstyle \pm 0.007}$ & 8.2 \\
 & DiRecT (Ours) & \textbf{100.0} & \textbf{99.9${\scriptstyle \pm 0.2}$} & \textbf{0.00${\scriptstyle \pm 0.01}$} & 0.955${\scriptstyle \pm 0.016}$ & 47.7 & \textbf{100.0} & \textbf{99.8${\scriptstyle \pm 0.3}$} & \textbf{0.00${\scriptstyle \pm 0.01}$} & 0.975${\scriptstyle \pm 0.008}$ & 47.4 & \textbf{100.0} & \textbf{99.6${\scriptstyle \pm 0.6}$} & \textbf{0.00${\scriptstyle \pm 0.01}$} & 0.981${\scriptstyle \pm 0.006}$ & 67.0 \\
\midrule
\multirow{6}{*}{16} & Diffuser & 0.0 & 0.0${\scriptstyle \pm 0.0}$ & 352.68${\scriptstyle \pm 19.38}$ & 0.991${\scriptstyle \pm 0.003}$ & 0.8${\scriptstyle \pm 0.5}$ & 0.0 & 0.0${\scriptstyle \pm 0.0}$ & 352.68${\scriptstyle \pm 19.38}$ & 0.991${\scriptstyle \pm 0.003}$ & 0.8${\scriptstyle \pm 0.5}$ & 0.0 & 0.0${\scriptstyle \pm 0.0}$ & 352.68${\scriptstyle \pm 19.38}$ & 0.991${\scriptstyle \pm 0.003}$ & 0.8${\scriptstyle \pm 0.5}$ \\
 & MMD-CBS & 0.0 & 12.5 & 34.00 & -- & 77.2${\scriptstyle \pm 3.7}$ & 0.0 & 12.5 & 34.00 & -- & 77.2${\scriptstyle \pm 3.7}$ & 0.0 & 12.5 & 34.00 & -- & 77.2${\scriptstyle \pm 3.7}$ \\
 & PCD-LB & 0.0 & 0.0${\scriptstyle \pm 0.0}$ & 196.78${\scriptstyle \pm 13.86}$ & \textbf{0.983${\scriptstyle \pm 0.008}$} & 4.1 & 0.0 & 0.0${\scriptstyle \pm 0.0}$ & 204.24${\scriptstyle \pm 12.63}$ & \textbf{0.989${\scriptstyle \pm 0.005}$} & 3.5 & 0.0 & 0.0${\scriptstyle \pm 0.0}$ & 208.07${\scriptstyle \pm 12.00}$ & \textbf{0.991${\scriptstyle \pm 0.004}$} & 3.3 \\
 & PCD-SHD & 1.0 & 0.0${\scriptstyle \pm 0.1}$ & 3.12${\scriptstyle \pm 0.36}$ & 0.710${\scriptstyle \pm 0.024}$ & 5.4 & 1.0 & 0.0${\scriptstyle \pm 0.1}$ & 2.24${\scriptstyle \pm 0.25}$ & 0.709${\scriptstyle \pm 0.021}$ & 3.5 & 0.0 & 0.0${\scriptstyle \pm 0.0}$ & 1.27${\scriptstyle \pm 0.13}$ & 0.703${\scriptstyle \pm 0.021}$ & 3.3 \\
 & Final Proj. & \textbf{100.0} & 14.9${\scriptstyle \pm 8.8}$ & 9.88${\scriptstyle \pm 3.38}$ & 0.919${\scriptstyle \pm 0.011}$ & 6.0 & \textbf{100.0} & 61.8${\scriptstyle \pm 7.4}$ & 1.25${\scriptstyle \pm 0.53}$ & 0.940${\scriptstyle \pm 0.009}$ & 6.0 & \textbf{100.0} & 70.6${\scriptstyle \pm 5.3}$ & 0.42${\scriptstyle \pm 0.10}$ & 0.950${\scriptstyle \pm 0.008}$ & 6.8 \\
 & DiRecT (Ours) & \textbf{100.0} & \textbf{99.5${\scriptstyle \pm 0.7}$} & \textbf{0.01${\scriptstyle \pm 0.02}$} & 0.938${\scriptstyle \pm 0.012}$ & 71.2 & \textbf{100.0} & \textbf{99.1${\scriptstyle \pm 0.9}$} & \textbf{0.01${\scriptstyle \pm 0.01}$} & 0.961${\scriptstyle \pm 0.008}$ & 70.9 & \textbf{100.0} & \textbf{98.4${\scriptstyle \pm 1.1}$} & \textbf{0.02${\scriptstyle \pm 0.01}$} & 0.971${\scriptstyle \pm 0.006}$ & 72.5 \\
\midrule
\multirow{6}{*}{20} & Diffuser & 0.0 & 0.0${\scriptstyle \pm 0.0}$ & 568.27${\scriptstyle \pm 23.79}$ & 0.992${\scriptstyle \pm 0.003}$ & 1.0${\scriptstyle \pm 0.5}$ & 0.0 & 0.0${\scriptstyle \pm 0.0}$ & 568.27${\scriptstyle \pm 23.79}$ & 0.992${\scriptstyle \pm 0.003}$ & 1.0${\scriptstyle \pm 0.5}$ & 0.0 & 0.0${\scriptstyle \pm 0.0}$ & 568.27${\scriptstyle \pm 23.79}$ & 0.992${\scriptstyle \pm 0.003}$ & 1.0${\scriptstyle \pm 0.5}$ \\
 & MMD-CBS & 0.0 & 0.0 & 48.00 & -- & 99.0${\scriptstyle \pm 23.3}$ & 0.0 & 0.0 & 48.00 & -- & 99.0${\scriptstyle \pm 23.3}$ & 0.0 & 0.0 & 48.00 & -- & 99.0${\scriptstyle \pm 23.3}$ \\
 & PCD-LB & 0.0 & 0.0${\scriptstyle \pm 0.0}$ & 342.21${\scriptstyle \pm 16.54}$ & \textbf{0.982${\scriptstyle \pm 0.007}$} & 4.1 & 0.0 & 0.0${\scriptstyle \pm 0.0}$ & 355.78${\scriptstyle \pm 15.43}$ & \textbf{0.989${\scriptstyle \pm 0.004}$} & 4.0 & 0.0 & 0.0${\scriptstyle \pm 0.0}$ & 361.93${\scriptstyle \pm 15.70}$ & \textbf{0.990${\scriptstyle \pm 0.003}$} & 3.8 \\
 & PCD-SHD & 0.0 & 0.0${\scriptstyle \pm 0.0}$ & 5.38${\scriptstyle \pm 0.59}$ & 0.646${\scriptstyle \pm 0.015}$ & 6.7 & 0.0 & 0.0${\scriptstyle \pm 0.0}$ & 3.82${\scriptstyle \pm 0.35}$ & 0.646${\scriptstyle \pm 0.015}$ & 4.0 & 0.0 & 0.0${\scriptstyle \pm 0.0}$ & 2.20${\scriptstyle \pm 0.20}$ & 0.639${\scriptstyle \pm 0.014}$ & 3.8 \\
 & Final Proj. & 88.0 & 3.1${\scriptstyle \pm 2.6}$ & 19.12${\scriptstyle \pm 4.73}$ & 0.866${\scriptstyle \pm 0.017}$ & 8.4 & \textbf{100.0} & 32.1${\scriptstyle \pm 5.2}$ & 2.92${\scriptstyle \pm 0.69}$ & 0.892${\scriptstyle \pm 0.014}$ & 8.3 & \textbf{100.0} & 42.0${\scriptstyle \pm 5.1}$ & 1.14${\scriptstyle \pm 0.18}$ & 0.908${\scriptstyle \pm 0.013}$ & 8.4 \\
 & DiRecT (Ours) & \textbf{100.0} & \textbf{97.7${\scriptstyle \pm 1.3}$} & \textbf{0.06${\scriptstyle \pm 0.04}$} & 0.899${\scriptstyle \pm 0.018}$ & 100.5 & \textbf{100.0} & \textbf{96.3${\scriptstyle \pm 1.6}$} & \textbf{0.06${\scriptstyle \pm 0.03}$} & 0.933${\scriptstyle \pm 0.010}$ & 99.7 & \textbf{100.0} & \textbf{94.1${\scriptstyle \pm 2.4}$} & \textbf{0.06${\scriptstyle \pm 0.03}$} & 0.947${\scriptstyle \pm 0.009}$ & 103.5 \\
\bottomrule
\end{tabular}
\end{adjustbox}
\end{table}